\newcommand{\calS}{\ensuremath{\mathcal{S}}}
\newcommand{\balpha}{\ensuremath{\boldsymbol \alpha}}
\newcommand{\bbeta}{\ensuremath{\boldsymbol \beta}}
\newcommand{\bmu}{\ensuremath{\boldsymbol \mu}}
\newcommand{\bnu}{\ensuremath{\boldsymbol \nu}}
\newcommand{\btheta}{\ensuremath{\boldsymbol \theta}}
\newcommand{\B}{\ensuremath{\text{Beta}}}
\newcommand{\Dir}{\ensuremath{\text{Dir}}}
\newcommand{\brho}{\ensuremath{\boldsymbol \rho}}
\newcommand{\argmax}{\mathop{\mathrm{arg\,max}{}}}
\newtheorem{theorem}{Theorem}
\newtheorem{proposition}[theorem]{Proposition}
\definecolor{DSgray}{cmyk}{0,1,0,0}
\title{Bayesian Decision Process for Cost-Efficient Dynamic Ranking via Crowdsourcing}
\begin{document}
\bibliographystyle{plainnat}

\author{
        Xi Chen\thanks{The authors are listed in alphabetical order}  \; and Kevin Jiao \\
        Stern School of Business \\
        New York University\\
        New York, New York, 10012, USA \\
        xchen3@stern.nyu.edu; jjiao@stern.nyu.edu \\
         AND \\
        Qihang Lin  \\
        Tippie College of Business\\
       University of Iowa\\
       Iowa City, Iowa, 52242, USA \\
        qihang-lin@uiowa.edu
       }

\date{}
\maketitle

\begin{abstract}
Rank aggregation based on pairwise comparisons over a set of items has a wide range of applications.
Although considerable research has been devoted to the development of rank aggregation algorithms, one basic question is how to efficiently collect a large amount of high-quality pairwise comparisons for the ranking purpose. Because of the advent of many crowdsourcing services, a crowd of workers are often hired to conduct pairwise comparisons with a small monetary reward for each pair they compare. Since different workers have different levels of reliability and different pairs have different levels of ambiguity, it is desirable to wisely allocate the limited budget for comparisons among the pairs of items and workers so that the global ranking can be accurately inferred from the comparison results. To this end, we model the active sampling problem in \emph{crowdsourced ranking} as a Bayesian Markov decision process, which dynamically selects item pairs and workers to improve the ranking accuracy under a budget constraint. We further develop a computationally efficient sampling policy based on knowledge gradient as well as a moment matching technique for posterior approximation. Experimental evaluations on both synthetic and real data show that the proposed policy achieves high ranking accuracy with a lower labeling cost.


\end{abstract}


\section{Introduction}

Inferring the ranking over a set of items, such as documents, images, movies, or URL links, is an important learning problem with many applications in areas like web search, recommendation systems, online games, etc. An interesting problem related to rank inference is estimating a score for each item based on a certain criterion that the items can be ranked, such as the score of relevance or the score of quality. Typically, both the ranking and the scores of items can be inferred from a collection of high-quality labels on the items.
There are mainly two different types of labels. The label of the first type is associated with each individual item in order to characterize the property of the item itself, for example,  a binary or an ordinal score (e.g., 5-point grade). The label of the second type is instead associated with a subset of items that reveal their relative properties, for example, a partial ranking that covers only this subset. Labels of both types can be obtained by soliciting the knowledge of human workers, depending on whether the worker is employed to evaluate a single item or to compare a subset of items according to a given criterion.
In practice, a binary score usually cannot fully distinguish all items and ordinal scores from different workers are often inconsistent due to the difference in their understandings of the grades in the ordinal scoring scheme. Therefore, the second type of labels has been more widely adopted, which can effectively reduce the impact of misunderstanding among workers and is more appropriate for ranking fine-grained items with a large number of graduations (e.g., in our real data experiment on accessing reading difficulty of an article into one of twelve American grade levels). Moreover, empirical evidences show that the ranking accuracy of a human worker typically decreases when he or she has to compare many items at a time. For this reason, in this paper, we only consider the
relative comparisons over \emph{pairs} of items and the label from a human worker indicates which item is preferred to the other. 

The traditional approach of conducting pairwise comparisons by a small group of experts is usually time consuming and expensive. It fails to meet the growing need of labeled data for ranking tasks. Because of the advent of online crowdsourcing services \citep{Howe:06} such as  Amazon Mechanical Turk, a more efficient and more economic approach has emerged: a large amount of unlabeled pairs of items are posted to a crowdsourcing platform, where a crowd of workers are hired to perform pairwise comparisons and provide labels of the assigned pairs. Given the labels from crowd workers, we can infer a global ranking over all items. We refer to the process of collecting pairwise labels and ranking items as \emph{crowdsourced ranking}.

Despite its availability and scalability, challenges remain in crowdsourced ranking. A certain amount of monetary reward is paid to a worker for each pair of items he or she compares while there is usually only a fixed amount of budget available, limiting the total number of pairwise labels we can collect. Hence, there is a need for a budget-efficient decision process for allocating the budget over item pairs and workers.
In particular, on crowdsourcing platforms, 
there are unreliable workers who submit their answers quickly but carelessly in order to obtain more monetary reward with less effort. Hence, the comparison results provided by crowd workers often contain non-negligible noise. As a remedy, multiple workers are hired to compare the same pair of items independently in the hope that the correct ranking can be recovered, and that the unreliable workers can be identified by comparing their answers with the rest of workers. However, each pairwise comparison will incur a pre-specified monetary cost. Without a careful control, such a repetitive labeling strategy often results in too many labels on the same pair by different workers, leading to a high cost. Furthermore, because of the diversity of their backgrounds and expertise, workers do not always agree with each other in the results of pairwise comparisons, especially when the two items in comparison are competitive to each other. We refer to such a competitive pair as an \emph{ambiguous pair} since the ordering of them is more difficult to be determined. Presumably, a greater budget should be spent on ambiguous pairs, but identifying ambiguous pairs under the budget constraint itself is a challenging problem, which requires some effective learning scheme. Given the trade-off between the labeling cost and the quality of ranking results, there are two fundamental challenges in crowdsourced ranking:
\begin{enumerate}
  \item Given the inconsistent pairwise labels from crowd workers with different reliability, how to aggregate these labels into a global ranking over items.
  \item With both unreliable workers and ambiguous pairs initially unidentified, how to incorporate a learning scheme with an efficient sampling procedure (over both pairs of items and workers) under the budget constraint to achieve the highest ranking accuracy.
\end{enumerate}

To address these challenges, we need to first model the reliability of workers and the ambiguity of item pairs and analyze how they influence the pairwise label. To this end, we adopt a combination of the Bradley-Terry-Luce ranking model \citep{Bradley:52,Luce:59} for modeling the comparison results and the Dawid-Skene model \citep{Dawid:79} for workers' reliability. The reason why we adopt the Bradley-Terry-Luce model
is that learning such a model will not only provide a ranking over items but also give a score to each item, which can be useful in many applications (e.g., providing player's rating in chess games).
We measure the quality of the ranking inferred from the collected labels using the \emph{Kendall's tau rank correlation coefficient} (Kendall's tau for short) with respect to the underlying true ranking.

Under such a model and a quality measure, we propose a dynamic sampling and ranking procedure which addresses the aforementioned two challenges in a unified framework. In particular, we first introduce the priors for items' latent true scores and workers' reliability and formulate the crowdsourced ranking problem into a finite-horizon Bayesian \emph{Markov decision problem} (MDP), whose state variables correspond to the posterior distributions given the observed labels. Here, the number of stages is determined by the total budget, i.e., the total number of pairs that can be requested for labeling. As the budget level increases, the size of the state space grows at an exponential rate, which makes the exact solving of such a MDP problem intractable. To address the computational difficulty, we propose an efficient sampling strategy called \emph{approximated knowledge gradient} (AKG) policy based on the  popular knowledge gradient policy \citep{Powell:11a,Frazier:thesis,Frazier:08,Ryzhov:2012}. The proposed policy dynamically chooses the next pair of items and the worker that together lead to a maximum expected improvement in Kendall's tau rank correlation coefficient. Finally, to determine the global ranking that maximizes the expected Kendall's tau, one needs to solve a maximum linear ordering problem \citep{grotschel1984cpa}, which is a NP-hard problem (and in fact, APX-hard (approximable-hard) \citep{Mishra04LOP}). To address this challenge, we propose a moment matching technique to approximate the posteriors in parametric forms so that the linear ordering problem under the approximated posterior can be easily solved by a simple sorting procedure.

The rest of the paper is organized as follows. In Section \ref{sec:related}, we review the related literature. In Section \ref{sec:homo}, we introduce the model and the proposed policy under the simplified case where all workers are homogeneous and perfectly reliable. In Section \ref{sec:hetero}, we extend our policy to the case where the crowd workers have heterogeneous reliability. In Section \ref{sec:exp}, we present numerical results on both simulated and real datasets, followed by conclusions in Section \ref{sec:conclusion}. The detailed proofs and derivations are provided in the appendix.

\section{Related Work}
\label{sec:related}

The dataset of partial rankings over items can be generated from a variety of sources including crowdsourcing services~\citep{Shah15b}, online competition games (e.g., Microsoft's TrueSkill system \citep{Ralf:07}), and online users' activities such as browsing, clicking and transactions that reveal certain preferences. Learning a global ranking of a large set of items by aggregating a collection of partial rankings/preferences has been an active research area for the past ten years~(see, e.g., \cite{Gleich:2011, nos12, Yi:13, Shah:15a,Shah15b,Rajkumar:2014,JMLR:v15:lu14a,JMLR:v15:volkovs14a}).
However, most work on rank aggregation considers a static estimation problem --- inferring a global ranking based on a pre-existing dataset. The problem we consider here is related to but significantly different from these works because we model crowdsourced ranking as a dynamic procedure where the inference of ranking and collection of data proceed concurrently and influence each other.


The crowdsourced ranking problem we considered has a close connection with the dynamic sorting problem using noisy pairwise comparisons, which has been studied by several authors~\citep{Ailon:2012,Braverman:2008,Radinsky:2011,Wauthier13,Jamieson:11}. However, these papers assume the noise of pairwise comparison results has the same distribution for all pairs, which is not reasonable in crowdsourced ranking because workers usually rank significantly different items more correctly than they do for similar items. The approaches proposed by~\cite{Pfeiffer:12} and~\cite{Qian:2015} assume that the labeling noise depends on the latent qualities or features of the items. However, their approaches do not model the reliability of workers in the decision process. In contrast, our approach allows a label's noise to depend not only on the items themselves, but also on the reliability of the worker who provides the label. The ranking model adopted in this paper, which combines the Bradley-Terry-Luce model and the Dawid-Skene model, was originally proposed in \citep{Chen:13}, which also considers a similar problem of Bayesian statistical decision-making for crowdsourced ranking. However, the sampling strategy developed in \cite{Chen:13}, which prioritizes the pair of items and the worker with the highest information gain, is a simple heuristic without a well-defined objective function to be optimized. In contrast, our work chooses the expected Kendall's tau as the objective function to maximize, which guides the development of the knowledge gradient policy.

In addition to crowdsourced ranking, the problem of crowdsourced categorical labeling/classification has been extensively studied in the past five years. Most work aims at solving a static problem, which infers the categorical labels and workers' reliability based on a static problem (see, e.g., \cite{Dawid:79, Vikas:10, Peter:10, Whitehill:09, QiangLiu:12,Gao:13,Zhang:14a}). Recently, some research has been devoted to dynamic sampling in crowdsourced classification \citep{Karger:13, Oh:12,  Thore:12, Rudin:12, Ece:12, Ho:13, Chen:15}. In particular, both \cite{Ece:12} and \cite{Chen:15} utilized the Markov decision process to model the budget allocation (i.e., sampling over items and workers) process. Since we also adopt a Bayesian Markov decision process with a variant of knowledge gradient policy, the spirit of our method is similar to that in \cite{Chen:15}. However, since the statistical model for a ranking problem is fundamentally different from that of a classification problem, the Markov decision process in this paper is significantly different from the one introduced by \cite{Chen:15} in many aspects such as the objective function, stage-wise rewards, transition probabilities, optimal policy, etc. 
For example, the policy by \cite{Chen:15} is designed to maximize the expected classification accuracy while our policy aims at maximizing the expected Kendall's tau with respective to the true ranking. In fact, even for a static problem with a given set of collected data, inferring the ranking with the maximum expected Kendall's tau is equivalent to a NP-hard maximum linear ordering problem while classifying items with a maximum expected accuracy can be done in closed-form by Bayesian decision rule. In this paper, we avoid this computational challenge by exploiting the structure of the expected Kendall's tau and approximating the posteriors using moment matching. We also note that, although one can view the problem of ranking $K$ items as a problem of classifying $K(K-1)/2$ pairs (each pair is treated as an item in \cite{Chen:15}), 
such an approach increases the size of the problem and ignores the dependency between pairwise labels.

In addition, it is worth to note that the problem we consider here is different from the typical tasks in machine-learned ranking or learning to rank~\citep{Liu:09,Acharyya:13} where some feature information is available for each item and training data is used to calibrate some statistical models for ranking new items. In contrast to these problems, the feature information is not necessary in our crowdsourced ranking problem. Moreover, besides being applied to ranking items directly, our methods can be utilized to collect training labels for learning to rank problems. According to the type of training data utilized, statistical ranking methods can be classified into three categories~\citep{Liu:09,Acharyya:13}: pointwise method, pairwise method and listwise method. The pointwise methods~\citep{export:68128,Cooper:1992,Crammer01prankingwith} learn a ranking model based on the data of scores or ratings of items. The pairwise methods~\citep{Freund:2003,Burges:2005,Zheng:07,Cao:2006} and the listwise methods~\citep{Xu:2007,export:70428,export:63585,Kuo:2009} learn a ranking model using pairwise comparison results or partial rankings over a subset of items. For the pairwise or listwise methods, the crowdsourced ranking technique we proposed can be used as an upstream procedure that provides high-quality pairwise/listwise comparison data which helps increase the accuracy of the models in the aforementioned papers.


\section{Crowdsourced Ranking by Homogeneous  Workers}
\label{sec:homo}

In this section, we first consider a simplified setting where workers are homogeneous (we will clarify the meaning of ``homogeneous workers'' shortly). In Section \ref{sec:hetero}, we further extend the developed method for homogeneous workers to heterogeneous workers with different levels of reliability.

\subsection{Model Setup}


We assume that there are $K$ items (denoted by $\{1, \ldots, K\}$) to be ranked and each item $i$ has an \emph{unknown} latent score $\theta_i> 0$ for $i=1,2,\dots,K$.  Let $\btheta=(\theta_1,\theta_2,\dots,\theta_K)^T$, where each latent score $\theta_i$ models the intensity of preference to item $i$ under some criterion.  A \emph{ranking} over $K$ items $\{1,2,\dots,K\}$ is a permutation/one-to-one mapping $\pi:\{1,2,\dots,K\} \to \{1,2,\dots,K\}$ and  $\pi(i)$ is the \emph{rank} of item $i$ under $\pi$. We follow the convention that  $\theta_i>\theta_j$ means item $i$ is preferred to item $j$ and thus item $i$ should have a higher rank than item $j$. Therefore, the underlying \emph{true ranking} $\pi^*$ over $K$ items is determined by the ranking of their latent scores, i.e.,
\begin{equation}\label{eq:true_rank}
 \pi^*(i) > \pi^*(j)\;\;\;\text{ if and only if }\;\;\;\theta_i > \theta_j.
\end{equation}
We note that the latent scores naturally provide a characterization of \emph{ambiguity for a pair of items}:  when the values of $\theta_i$ and $\theta_j$ are closer, the pair of item $i$ and $j$ is more ambiguous in the sense that the true ordering of them is less obvious.

The way we explore the ranking of $\theta_i$'s is through the collection of workers' preferences on different pairs of items. Specifically, we will present only two items at a time to a worker, who will be asked to compare these two items according to the given ranking criterion. Each worker will not be asked to compare the same pair more than once. The results of comparisons will be collected over time and become our historical data, based on which, our task is to infer the true ranking $\pi^*$.

In this section, we consider a basic setup where the crowd workers are assumed to be \emph{homogeneous}, meaning that the probabilistic outcomes of their comparisons are only affected by the ambiguities of pairs.
More specifically, suppose a worker is randomly selected from the crowd to compare a pair of items $i$ and $j$ with $i<j$ and the comparison result is denoted by a random variable $Y_{ij}$:
\begin{eqnarray}
\label{Yij}
Y_{ij}=\left\{
\begin{array}{rl}
1&\text{ if item }i\text{ is preferred to item }j\text{ by the randomly selected worker}\\
-1&\text{ if item }j\text{ is preferred to item }i\text{ by the randomly selected worker}.
\end{array}
\right.
\end{eqnarray}
The setting of homogeneous workers means the probability distribution of $Y_{ij}$ takes the following form
\begin{equation}
\label{probij}
\text{Pr}(Y_{ij}=1) = \frac{\theta_i}{\theta_i+\theta_j}\quad \text{ and }\quad  \text{Pr}(Y_{ij}=-1) = \frac{\theta_j}{\theta_i+\theta_j}\quad \text{ for }i,j=1,2,\dots,K.
\end{equation}

The probabilistic model we used in \eqref{probij} is the well-known Bradley-Terry-Luce (BTL) model~\citep{Bradley:52,Luce:59}. We choose this model for the distribution of $Y_{ij}$ because it admits a simple structure and well fits our framework of dynamic sampling. Furthermore, our method developed for the  BTL model can be easily extended to the case of heterogeneous workers which will be studied in Section \ref{sec:hetero}.

It is worthwhile to mention that other comparison models can potentially be implemented here. Considering a simplified version of the Thurstone model \citep{Thurstone:27} in which each object $i$ has a score following $N(\theta_i,1)$, then we have
$$
\text{Pr}(Y_{ij}=1) = \Phi\left( \frac{\theta_i - \theta_j}{\sqrt{2}} \right) \quad \text{ and }\quad  \text{Pr}(Y_{ij}=-1) = \Phi\left( \frac{\theta_j - \theta_i}{\sqrt{2}} \right).
$$
The problem can still be formulated using a Bayesian decision process framework. However, there are several reasons why the BTL model is favored in this paper. First of all, moment matching under the Thurstone model does not have closed-form solutions and hence we must rely on numerical scheme to compute the first and second moments of the posterior. Second, using moment matching approach, because the posterior is an $n$-dimensional multivariate Gaussian distribution, we need to update $n(n+1)/2$ parameters (the number of mean parameters plus the number of off-diagonal elements of the covariance matrix) during each iteration of the algorithm whereas with Dirichlet posterior there are only $n$ parameters. Last but not least, with Thurstone model the ranking is no longer a simple sorting of parameters, which is a feature of the BTL model as shown in Theorem \ref{thmLOP}.

Since each worker can compare the same pair at most once, we assume the size of the crowd workers is large enough so that the distribution of $Y_{ij}$ stays the same after sampling workers without replacement. Note that we can assume $\sum_{i=1}^{K} \theta_i = 1$ without loss of generality since the distribution of $Y_{ij}$ in \eqref{probij} remains unchanged if we multiply each $\theta_i$ by the same positive constant.
The probability $\frac{\theta_i}{\theta_i+\theta_j}$ in \eqref{probij} can also be interpreted as the percentage of workers in the crowd who prefer item $i$ to item $j$.

Since the probabilistic model \eqref{probij} does not incorporate or reveal the quality of each worker in the comparison result, in the subsequent study of this section, we only need to focus on how to dynamically select pairs of items to compare. The worker will be selected randomly from the crowd. A dynamic choice over workers will be incorporated into our method in Section \ref{sec:hetero} where the performance of workers is modeled heterogeneously.


\subsection{Bayesian Decision Process}
\label{sec:MDP}
In a typical crowdsourcing marketplace, a monetary cost must be paid to a worker every time this worker completes a task such as comparing a pair of items. We assume the cost for each comparison is one unit and the total budget available is $T$ units so that at most $T$ pairs (repetition allowed) can be compared in total. Since comparing different pairs will generate different historical data and reveal different information about the true ranking,  it is critical to dynamically determine the right sequence of pairs to compare in order to maximize the final ranking accuracy, especially when the budget $T$ is small.

In the traditional offline setting, one needs to determine $T$ pairs at a time beforehand and request the comparisons on those pairs in a batch. The potential problem of such a static approach is that the budget $T$ is not spent in an efficient way to discover the true ranking. In fact, the distribution in \eqref{probij} implies that, when two items have similar latent scores, workers will provide highly inconsistent preferences and it is hard to reach an agreement on such a pair. In this case, the comparison results will be very noisy and one needs to spend more budget on this pair in order to rank them correctly. In contrast, when two items have significantly different latent scores, workers will provide consistent answers so that the additional information we can obtain is little from repeatedly comparing the same two items. In this case, one might want to reduce the budget on such a pair. Unfortunately, without any prior knowledge of the latent scores, it is impossible to decide how much budget should be spent on each pair before observing some comparison results.


In order to efficiently allocate the limited total budget over all pairs, we consider a dynamic crowdsourced ranking policy (Algorithm~\ref{alg:AKG}) where only one pair of items is selected and presented to a worker at each time based on historical comparison results.
This online method allows the budget to be adaptively shifted towards the ambiguous pairs so that the final ranking accuracy can be improved.

In particular, given the total budget $T$, the dynamic decision process consists of $T$ stages and, in stage $t=0,1,\dots,T-1$,  a pair of items $(i_t,j_t)$ with $i_t<j_t$ is presented to a randomly selected worker and we receive the comparison result $Y_{i_tj_t}$ defined in \eqref{Yij} and \eqref{probij}. The historical comparison results up to stage $t$ can be summarized by a $K\times K$ matrix $M^t$ with its entry\footnote{In this paper, the notation $A_{ij}$ represents the entry in the $i$-th row and $j$-th column of matrix $A$.} $M^t_{ij}$ equal to the number of times item $i$ is preferred to item $j$ up to stage $t$. For each stage $t$ where the pair $(i_t, j_t)$ is compared, we define $\Delta^t$ to be a sparse $K \times K$ matrix with only one non-zero element: $\Delta^t_{i_tj_t}=1$ if $Y_{i_tj_t}=1$ and $\Delta^t_{j_ti_t}=1$ if $Y_{i_tj_t}=-1$. By its definition, $M^t$ can be updated iteratively as follows
\begin{equation}
\label{Mt}
M^0=\mathbf{0},\quad
M^{t+1}=M^t+\Delta^t \quad
\text{ for }t=0,1,\dots,T-1,
\end{equation}
where $\mathbf{0}$ denotes the $K \times K$ all-zero matrix.

We denote an \emph{adaptive dynamic budget allocation/sampling policy} by $\mathcal{A}=\{(i_t,j_t)\}_{t=0,1,\dots,T-1}$ where $(i_t,j_t)=(i_t(M^t),j_t(M^t))$ depends on the previous comparison results through $M^t$. Our goal is to find the best $\mathcal{A}$ so that the inferred ranking based on all the historical comparisons (represented by $M^T$) achieves the highest accuracy.

To measure the accuracy of an inferred ranking $\pi$, we adopt the popular evaluation criterion --- normalized \emph{Kendall's tau  rank correlation coefficient}~\citep{Kendall:38} between $\pi$ and $\pi^*$ (Kendall's tau for short):
\begin{eqnarray}
\label{kendalltau}
\tau(\pi, \pi^*) &\equiv& \frac{\left|\left\{(i,j):i<j,~ \left(\pi(i)-\pi(j)\right)\left(\pi^*(i)-\pi^*(j)\right) >0 \right\}\right|}{K(K-1)/2}\\\nonumber
&=&\frac{2}{K(K-1)}\sum_{i\neq j}\mathbf{1}_{\{\pi(i)>\pi(j)\}}\mathbf{1}_{\{\theta_i>\theta_j\}},
\end{eqnarray}
where $\mathbf{1}_{\{\cdot\}}$ denotes the indicator function.
Here, the numerator counts the number of pairs that $\pi$ and $\pi^*$ agree with each other and the denominator is the total number of pairs over $K$ items. Hence, $\tau(\pi, \pi^*)\in[0,1]$ and represents the percentage of agreements between $\pi$ and $\pi^*$. The ranking accuracy of $\pi$ is higher when $\tau(\pi, \pi^*)$ is closer to one and $\pi=\pi^*$ if and only if $\tau(\pi, \pi^*)=1$.

However, we cannot infer a ranking based on the collected data by directly maximizing $\tau(\pi, \pi^*)$ because $\pi^*$ and $\btheta$ are unknown. To address this challenge, we adopt a Bayesian framework by proposing a prior distribution on $\btheta$ and infer a ranking $\pi$ that maximizes the posterior expectation of $\tau(\pi, \pi^*)$. Recall that the vector of latent scores $\btheta$ is assumed to lie in the simplex
\begin{equation}\label{eq:simplex}
\Delta\equiv\Bigl\{\btheta\in\mathbb{R}^k\bigg|\sum_{i=1}^{K} \theta_i = 1,\theta_i > 0 \Bigr \}.
\end{equation}
It is natural to assume that $\btheta$ is drawn from a \emph{Dirichlet prior distribution} parameterized by $\balpha^0 = (\alpha_1^0,\dots,\alpha_K^0)^T$ with $\alpha_i^0 > 0$ for all $i$ (note that Dirichlet distribution of order $K$ is supported on $\Delta$). Namely,
$$
\btheta\sim\Dir(\balpha^0)= {\frac {1}{\mathrm {B} (\balpha^0)}}\prod _{i=1}^{K}\theta_{i}^{\alpha _{i}-1},
$$
where
$
\mathrm {B} ({\boldsymbol {\alpha }})={\frac {\prod _{i=1}^{K}\Gamma (\alpha _{i})}{\Gamma {\bigl (}\sum _{i=1}^{K}\alpha _{i}{\bigr )}}}
$
and $\Gamma(x)\equiv \int_{0}^\infty \lambda^{x-1} e^{-\lambda} \mathrm{d} \lambda$ is the gamma function.
Given the comparison data $M^t$ up to stage $t$ and the probability distribution of each comparison result in  \eqref{probij}, the density function of the posterior distribution of $\btheta$ takes the following form,
\begin{eqnarray}
\label{posterior}
p(\btheta|M^t,\balpha^0) = \frac{1}{\mathrm{H}(M^t,\balpha^0)} \prod_{i\neq j} \left(\frac{\theta_i}{\theta_i+\theta_j}\right)^{M^t_{ij}}\prod_{i} \theta_i^{\alpha_i^0-1}= \frac{1}{\mathrm{H}(M^t,\balpha^0)} \frac{\prod_{i=1}^K \theta_i^{\beta_i^t+\alpha_i^0 - 1}}{\prod_{i<j}(\theta_i+\theta_j)^{M_{ij}^t+M_{ji}^t} },
\end{eqnarray}
where $\bbeta^t=(\beta_1^t,\beta_2^t,\dots,\beta_K^t)^T$ with $\beta_i^t\equiv \sum_{j\neq i}M_{ij}^t$, i.e., the number of times item $i$ is preferred to another item up to stage $t$, and
$$
\mathrm{H}(M^t,\balpha^0)\equiv\int_{\Delta}  \frac{\prod_{i=1}^K \theta_i^{\beta_i^t+\alpha_i^0 - 1}}{\prod_{i<j}(\theta_i+\theta_j)^{M_{ij}^t+M_{ji}^t} }\mathrm{d}\btheta,
$$
is the normalization constant.

With this posterior distribution in place and with $M^t$ at any stage $t$, we can infer a ranking $\widehat{\pi}_t$ to maximize the posterior expected ranking accuracy measured by its Kendall's tau with respect to $\pi^*$, namely, to find
\begin{eqnarray}
\label{maxtau}
\widehat{\pi}_t &\in& \argmax_{\pi} \mathbb{E}\left[\tau(\pi, \pi^*)|M^t,\balpha^0\right],
\end{eqnarray}
where the expectation is taken with respect to the posterior distribution $p(\btheta|M^t,\balpha^0)$ in \eqref{posterior}. We denote the corresponding maximum posterior expected accuracy by $h(M^t)$, i.e.,
\begin{eqnarray}
\label{maxh}
h(M^t)&\equiv& \max_{\pi} \mathbb{E}\left[\tau(\pi, \pi^*)|M^t,\balpha^0\right],
\end{eqnarray}
where the dependence of $h$ on the prior $\balpha^0$ is suppressed for notational simplicity.
We are interested in finding a dynamic budget allocation policy $\mathcal{A}=\{(i_t,j_t)\}_{t=0,1,\dots,T-1}$ that maximizes $h(M^T)$, i.e., the final expected ranking accuracy when the budget is exhausted. This problem can be stated as
\begin{eqnarray}
\label{maxA}
\max_{\mathcal{A}} \mathbb{E}^{\mathcal{A}}\left[h(M^T)|\balpha^0\right],
\end{eqnarray}
where $\mathbb{E}^{\mathcal{A}}$ represents the expectation over the sample paths (i.e., the sampled pairs and outcomes) generated by the policy $\mathcal{A}$.

The maximization problem in \eqref{maxA} can be formulated as a $T$-stage Bayesian Markov decision process (MDP), where the \emph{state variable} is the posterior distribution in \eqref{posterior} or simply the matrix $M^t$. The \emph{state space} at each stage $t$ denoted by $\mathcal{S}^t$ takes the form of
\begin{equation}\label{eq:state_space}
 \mathcal{S}^t= \Bigl \{ M^t \in \mathbb{Z}_{\geq 0}^{K \times K}: \sum_{i,j} M^{t}_{ij}=t \Bigr\},
\end{equation}
where $\mathbb{Z}_{\geq 0}$ denotes the set of non-negative integers. The state variable makes a transition according to \eqref{Mt} given the observed comparison result $Y_{i_t j_t}$, where the sampled pair $(i_t, j_t)$ is determined by the policy $\mathcal{A}$. The expected transition probabilities take the form of,
\begin{eqnarray}
\label{Eprobij1}
\mathbb{E}\left[\text{Pr}(Y_{ij}=1)|M^t,\balpha^0\right] &=&\mathbb{E}\left[ \frac{\theta_i}{\theta_i+\theta_j}\big|M^t,\balpha^0\right]\\\label{Eprobij0}
\mathbb{E}\left[\text{Pr}(Y_{ij}=-1)|M^t,\balpha^0\right] &=&\mathbb{E}\left[ \frac{\theta_j}{\theta_i+\theta_j}\big|M^t,\balpha^0\right]
\end{eqnarray}
for $1 \leq i < j \leq K$ and the expectation is taken over the posterior of $\btheta$ in \eqref{posterior}. To complete the definition of our Bayesian MDP for crowdsourced ranking, we still need to define the \emph{stage-wise reward}. To this end, we rewrite $h(M^T)$  in \eqref{maxA} as a telescopic sum,
\begin{equation}\label{R}
 h(M^T)= \sum_{t=0,1,\ldots, T-1} R(M^t, i_t, j_t, Y_{i_t j_t}); \qquad  R(M^t, i_t, j_t, Y_{i_t j_t}) \equiv h(M^{t+1})-h(M^{t}),
\end{equation}
and note that $R(M^t, i_t, j_t, Y_{i_t j_t})=h(M^{t+1})-h(M^{t})$ only depends on $M^t$, $i_t$, $j_t$, $Y_{i_tj_t}$.  Given \eqref{R}, the maximization problem \eqref{maxA} is equivalent to
\begin{eqnarray}
\label{maxsum}
&&\max_{\mathcal{A}}\mathbb{E}^{\mathcal{A}}\left[h(M^0)+\sum_{t=0}^{T-1}R(M^t,i_t,j_t,Y_{i_tj_t})\bigg|\balpha^0\right]\\
&=&h(M^0)+\max_{\mathcal{A}}\mathbb{E}^{\mathcal{A}}\left[\sum_{t=0}^{T-1}\mathbb{E}\left[R(M^t,i_t,j_t,Y_{i_tj_t})\big|M^t,\balpha^0\right]\bigg|\balpha^0\right].\nonumber
\end{eqnarray}
From \eqref{maxsum}, it is clear that $R(M^t, i_t, j_t, Y_{i_t j_t})$ is  the \emph{stage-wise reward},
which can be interpreted as the improvement of the expected ranking accuracy after receiving the comparison result $Y_{i_tj_t}$ at stage $t$ for $t=0,1,\dots,T-1$.

Given the Bayesian MDP in place, we can apply the dynamic programming (DP) algorithm  (a.k.a. backward induction) \citep{Puterman:05} to compute the optimal policy. Although DP finds the optimal policy, its computation is intractable because:
\begin{enumerate}
 \item The sophisticated form of the posterior distribution in \eqref{posterior} makes it difficult to evaluate the posterior expected ranking accuracy $\mathbb{E}\left[\tau(\pi, \pi^*)|M^t,\balpha^0\right]$ in \eqref{maxh} and the expected transition probabilities in \eqref{Eprobij1} and \eqref{Eprobij0}.
  \item The maximization problem \eqref{maxh} for solving the optimal posterior expected ranking accuracy is essentially a linear ordering problem \citep{grotschel1984cpa}, which is NP-hard in general (see Section \ref{sec:AKG} for more details).
  \item The size of the state space $\calS^t$ grows exponentially in $t$ according to \eqref{eq:state_space}, which is known as the curse of dimensionality that prevents us from solving \eqref{maxsum} exactly with the standard techniques such value iteration, policy iteration and linear programming.
\end{enumerate}
To address these challenges, we propose an approximated knowledge gradient policy (AKG) in the next Section.

\subsection{Approximated Knowledge Gradient Policy}
\label{sec:AKG}

In this section, we describe an approximated policy to solve \eqref{maxA}, which is computationally efficient and still provides an inferred ranking with high quality. The proposed approximation policy belongs to the family of \emph{knowledge gradient} (KG) policies~\citep{Gupta:96,Frazier:08,Powell:11a,Ryzhov:2012}, which is essentially a single-step look-ahead policy. In our problem, the KG policy will sample the next pair of items with the highest expected stage-wise reward in each stage, i.e.,
choosing the pair $(i_t,j_t)$ such that

\begin{eqnarray}
\label{KG}
(i_t,j_t) &\in\argmax_{i<j}  & \mathbb{E}\left[R(M^t,i_t,j_t,Y_{i_tj_t})\big|M^t,\balpha^0\right]\\\nonumber
& =\argmax_{i<j} & \mathbb{E}\left[\text{Pr}(Y_{ij}=1)|M^t,\balpha^0\right] R(M^t,i_t,j_t,1) \\
&& +
\mathbb{E}\left[\text{Pr}(Y_{ij}=-1)|M^t,\balpha^0\right]R(M^t,i_t,j_t,-1). \nonumber
\end{eqnarray}
Despite its simplicity and wide applicability, the implementation of the KG policy for our problem in \eqref{KG} is still computationally intractable since we have to evaluate the expected stage-wise reward $\mathbb{E}\left[R(M^t,i_t,j_t,Y_{i_tj_t})\big|M^t,\balpha^0\right]$,  where two main challenges will arise.

First, we have to evaluate the transition probabilities \eqref{Eprobij1} and \eqref{Eprobij0} as well as the ranking accuracy \eqref{maxh}, which can be written as
\begin{eqnarray}
\nonumber
h(M^t)&=& \max_{\pi} \mathbb{E}\left[\tau(\pi, \pi^*)|M^t,\balpha^0\right]\\\nonumber
&=&\max_{\pi}\frac{2\sum_{i\neq j}\mathbb{E}\left[\mathbf{1}_{\{\pi(i)>\pi(j)\}}\mathbf{1}_{\{\theta_i > \theta_j \}}|M^t,\balpha^0\right]}{K(K-1)}\\
&=& \max_{\pi}\frac{2\sum_{i\neq j}\mathbf{1}_{\pi(i)>\pi(j)}\text{Pr}\left(\theta_i>\theta_j|M^t,\balpha^0\right)}{K(K-1)}.
\label{eq:LOP}
\end{eqnarray}
However, due to the complicated structure of the posterior distribution $p(\btheta|M^t,\balpha^0)$ in \eqref{posterior}, the expected transition probabilities  \eqref{Eprobij1} and \eqref{Eprobij0} and the posterior probability $\text{Pr}\left(\theta_i>\theta_j|M^t,\balpha^0\right)$ in \eqref{eq:LOP} do not admit a closed form so that one needs to use multidimensional numerical integral or sampling techniques to compute their values. Note that for each stage $t$, we need to evaluate \eqref{Eprobij1}, \eqref{Eprobij0} and $\text{Pr}\left(\theta_i>\theta_j|M^t,\balpha^0\right)$ for all $K(K-1)/2$ pairs. When these quantities cannot be easily computed, the overall computational cost will be extremely expensive.

Second, even if the posterior probabilities $\text{Pr}\left(\theta_i>\theta_j|M^t,\balpha^0\right)$ for all pairs are  given, the maximization problem  \eqref{eq:LOP} with respect to a global ranking $\pi$ is still very challenging.  In fact, this problem is equivalent to the \emph{maximum linear ordering problem (MAX-LOP)} described as follows.
Let $G=(V,E, w)$ be a completed directed graph defined on a set $V$ of $K$ nodes, where the edge set $E$ contains the directed arcs between all pairs of nodes and $w(i,j)$ refers to the weight associated with the arc from node $i$ to node $j$. A tournament $D$ is a sub-graph of $G$ such that, for any pair of nodes $i$ and $j$, $D$ contains either the arc from $i$ to $j$ or the arc from $j$ to $i$ but not both. The MAX-LOP aims to find an acyclic tournament $D$ with a maximum total weight on its arcs. If we interpret the arc from node $i$ and node $j$ as the preference of node $i$ to node $j$ under a ranking criterion, each acyclic tournament in $G$ corresponds one-to-one to a global ranking of the nodes. Hence, MAX-LOP is equivalent to finding a ranking $\pi$ such that the total weight $\sum_{\pi(i) > \pi(j) } w(i, j)$ is maximized. In problem~\eqref{eq:LOP}, the nodes correspond to the $K$ items and the weight $w(i,j)=\text{Pr}\left(\theta_i>\theta_j|M^t,\balpha^0\right)$. Unfortunately, the MAX-LOP is known to be a NP-hard problem and in fact, APX (approximable)-complete and thus no PTAS  (Polynomial Time Approximation Scheme) under P $\neq$ NP \citep{Mishra04LOP}.

Given these two challenges, evaluating $\mathbb{E}\left[R(M^t,i_t,j_t,Y_{i_tj_t})\big|M^t,\balpha^0\right]$ and solving \eqref{KG} repeatedly at each stage are computationally intractable. To address this problem, we propose an \emph{approximated knowledge gradient} (AKG) policy, which first replaces the stage-wise reward \eqref{R} by an approximated but computable reward and then chooses the pair that maximizes this approximated reward. 
Our approximation scheme starts with approximating the posterior distribution $p(\btheta|M^t,\balpha^0)$ in \eqref{posterior} recursively using a sequence of Dirichlet distributions $\Dir(\balpha^t)$ for $t=1,2,\dots,T$ based on \emph{moment matching}. One key benefit of such an approximation is that, at each stage $t$, the approximated posterior distribution of $\btheta$ is still a Dirichlet distribution so that the NP-hard MAX-LOP problem in \eqref{eq:LOP} will admit a simple solution via a sorting procedure (see Theorem \ref{thmLOP}).

Although there exist other methods for posterior approximation, these methods cannot be implemented as efficiently as moment matching in our application. For example, some methods such as variational inference (e.g., \citealp{Beal:03, Paisley:12}) minimize the KL-divergence between the exact posterior and the variational posterior, which requires an iterative optimization algorithm as a subroutine. Other methods like Gibbs sampler are computationally expensive in our case because the full conditional distribution does not have a closed form to allow easy sampling. In contrast, the proposed (algorithmic) moment matching admits a closed-form solution for approximating the posterior, which is computationally very efficient, and further provides a Dirichlet distribution as the approximated posterior, which facilitates solving the MAX-LOP. We note that the close-form update is critical for online crowdsourcing applications to reduce the computation time between two stages. In practice, since the crowd workers want to maximize their return in a short period of time, they may quit the current task if we let them wait for too long before we determine the next pair. Finally, we note that, although providing the theoretical guarantee for such an iterative approximation is hard in the Bayesian setup, we empirically show that the resulting AKG policy will generate a final ranking of a high accuracy with the limited budget.


Now we formally introduce the posterior approximation and AKG policy.  Suppose $\btheta\sim\Dir(\balpha)$ for some parameters $\balpha\in\mathbb{R}^K$. We consider a basic case where only one comparison result $Y_{ij}$ for a pair $(i,j)$ with $i<j$ has been observed. In this case, we approximate the posterior $p(\btheta|Y_{ij},\balpha)$ by another Dirichlet distribution $\Dir(\balpha')$ such that
\begin{eqnarray}
\label{eq:moment_matching1}
\mathbb{E} \left[\theta_k |\btheta\sim\Dir(\balpha')\right]&=&\mathbb{E}[\theta_k|Y_{ij},\balpha]\text{ for }k=1,2,\dots,K\\\label{eq:moment_matching2}
\mathbb{E} \left[\sum_{k=1}^K\theta_k^2 |\btheta\sim\Dir(\balpha')\right] &=&\mathbb{E}\left[ \sum_{k=1}^K\theta_k^2| Y_{ij},\balpha\right].
\end{eqnarray}
This system of equations has the following explicit characterization.
\begin{proposition}
\label{propMM}
Suppose $\btheta\sim\Dir(\balpha)$ and $Y_{ij}$ is the only comparison result for $i<j$. Let $\alpha_0=\sum_{k=1}^K\alpha_k$ and $\alpha'_0=\sum_{k=1}^K\alpha'_k$. The equations \eqref{eq:moment_matching1} and \eqref{eq:moment_matching2} can be represented as
\begin{eqnarray}
\label{MM1}
\left\{
\begin{array}{rcl}
\frac{\alpha'_i}{\alpha'_0}&=&\frac{\left(\alpha_i+\frac{1+Y_{ij}}{2}\right)(\alpha_i+\alpha_j)}{\alpha_0(\alpha_i+\alpha_j+1)}\\
\frac{\alpha'_j}{\alpha'_0}&=&\frac{\left(\alpha_j+\frac{1-Y_{ij}}{2}\right)(\alpha_i+\alpha_j)}{\alpha_0(\alpha_i+\alpha_j+1)}\\
\frac{\alpha'_k}{\alpha'_0}&=& \frac{\alpha_k}{\alpha_0} \qquad \mathrm{ for }\;\;\; k\neq i,j\\
\sum_{k=1}^K\frac{\alpha'_k(\alpha'_k+1)}{\alpha'_0(\alpha'_0+1)}&=&  \frac{\left(\alpha_i+\frac{1+Y_{ij}}{2}\right)\left(\alpha_i+\frac{3+Y_{ij}}{2}\right)(\alpha_i+\alpha_j)}{\alpha_0(\alpha_0+1)(\alpha_i+\alpha_j+2)} \\ && + \frac{\left(\alpha_j+\frac{1-Y_{ij}}{2}\right)\left(\alpha_j+\frac{3-Y_{ij}}{2}\right)(\alpha_i+\alpha_j)}{\alpha_0(\alpha_0+1)(\alpha_i+\alpha_j+2)}+ \sum_{k\neq i,j} \frac{\alpha_k(\alpha_k+1)}{\alpha_0(\alpha_0+1)}.
\end{array}
\right.
\end{eqnarray}
\end{proposition}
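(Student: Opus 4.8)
The plan is to reduce the whole statement to the classical moment formulas for the Dirichlet and Beta laws together with the aggregation/decimation property of the Dirichlet distribution. Recall that if $\btheta\sim\Dir(\bgamma)$ with $\gamma_0=\sum_k\gamma_k$, then $\E[\theta_k]=\gamma_k/\gamma_0$ and $\E[\theta_k^2]=\gamma_k(\gamma_k+1)/(\gamma_0(\gamma_0+1))$. Applying this to $\Dir(\balpha')$ gives exactly the left-hand sides of all four equations in \eqref{MM1}, because, by definition, $\balpha'$ is chosen so that $\Dir(\balpha')$ reproduces the posterior first moments in \eqref{eq:moment_matching1} and the posterior value of $\E[\sum_k\theta_k^2]$ in \eqref{eq:moment_matching2}. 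Hence it suffices to show that the posterior expectations $\E[\theta_k\mid Y_{ij},\balpha]$ and $\E[\sum_k\theta_k^2\mid Y_{ij},\balpha]$ equal the right-hand sides of \eqref{MM1}.

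To compute those posterior moments, introduce $S=\theta_i+\theta_j$ and $Z=\theta_i/(\theta_i+\theta_j)$, so that $\theta_i=SZ$ and $\theta_j=S(1-Z)$. The structural fact I would invoke is that under $\Dir(\balpha)$ the pair $(S,\{\theta_k\}_{k\neq i,j})$ has law $\Dir(\alpha_i+\alpha_j,\{\alpha_k\}_{k\neq i,j})$ (total mass still $\alpha_0$), and $Z\sim\B(\alpha_i,\alpha_j)$ \emph{independently} of it. Since the likelihood of the single observation is $\mathrm{Pr}(Y_{ij})=Z^{(1+Y_{ij})/2}(1-Z)^{(1-Y_{ij})/2}$, a function of $Z$ alone, conditioning on $Y_{ij}$ leaves the law of $(S,\{\theta_k\}_{k\neq i,j})$ unchanged and only tilts the Beta factor: the posterior law of $Z$ becomes $\B\bigl(\alpha_i+\tfrac{1+Y_{ij}}{2},\,\alpha_j+\tfrac{1-Y_{ij}}{2}\bigr)$, still independent of $(S,\{\theta_k\}_{k\neq i,j})$ (this single formula handles both $Y_{ij}=1$ and $Y_{ij}=-1$). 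I would justify the aggregation/independence claim either by citing it as a standard property or by the change of variables $(\theta_i,\theta_j)\mapsto(S,Z)$ in the Dirichlet density, whose Jacobian factors the density into a $\B(\alpha_i,\alpha_j)$ density in $Z$ times a Dirichlet density in $(S,\{\theta_k\}_{k\neq i,j})$.

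The rest is bookkeeping. For $k\neq i,j$ the marginal of $\theta_k$ is unchanged, giving the third line of \eqref{MM1}. Writing $\theta_i=SZ$ with $S\perp Z$ under the posterior, $\E[\theta_i\mid Y_{ij},\balpha]=\E[S]\,\E[Z]=\tfrac{\alpha_i+\alpha_j}{\alpha_0}\cdot\tfrac{\alpha_i+(1+Y_{ij})/2}{\alpha_i+\alpha_j+1}$, which is the first line; the second follows by the $i\leftrightarrow j$ symmetry. For the second-moment equation, decompose $\sum_k\theta_k^2=S^2Z^2+S^2(1-Z)^2+\sum_{k\neq i,j}\theta_k^2$, take posterior expectations, and use $\E[S^2]=\tfrac{(\alpha_i+\alpha_j)(\alpha_i+\alpha_j+1)}{\alpha_0(\alpha_0+1)}$, $\E[\theta_k^2]=\tfrac{\alpha_k(\alpha_k+1)}{\alpha_0(\alpha_0+1)}$ for $k\neq i,j$, and the second moments of the tilted Beta, namely $\E[Z^2]$ and $\E[(1-Z)^2]$ under $\B\bigl(\alpha_i+\tfrac{1+Y_{ij}}{2},\alpha_j+\tfrac{1-Y_{ij}}{2}\bigr)$. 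After the common factor $\alpha_i+\alpha_j+1$ cancels between $\E[S^2]$ and those Beta moments, the expression collapses to exactly the last equation of \eqref{MM1} (here one uses $\tfrac{1\pm Y_{ij}}{2}+1=\tfrac{3\pm Y_{ij}}{2}$). The only genuinely nontrivial ingredient is the Beta-tilting / Dirichlet-aggregation identity; everything past that point is elementary algebra, so no delicate estimates are required. A fully self-contained alternative is to write each required expectation directly as a ratio of simplex integrals, e.g. $\E[\theta_k\mid Y_{ij}=1,\balpha]=\mathrm{H}(\cdot)^{-1}\!\int_\Delta \theta_k\tfrac{\theta_i}{\theta_i+\theta_j}\prod_\ell\theta_\ell^{\alpha_\ell-1}\,\mathrm d\btheta$, and evaluate it via the same $(S,Z)$ substitution, with the $1/(\theta_i+\theta_j)$ factor absorbed by the Jacobian and the remaining integrals handled by $\int_0^1 z^{a-1}(1-z)^{b-1}\mathrm dz=\mathrm B(a,b)$ and $\int_\Delta\prod_\ell\theta_\ell^{\gamma_\ell-1}\mathrm d\btheta=\mathrm B(\bgamma)$.
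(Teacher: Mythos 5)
Your proof is correct, and it takes a genuinely different route from the paper's. The paper works entirely with simplex integrals: it first computes $\Pr(Y_{ij}=1\mid\balpha)=\alpha_i/(\alpha_i+\alpha_j)$, writes out the posterior density explicitly, and then evaluates each moment $\E[\theta_i\mid Y_{ij}=1,\balpha]$, $\E[\theta_i^2\mid Y_{ij}=1,\balpha]$, etc.\ one at a time by absorbing the extra powers of $\theta_i$ into a shifted parameter vector $\bbeta$ (with $\beta_i=\alpha_i+1$) and repeatedly reusing the base identity --- and it only treats the case $Y_{ij}=1$ explicitly, leaving $Y_{ij}=-1$ as ``similar.'' You instead invoke the aggregation/neutrality property of the Dirichlet to factor the prior into $Z=\theta_i/(\theta_i+\theta_j)\sim\B(\alpha_i,\alpha_j)$ independent of $(S,\{\theta_k\}_{k\neq i,j})$, observe that the likelihood is a function of $Z$ alone, and conclude that conditioning merely tilts the Beta factor while leaving everything else untouched; the four equations then drop out from products of known Beta and Dirichlet moments. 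What your approach buys is a unified treatment of both outcomes $Y_{ij}=\pm 1$ in a single formula and a structural explanation of \emph{why} the posterior moments factor as (aggregate moment) $\times$ (tilted Beta moment), at the cost of relying on the neutrality property --- which you correctly note can be verified by the $(\theta_i,\theta_j)\mapsto(S,Z)$ change of variables, and your fallback ``ratio of simplex integrals'' route is essentially the paper's argument. All the individual moment computations you report (e.g.\ $\E[\theta_i\mid Y_{ij},\balpha]=\E[S]\,\E[Z]$ and the cancellation of $\alpha_i+\alpha_j+1$ in the second-moment terms) check out against the paper's equations \eqref{prop1eq2}--\eqref{prop1eq7}.
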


The proof of Proposition \ref{propMM} is provided in the Appendix. We denote any $\balpha'$ that satisfies \eqref{eq:moment_matching1} and \eqref{eq:moment_matching2}, and thus \eqref{MM1}, by
\begin{eqnarray}
\label{defMM}
\balpha'=\textbf{MM}(\balpha,i,j,Y_{ij}).
\end{eqnarray}
Note that, given $\balpha$, $i$, $j$ and $Y_{ij}$, the right-hand sides of \eqref{MM1} are all constants so that we can solve $\balpha'=\textbf{MM}(\balpha,i,j,Y_{ij})$ in a closed form. In fact, we denote the constants on the right hand sides of \eqref{MM1} as $C_i$, $C_j$, $C_k$ (for $k\neq i ,j$) and $D$, respectively. It is easy to show that $\sum_{k=1}^KC_k=1$. The first three equalities in \eqref{MM1} imply that $\alpha'_k=C_k\alpha'_0$ for $k=1,2,\dots,K$ so that the fourth equality in \eqref{MM1} can be represented as
$
\sum_{k=1}^KC_k(C_k\alpha'_0+1)=D(\alpha'_0+1).
$
Solving $\alpha'_0$ from this equation leads to a closed-form for $\balpha'=\textbf{MM}(\balpha,i,j,Y_{ij})$ as follows
\begin{eqnarray}
\label{MM1form}
\alpha'_0=\frac{D-1}{\sum_{k=1}^KC_k^2-D}\quad\text{ and }\quad \alpha'_k=C_k\alpha'_0\text{ for }k=1,2,\dots,K.
\end{eqnarray}

Although the above approximation scheme is established for only one comparison result, it produces a Dirichlet distribution $\Dir(\balpha')$ which has the same type as the prior distribution $\Dir(\balpha)$. Therefore, as more comparison results are generated sequentially,  we can apply this approximation scheme iteratively after each comparison result.
In particular, given a policy $\mathcal{A}=\{(i_t,j_t)\}_{t=0,1,\dots,T-1}$ with $i_t<j_t$ and the comparison results $\{Y_{i_tj_t}\}_{t=0,1,\dots,T-1}$, we define $\balpha^t$ recursively as
\begin{eqnarray}
\label{alphat}
\balpha^{t+1}=\textbf{MM}(\balpha^t,i_t,j_t,Y_{i_tj_t})
\end{eqnarray}
for $t=1,2,\dots,T$. By doing so, we approximate the posterior distribution $p(\btheta|M^t,\balpha^0)$ by the Dirichlet distribution $\Dir(\balpha^t)$ for $t=1,2,\ldots, T$.

With $p(\btheta|M^t,\balpha^0)$ approximated by $\Dir(\balpha^t)$, we can mitigate the two challenges mentioned at the beginning of this subsection. First, we can approximate \eqref{Eprobij1} and \eqref{Eprobij0} as
\begin{eqnarray}
\label{approxprob1}
\mathbb{E}\left[\text{Pr}(Y_{ij}=1)\big|M^t,\balpha^0\right]
\approx\mathbb{E}\left[ \frac{\theta_i}{\theta_i+\theta_j}\big|\btheta\sim\Dir(\balpha^t)\right]&=&\frac{\alpha^t_i}{\alpha^t_i+\alpha^t_j}\\
\label{approxprob2}
\mathbb{E}\left[\text{Pr}(Y_{ij}=-1)\big|M^t,\balpha^0\right]
\approx\mathbb{E}\left[ \frac{\theta_i}{\theta_i+\theta_j}\big|\btheta\sim\Dir(\balpha^t)\right]&=&\frac{\alpha^t_j}{\alpha^t_i+\alpha^t_j}
\end{eqnarray}
and approximate
$\text{Pr}\left(\theta_i>\theta_j|M^t,\balpha^0\right)$ in \eqref{posterior} as
\begin{eqnarray}
\label{approxprob}
\text{Pr}\left(\theta_i>\theta_j|M^t,\balpha^0\right)
\approx\text{Pr}\left(\theta_i>\theta_j|\btheta\sim\Dir(\balpha^t)\right)
=\int_{\frac{1}{2}}^1t^{\alpha^t_i-1}(1-t)^{\alpha^t_j-1}\mathrm {d}t
=I_{\frac{1}{2}}(\alpha^t_j,\alpha^t_i),
\end{eqnarray}
where $I_{x}(a,b)={\dfrac {\mathrm {B} (x;\,a,b)}{\mathrm {B} (a,b)}}$ is known as the \emph{regularized incomplete beta function} with $\mathrm {B} (x;\,a,b)=\int_{0}^{x} \lambda^{a-1}\,(1-\lambda)^{b-1}\,\mathrm {d} \lambda$ and $\mathrm {\mathrm {B} } (a,b)=\int _{0}^{1} \lambda^{a-1}(1-\lambda)^{b-1}\,\mathrm {d} \lambda$. Note that the approximated quantities in \eqref{approxprob1}, \eqref{approxprob2} and \eqref{approxprob} are much easier to compute than the original ones.

More importantly, the approximation \eqref{approxprob} simplifies the NP-hard MAX-LOP in \eqref{eq:LOP}:
$$
\max_{\pi} \mathbb{E}\left[\tau(\pi, \pi^*)|M^t,\balpha^0\right]\approx
\max_{\pi} \mathbb{E}\left[\tau(\pi, \pi^*)|\btheta\sim\Dir(\balpha^t)\right].
$$
The right-hand side is still a MAX-LOP but has a special structure so that it can be solved easily by a simple sorting procedure. In particular, the following theorem shows that  when $\btheta \sim \mathrm{Dir}(\balpha)$, the optimal ranking in \eqref{KG} can be obtained by sorting the components of $\balpha$.
\begin{theorem}\label{thmLOP}
Suppose $\btheta \sim \mathrm{Dir}(\balpha)$. We have
\begin{eqnarray}
\nonumber
\Pi_{\balpha}&\equiv&\left\{\pi|\pi\text{ is a ranking of }\{1,2,\dots,K\}\text{ such that }\pi(i)>\pi(j) \text{ only if } \alpha_i\geq\alpha_j \text{ for all } i,j\right\}\\\label{thm:LOP}
&=&\argmax_{\pi}\mathbb{E}\left[\tau(\pi, \pi^*)|\btheta \sim \Dir(\balpha)\right]
\end{eqnarray}
\end{theorem}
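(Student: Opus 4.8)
The plan is to turn the maximization over rankings into a pairwise exchange argument driven by a monotonicity property of the probabilities $w_{ij}\equiv\Pr(\theta_i>\theta_j\mid\btheta\sim\Dir(\balpha))$. By \eqref{eq:LOP} specialized to the prior $\Dir(\balpha)$ we have $\mathbb{E}[\tau(\pi,\pi^*)\mid\btheta\sim\Dir(\balpha)]=\frac{2}{K(K-1)}\sum_{i\neq j}\mathbf{1}_{\{\pi(i)>\pi(j)\}}w_{ij}$, so it suffices to determine which rankings $\pi$ maximize $\sum_{i\neq j}\mathbf{1}_{\{\pi(i)>\pi(j)\}}w_{ij}$.

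The key lemma I would establish is: $w_{ij}+w_{ji}=1$, and $w_{ij}\ge w_{ji}$ if and only if $\alpha_i\ge\alpha_j$, with equality (hence $w_{ij}=w_{ji}=\tfrac12$) exactly when $\alpha_i=\alpha_j$. The first identity is immediate because $\theta_i/(\theta_i+\theta_j)$ has a continuous distribution, so $\Pr(\theta_i=\theta_j)=0$. For the monotone part I would invoke the Dirichlet aggregation property $\theta_i/(\theta_i+\theta_j)\sim\B(\alpha_i,\alpha_j)$ (the same fact already used to derive \eqref{approxprob}). Writing $f$ for the $\B(\alpha_i,\alpha_j)$ density and applying the substitution $t\mapsto 1-t$ to the lower half of the domain,
\[
w_{ij}-w_{ji}=\int_{1/2}^{1}\bigl(f(t)-f(1-t)\bigr)\,dt=\frac{1}{\mathrm{B}(\alpha_i,\alpha_j)}\int_{1/2}^{1} t^{\alpha_j-1}(1-t)^{\alpha_j-1}\bigl(t^{\alpha_i-\alpha_j}-(1-t)^{\alpha_i-\alpha_j}\bigr)\,dt .
\]
Since $t>1-t>0$ on $(1/2,1)$, the factor $t^{\alpha_i-\alpha_j}-(1-t)^{\alpha_i-\alpha_j}$ is positive when $\alpha_i>\alpha_j$, identically zero when $\alpha_i=\alpha_j$, and negative when $\alpha_i<\alpha_j$, which gives the lemma.

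With the lemma in hand I would finish in three steps. First, every maximizer lies in $\Pi_\balpha$: if $\pi\notin\Pi_\balpha$ then there are $i,j$ with $\pi(i)>\pi(j)$ but $\alpha_i<\alpha_j$; choosing such a pair with $\pi(i)-\pi(j)$ minimal forces $i$ and $j$ to be adjacent in $\pi$, since if some item $\ell$ sat strictly between them then comparing $\alpha_\ell$ to $\alpha_i$ would (in either case) produce a pair with the same defect and a strictly smaller gap. Swapping the adjacent pair $i,j$ flips the relative order of exactly the pair $\{i,j\}$ and of no other pair, so the objective increases by $\frac{2}{K(K-1)}(w_{ji}-w_{ij})>0$, contradicting optimality. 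Second, the objective is constant on $\Pi_\balpha$: any two elements of $\Pi_\balpha$ differ only by reorderings within blocks of items sharing a common $\alpha$-value, each such reordering is a composition of adjacent transpositions of equal-$\alpha$ items, and by the lemma ($w_{ij}=w_{ji}$ when $\alpha_i=\alpha_j$) each such transposition leaves the objective unchanged. Third, $\Pi_\balpha$ is nonempty (sort $\balpha$ in decreasing order, breaking ties arbitrarily); combining this with the first two steps, $\Pi_\balpha$ contains every maximizer and all its members share a single objective value, so that value is the maximum and $\Pi_\balpha$ is exactly the argmax set in \eqref{thm:LOP}.

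I do not expect a serious obstacle: the point of the theorem is precisely that, although a generic MAX-LOP is NP-hard, the particular weights $w_{ij}$ arising from a Dirichlet posterior induce a transitive preference via the lemma, which collapses the optimization to a sort. The only places needing care are the reduction of an arbitrary inversion to an adjacent one (the minimal-gap argument) and the observation that an adjacent swap alters exactly one term of the Kendall count; the sign computation for the lemma via the substitution $t\mapsto 1-t$ is the sole genuine calculation.
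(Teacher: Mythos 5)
Your proof is correct, and at the top level it follows the same strategy as the paper's: rewrite the objective as $\frac{2}{K(K-1)}\sum_{i\neq j}\mathbf{1}_{\{\pi(i)>\pi(j)\}}w_{ij}$ with $w_{ij}=\Pr(\theta_i>\theta_j)$, run an exchange argument to show every maximizer lies in $\Pi_{\balpha}$, and use $w_{ij}=w_{ji}=\tfrac{1}{2}$ when $\alpha_i=\alpha_j$ to establish constancy on $\Pi_{\balpha}$. The execution of the exchange step, however, differs in two substantive ways. The paper swaps an arbitrary inverted pair $(i,j)$ in place, which flips the relative order of $i$ and $j$ with every item ranked between them; controlling all of those terms requires the monotonicity of $I_{\frac{1}{2}}(a,b)$ separately in each argument, a fact the paper invokes without proof. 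You instead reduce to an \emph{adjacent} inverted pair via the minimal-gap argument, so the swap alters exactly one term of the Kendall count and you only need the sign of $w_{ij}-w_{ji}$, which you then actually prove by the $t\mapsto 1-t$ substitution applied to the $\B(\alpha_i,\alpha_j)$ density of $\theta_i/(\theta_i+\theta_j)$. Your route is therefore more self-contained (the one analytic fact it needs is derived rather than cited) and more elementary, at the price of the extra combinatorial step reducing a general inversion to an adjacent one; the paper's block swap is shorter on the page but leans on an unproved, though standard, two-variable monotonicity property of the regularized incomplete beta function. Your treatment of ties and of the nonemptiness of $\Pi_{\balpha}$, and the final argmax-set bookkeeping, all check out.
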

\begin{proof}
We first show that $\argmax_{\pi}\mathbb{E}\left[\tau(\pi, \pi^*)|\btheta \sim \Dir(\balpha)\right]\subset\Pi_{\balpha}$.
Suppose $\hat\pi$ is the optimal solution of \eqref{thm:LOP} where $\hat\pi(j)>\hat\pi(i)$ for a pair $i$ and $j$ with $\alpha_i>\alpha_j$. We put all items in a row with their ranks given by $\hat\pi$ decreasing from the left to the right and obtain a pattern like
$$
X\cdots Xj\underbrace{X\cdots X}_{S}iX\cdots X,
$$
where $X$ represents some item different from $i$ and $j$ and $S$ represents the set of items ranked between $i$ and $j$. We will show that the objective value of \eqref{thm:LOP} can be increased by switching the ranks of $i$ and $j$.

Recall that the expected accuracy of $\hat\pi$ can be represented as
\begin{eqnarray}
\label{tauvalue}
 \mathbb{E}\left[\tau(\hat\pi, \pi^*)|\btheta \sim \Dir(\balpha)\right]&=& \frac{2}{K(K-1)}\sum_{i'\neq j'}\mathbf{1}_{\hat\pi(i')>\hat\pi(j')}\text{Pr}\left(\theta_{i'}>\theta_{j'}|\btheta \sim \Dir(\balpha)\right)\\\nonumber
 &=&\frac{2}{K(K-1)}\left[ I_{\frac{1}{2}}(\alpha_i,\alpha_j)+ \sum_{s \in S} I_{\frac{1}{2}}(\alpha_s,\alpha_j) + \sum_{s \in S} I_{\frac{1}{2}}(\alpha_i,\alpha_s)+C\right],
\end{eqnarray}
where $C$ is the summation of the remaining terms like $I_{\frac{1}{2}}(\alpha_{i'},\alpha_{j'})$ which have either at least one of $i'$ and $j'$ not in $S\cup\{i,j\}$ or both $i'$ and $j'$ in $S$.

Note that switching the ranks of $i$ and $j$ does not change the values of the terms in $C$. In fact, after such a switch, we obtain a new ranking $\hat\pi'$ whose objective value in \eqref{thm:LOP} is
\begin{eqnarray*}
 \mathbb{E}\left[\tau(\hat\pi', \pi^*)|\btheta \sim \Dir(\balpha)\right]
 &=&\frac{2}{K(K-1)}\left[ I_{\frac{1}{2}}(\alpha_j,\alpha_i)+ \sum_{s \in S} I_{\frac{1}{2}}(\alpha_j,\alpha_s) + \sum_{s \in S} I_{\frac{1}{2}}(\alpha_s,\alpha_i)+C\right].
\end{eqnarray*}
Using the fact that $I_{\frac{1}{2}}(a,b)$ is monotonically decreasing in $a$ and monotonically increasing in $b$ and noticing that $\alpha_i> \alpha_j$, we have \begin{eqnarray*}
I_{\frac{1}{2}}(\alpha_j,\alpha_i)+ \sum_{s \in S} I_{\frac{1}{2}}(\alpha_j,\alpha_s) + \sum_{s \in S} I_{\frac{1}{2}}(\alpha_s,\alpha_i)>
 I_{\frac{1}{2}}(\alpha_i,\alpha_j)+ \sum_{s \in S} I_{\frac{1}{2}}(\alpha_s,\alpha_j) + \sum_{s \in S} I_{\frac{1}{2}}(\alpha_i,\alpha_s),
\end{eqnarray*}
which implies $\mathbb{E}\left[\tau(\hat\pi', \pi^*)|\btheta \sim \Dir(\balpha)\right]>\mathbb{E}\left[\tau(\hat\pi, \pi^*)|\btheta \sim \Dir(\balpha)\right]$, contradicting with the optimality of $\hat\pi$. Hence, we can have $\hat\pi(i)>\hat\pi(j)$ only if $\alpha_i\geq\alpha_j$, meaning that $\hat\pi\in\Pi_{\balpha}$.

We then show $\argmax_{\pi}\mathbb{E}\left[\tau(\pi, \pi^*)|\btheta \sim \Dir(\balpha)\right]=\Pi_{\balpha} $ by showing that $\mathbb{E}\left[\tau(\pi, \pi^*)|\btheta \sim \Dir(\balpha)\right]$ has the same value for any $\pi\in\Pi_{\balpha}$. Suppose $\hat\pi$ and $\hat\pi'$ both belong to $\Pi_{\balpha}$ and there exists a pair $i$ and $j$ with $i\neq j$ such that $\hat\pi(i)>\hat\pi(j)$ and $\hat\pi'(j)>\hat\pi'(i)$. By the definition of $\Pi_{\balpha}$, we have $\alpha_i=\alpha_j$ so that
$$
\text{Pr}\left(\theta_{i}>\theta_{j}|\btheta \sim \Dir(\balpha)\right)
=I_{\frac{1}{2}}(\alpha_j,\alpha_i)=\frac{1}{2}=I_{\frac{1}{2}}(\alpha_i,\alpha_j)=
\text{Pr}\left(\theta_{j}>\theta_{i}|\btheta \sim \Dir(\balpha)\right).
$$
This means
\begin{eqnarray*}
&&\mathbf{1}_{\hat\pi(i)>\hat\pi(j)}\text{Pr}\left(\theta_{i}>\theta_{j}|\btheta \sim \Dir(\balpha)\right)
+
\mathbf{1}_{\hat\pi(j)>\hat\pi(i)}\text{Pr}\left(\theta_{j}>\theta_{i}|\btheta \sim \Dir(\balpha)\right)\\
&=&\mathbf{1}_{\hat\pi'(i)>\hat\pi'(j)}\text{Pr}\left(\theta_{i}>\theta_{j}|\btheta \sim \Dir(\balpha)\right)
+
\mathbf{1}_{\hat\pi'(j)>\hat\pi'(i)}\text{Pr}\left(\theta_{j}>\theta_{i}|\btheta \sim \Dir(\balpha)\right)
\end{eqnarray*}
for any pair $i$ and $j$ so that $\mathbb{E}\left[\tau(\hat\pi, \pi^*)|\btheta \sim \Dir(\balpha)\right]=\mathbb{E}\left[\tau(\hat\pi', \pi^*)|\btheta \sim \Dir(\balpha)\right]$ by the formulation \eqref{tauvalue}, which completes the proof.
\end{proof}

Given a parameter vector $\balpha$, we denote any ranking in $\Pi_{\balpha}$ by $\pi_{\balpha}$.
Using moment matching and Theorem \ref{thmLOP}, we can approximate the stage-wise reward  $R(M^t,i,j,Y_{ij})$ by
\begin{eqnarray}
\nonumber
R(M^t,i,j,Y_{ij})&=&h(M^{t+1})-h(M^t)\\\nonumber
&=&\max_{\pi} \mathbb{E}\left[\tau(\pi, \pi^*)|M^{t+1},\balpha^0\right]-\max_{\pi} \mathbb{E}\left[\tau(\pi, \pi^*)|M^t,\balpha^0\right]\\\nonumber
&\approx&\max_{\pi} \mathbb{E}\left[\tau(\pi, \pi^*)|\btheta\sim\Dir(\hat\balpha)\right]-\max_{\pi} \mathbb{E}\left[\tau(\pi, \pi^*)|\btheta\sim\Dir(\balpha^t)\right]\\\nonumber
&=& \mathbb{E}\left[\tau(\pi_{\hat\balpha}, \pi^*)|\btheta\sim\Dir(\hat\balpha)\right]-\mathbb{E}\left[\tau(\pi_{\hat\balpha}, \pi^*)|\btheta\sim\Dir(\balpha^t)\right]\\\nonumber
&=& \frac{2}{K(K-1)}\Biggl(\sum_{i',j' \; : \; \pi_{\hat\balpha}(i')>\pi_{\hat\balpha}(j')}I_{\frac{1}{2}}(\hat\alpha_{j'},\hat\alpha_{i'})
-\sum_{i',j' \; :  \; \pi_{\balpha^t}(i')>\pi_{\balpha^t}(j')}I_{\frac{1}{2}}(\alpha^t_{j'},\alpha^t_{i'})\Biggr)\\\label{approxreward}
&\equiv&\tilde R(\balpha^t,i,j,Y_{ij})
\end{eqnarray}
where
$\hat\balpha=\textbf{MM}(\balpha^t,i,j,Y_{ij})$, the third equality is from Theorem~\ref{thmLOP} and the fourth equality is due to \eqref{approxprob}. Putting \eqref{KG}, \eqref{approxprob1}, \eqref{approxprob2}, and \eqref{approxreward} together, we can approximate the expected stage-wise reward $\mathbb{E}\left[R(M^t,i,j,Y_{ij})\big|M^t,\balpha^0\right]$ as
\begin{eqnarray}
\nonumber
&&\mathbb{E}\left[R(M^t,i,j,Y_{ij})\big|M^t,\balpha^0\right]\\\nonumber
&=&\mathbb{E}\left[\text{Pr}(Y_{ij}=1)|M^t,\balpha^0\right]R(\balpha^t,i,j,1)
+
\mathbb{E}\left[\text{Pr}(Y_{ij}=-1)|M^t,\balpha^0\right]R(M^t,i,j,-1)\\\label{AKG}
&\approx&\frac{\alpha^t_i}{\alpha^t_i+\alpha^t_j} \tilde R(\balpha^t,i,j,1)
+
\frac{\alpha^t_i}{\alpha^t_i+\alpha^t_j} \tilde R(\balpha^t,i,j,-1).
\end{eqnarray}
The proposed AKG policy  will choose the pair $(i_t,j_t)$ that maximizes the approximated expected stage-wise reward in \eqref{AKG}.  As a summary, we describe the AKG policy as Algorithm~\ref{alg:AKG}.


\begin{algorithm}[!t]
\caption{Approximated Knowledge Gradient Policy with Homogeneous Workers}
\begin{algorithmic}[1]\label{alg:AKG}
  \item[\textbf{Initialization:}] Choose $\balpha^0$ for the prior distribution. Let $M^0$ be a $K\times K$ all-zero matrix.

  \item[\textbf{For}] $t = 0,\dots,T-1$ \textbf{do}

  \STATE For each pair $(i,j)$ with $i<j$, compute $\tilde R(\balpha^t,i,j,1)$ and $\tilde R(\balpha^t,i,j,-1)$ according to \eqref{approxreward}.
  \STATE Select $(i_t,j_t)$ such that
    \begin{eqnarray}
    \label{eq:AKGpair}
  (i_t,j_t)\in\argmax_{i<j}\left[\frac{\alpha^t_i}{\alpha^t_i+\alpha^t_j} \tilde R(\balpha^t,i,j,1)
+
\frac{\alpha^t_i}{\alpha^t_i+\alpha^t_j} \tilde R(\balpha^t,i,j,-1)\right]
\end{eqnarray}
and present item $i_t$ and item $j_t$ to a randomly selected worker and receive the comparison result $Y_{i_tj_t}$.
  \STATE According to \eqref{defMM} and \eqref{MM1form}, compute
  \begin{eqnarray}
  \label{eq:AKGMM}
\balpha^{t+1}=\textbf{MM}(\balpha^t,i_t,j_t,Y_{i_tj_t})
\end{eqnarray}

  \item[\textbf{End For}]
  \item[\textbf{Return:}] The aggregated ranking $\pi_{\balpha^T}$ obtained by sorting the components of $\balpha^T$.
\end{algorithmic}
\end{algorithm}

%

It is noteworthy that it is  easy to implement a \emph{batch version} of Algorithm \ref{alg:AKG}. In fact, the AKG policy in  Algorithm \ref{alg:AKG} is known as an \emph{index policy} where the right-hand side of \eqref{eq:AKGpair}, which calculates the marginal improvement on the ranking accuracy, can be treated as the index for each pair of items. The AKG policy selects the pair with the highest index at each stage. In the batch version, instead of selecting only one pair, one heuristics is to select the top $B$ pairs and distribute to workers simultaneously, where $B$ is a pre-defined batch size. Such a batch implementation can reduce the waiting time of crowd workers and thus accelerate the ranking procedure. Moreover, the AKG policy can be combined with some other batch optimization techniques \citep{wu2016parallel} to determine the optimal set of pairs to evaluate next.

\section{Crowdsourced Ranking by Heterogeneous Workers}
\label{sec:hetero}

In the previous section, we considered the setting of homogeneous workers, where the comparison results are determined only by the intrinsic latent scores of items but not by the characteristics of workers. However, on crowdsourcing platforms, the quality of the workers varies a lot. Some workers are less reliable or lack of the domain knowledge;  some workers are spammers, who either do not actually take a look at the assigned pairs or are robots pretending to be human workers, and thus provide random comparison results in order to quickly receive payment; some workers may be poorly informed (or even malicious), misunderstand the ranking criteria and thus always flip the comparison results. To identify the reliability of a worker, one can assign the same pair of items to multiple workers and hope to identify the unreliable ones whose labels are often different from the majority. However, the abuse of this strategy will result in hiring too many workers and lead to a quick growth of the monetary cost. In order to maximize the accuracy of the final ranking under the limited amount of budget, it is critical to balance the budget spent on estimating the reliability of the workers and learning the true ranking of the items. To formalize such trade-off, we incorporate the reliability of each worker to our previous Bayesian MDP and generalize the AKG policy to the heterogeneity of workers.

\subsection{Model Setup}
Similar to the previous setting, we assume that each item $i$ has an unknown latent score $\theta_i> 0$ for $i=1,2,\dots,K$ which determines its true ranking $\pi^*$ (see \eqref{eq:true_rank}) and $\btheta\sim\Dir(\balpha^0)$. In the setting of heterogeneous workers, we assume that there are $M$ crowd workers in total, denoted by $w=1,2,\dots,M$. If a pair of items $i$ and $j$ with $i<j$ is presented to the worker $w$, we denote the returned comparison result by a random variable $Y_{ij}^w$ such that
\begin{eqnarray}
\label{Yijw}
Y_{ij}^w=\left\{
\begin{array}{rl}
1&\text{ if item }i\text{ is preferred to item }j\text{ by worker }w\\
-1&\text{ if item }j\text{ is preferred to item }i\text{ by worker }w.
\end{array}
\right.
\end{eqnarray}


To model the reliability for workers, we introduce $M$ latent parameters $\brho=(\rho_1,\rho_2,\dots,\rho_M)^T$ of reliability with $\rho_w \in [0,1]$ for worker $w$ and assume $Y_{ij}^w$ has the following distribution
\begin{eqnarray}
\label{probijw1}
\text{Pr}(Y_{ij}^w=1)
&=&\rho_w\frac{\theta_i}{\theta_i+\theta_j}+(1-\rho_w)\frac{\theta_j}{\theta_i+\theta_j}\\\label{probijw2}
\text{Pr}(Y_{ij}^w=-1)
&=&\rho_w\frac{\theta_j}{\theta_i+\theta_j}+(1-\rho_w)\frac{\theta_i}{\theta_i+\theta_j}
\end{eqnarray}
for $1\leq i<j\leq K$ and $w=1,2,\dots,M$. This model can be viewed as a combination of Dawid-Skene model for categorical labeling tasks \citep{Dawid:79, Vikas:10, Oh:12} and Bradley-Terry-Luce (BTL) model, which was first introduced in \cite{Chen:13}. Such a mixture of BTL model is flexible and capable of modeling various types of workers. When $\rho_w=1$, the distribution in \eqref{probijw1} and \eqref{probijw2} reduces to \eqref{probij}, and we refer to worker $w$ with $\rho_w=1$ as a ``fully reliable" worker\footnote{We note that the full reliability does not imply that the worker is capable of identifying the latent scores of items and always give the correct comparison result, i.e., preferring the item with a higher latent score.  Instead, being fully reliable only means the worker tries her best to provide the preference after a careful consideration, and the inconsistency of comparisons among workers is mainly because the intrinsic ambiguity of the pair of items.}. Therefore,  the reliability parameter $\rho_w$ can be interpreted as the probability that worker $w$ behaves as a random fully reliable workers in the previous section, namely, the one whose preference over a pair $i$ and $j$ follows a distribution in accordance with the BTL model~\eqref{probij}. The worker with $\rho_w$ closer to 1 is considered to be more reliable while a worker with $\rho_w$ closer to 0 tends to be a poorly informed (or malicious) one who intentionally gives answers oppositive to the majority (truth).  Also, a worker is known as a spammer if the associated $\rho_w$ is near $0.5$ since this worker prefers $i$ or $j$ in any pair $i$ and $j$ with an equal probability regardless of their latent scores.

The reliability of each worker is unknown for the ranking task, which needs to be gradually identified during the comparison process. In the Bayesian framework, since the reliability parameter $\rho_w$ is supported on $[0,1]$, it can be naturally modeled to follow a Beta prior distribution, i.e., $\rho_w\sim\B(\mu_w^0,\nu_w^0)$, for $w=1,2,\dots,M$, where $\bmu^0=(\mu_1^0,\mu_2^0,\dots,\mu_M^0)$ and $\bnu^0=(\nu_1^0,\nu_2^0,\dots,\nu_M^0)$ are positive parameters.

\subsection{Bayesian Decision Process}

In this section, we model the sequential decision problem with a finite budget of $T$ in the setting of heterogeneous workers. Since the workers now have different levels of reliability, we can no longer randomly select a worker from the crowd in each stage. Instead, we need to adaptively determine not only which pair of items to be compared but also who should perform this comparison task according to the historical results so that the budget can be gradually shifted towards more reliable workers.

Suppose a pair of items $(i_t,j_t)$ with $i_t<j_t$ is compared by a worker $w_t$ in stage $t$ and the comparison result is  $Y_{i_tj_t}^{w_t}$ defined in \eqref{Yijw}. The historical comparison results up to stage $t$ can be summarized by a $K\times K\times M$ tensor $\mathbf{M}^t$, which is updated iteratively as follows. In particular, at each stage $t$, we define $\boldsymbol{\Delta}^t$ to be a sparse $K \times K \times M$ tensor with only non-zero element: if $Y_{i_tj_t}^{w_t}=1$, $\boldsymbol{\Delta}^t_{i_tj_tw_t}=1$ and if $Y_{i_tj_t}^{w_t}=-1$, $\boldsymbol{\Delta}^t_{j_ti_tw_t}=1$. Let
\begin{equation}
\label{Mtw}
\mathbf{M}^0=\mathbf{0},\quad
\mathbf{M}^{t+1}=\mathbf{M}^t+ \boldsymbol{\Delta}^t \quad \text{ for }t=0,1,\dots,T-1,
\end{equation}
where $\mathbf{0}$ is a $K \times K \times M$ all-zero tensor. In contrast to the matrix $M^t$ in \eqref{Mt}, each element in the tensor $\mathbf{M}^t$ takes the value either zero or one because each worker is not allowed to compare the same pair more than once. The dynamic budget allocation policy is denoted by $\mathcal{A}=\{(i_t,j_t,w_t)\}_{t=0,1,\dots,T-1}$ where $(i_t,j_t,w_t)=(i_t(\mathbf{M}^t),j_t(\mathbf{M}^t),w_t(\mathbf{M}^t))$ depends on the previous comparison results through $\mathbf{M}^t$. The posterior distributions of $\btheta$ and $\brho$ in stage $t$ are denoted by $p(\btheta|\mathbf{M}^t,\balpha^0,\bmu^0,\bnu^0)$ and $p(\brho|\mathbf{M}^t,\balpha^0,\bmu^0,\bnu^0)$, respectively.

Similar to the homogeneous worker setup, we adopt the Kendall's tau (\ref{kendalltau}) to measure the ranking accuracy. At each stage $t$, we denote the maximum posterior expected ranking accuracy by (with a slight abuse of notation)
\begin{eqnarray}
\label{maxhw}
h(\mathbf{M}^t) & \equiv & \max_{\pi} \mathbb{E}\left[\tau(\pi, \pi^*)|\mathbf{M}^t,\balpha^0,\bmu^0,\bnu^0\right] \\
& = &
\max_{\pi}\frac{2\sum_{i\neq j}\mathbf{1}_{\pi(i)>\pi(j)}\text{Pr}\left(\theta_i>\theta_j|\mathbf{M}^t,\balpha^0,\bmu^0,\bnu^0\right)}{K(K-1)}.
\nonumber
\end{eqnarray}
The maximizer in \eqref{maxhw} is the optimal ranking inferred from the historical comparison results up to the stage $t$. Our goal is to search for the optimal policy $\mathcal{A}$ that maximizes the final expected ranking accuracy $h(\mathbf{M}^T)$, i.e.,
\begin{eqnarray}
\label{maxAw}
\max_{\mathcal{A}} \mathbb{E}^{\mathcal{A}}\left[h(\mathbf{M}^T)|\balpha^0,\bmu^0,\bnu^0\right].
\end{eqnarray}
This maximization problem can be further reformulated in a telescopic sum
\begin{eqnarray}
\label{maxAwsum}
h(\mathbf{M}^0)+\max_{\mathcal{A}}\mathbb{E}\left[\sum_{t=0}^{T-1}\mathbb{E}\left[R(\mathbf{M}^t,i_t,j_t,w_t,Y_{i_tj_t}^{w_t})\big|\mathbf{M}^t,\balpha^0,\bmu^0,\bnu^0\right]\bigg|\balpha^0,\bmu^0,\bnu^0\right],
\end{eqnarray}
where
\begin{eqnarray}
\label{Rw}
R(\mathbf{M}^t,i_t,j_t,w_t,Y_{i_tj_t}^{w_t})\equiv h(\mathbf{M}^{t+1})-h(\mathbf{M}^t),
\end{eqnarray}
is the \emph{stage-wise reward} depending on $\mathbf{M}^t$,$i_t$,$j_t$,$w_t$ and $Y_{i_tj_t}^{w_t}$. It can be interpreted as the improvement of the expected ranking accuracy after receiving the comparison result at stage $t$.  The \emph{state variable} of the MDP \eqref{maxAw} or \eqref{maxAwsum} is the tensor $\mathbf{M}^t$ which evolves according to \eqref{Mtw} and the state space at each $t$ is
\[
\mathcal{S}^t=\Bigl\{\mathbf{M} \in \{0,1\}^{K \times K \times M}: \sum_{i,j,w} \mathbf{M}_{ijw}=t\Bigr\}.
\]
The \emph{expected transition probabilities} of MDP \eqref{maxAw} are
\begin{align}
\label{Eprobij1w}
\mathbb{E}\left[\text{Pr}(Y_{ij}^w=1)|\mathbf{M}^t,\balpha^0,\bmu^0,\bnu^0\right] &=\mathbb{E}\left[ \rho_w\frac{\theta_i}{\theta_i+\theta_j}+(1-\rho_w)\frac{\theta_j}{\theta_i+\theta_j}\big|\mathbf{M}^t,\balpha^0,\bmu^0,\bnu^0\right]\\\label{Eprobij0w}
\mathbb{E}\left[\text{Pr}(Y_{ij}^w=-1)|\mathbf{M}^t,\balpha^0,\bmu^0,\bnu^0\right] &=\mathbb{E}\left[ \rho_w\frac{\theta_j}{\theta_i+\theta_j}+(1-\rho_w)\frac{\theta_i}{\theta_i+\theta_j}\big|\mathbf{M}^t,\balpha^0,\bmu^0,\bnu^0\right]
\end{align}
for $i,j=1,2,\dots,K$ and $w=1,2,\dots,M$. So far, we have modeled the sequential budget allocation in the heterogeneous worker setting as a Bayesian MDP. Due to the similar reasons that have been explained in Section \ref{sec:MDP}, although the dynamic programming can be directly applied to solve the Bayesian MDP and obtain the optimal policy, it is computationally intractable.   In fact, the Bayesian MDP \eqref{maxAwsum} is even more challenging to solve than that for the homogeneous worker setting due to a much larger state space after introducing the reliability of workers. In the next subsection, we will propose a computationally efficient approximated knowledge gradient policy for \eqref{maxAwsum}.


\subsection{Approximated Knowledge Gradient Policy}

To solve the Bayesian MDP \eqref{maxAwsum},  we still consider the family of knowledge gradient (KG) policies. In our problem, the KG policy will select the pair of items and the worker that together give the highest expected stage-wise reward. In particular, at the $t$-stage, the KG policy for \eqref{maxAwsum} will choose the pair $(i_t,j_t)$ and the worker $w_t$ such that
\begin{eqnarray}
\label{KGw}
(i_t,j_t,w_t)&\in&\argmax_{i<j,w}\mathbb{E}\left[R(\mathbf{M}^t,i,j,w,Y_{ij}^{w})\big|\mathbf{M}^t,\balpha^0,\bmu^0,\bnu^0\right]\\\nonumber
&=&\argmax_{i<j,w}\bigg\{
\mathbb{E}\left[\text{Pr}(Y_{ij}^w=1)|\mathbf{M}^t,\balpha^0,\bmu^0,\bnu^0\right] R(\mathbf{M}^t,i,j,w,1)\\\nonumber
&&\quad\quad\quad\quad+
\mathbb{E}\left[\text{Pr}(Y_{ij}^w=-1)|\mathbf{M}^t,\balpha^0,\bmu^0,\bnu^0\right]R(\mathbf{M}^t,i,j,w,-1)\bigg\}.
\end{eqnarray}

To implement the KG policy~\eqref{KGw}, we encounter the same difficulties as when we implemented~\eqref{KG}. Specifically, since the posterior distributions $p(\btheta|\mathbf{M}^t,\balpha^0,\bmu^0,\bnu^0)$ and $p(\brho|\mathbf{M}^t,\balpha^0,\bmu^0,\bnu^0)$ are sophisticated and the MAX-LOP problem \eqref{maxhw} is NP-hard, we cannot efficiently evaluate the stage-wise reward~\eqref{Rw} and the transition probabilities \eqref{Eprobij1w} and \eqref{Eprobij0w}. To obtain a computationally efficient policy, we follow the techniques in Section \ref{sec:AKG} 
to approximate the posterior distributions $p(\btheta|\mathbf{M}^t,\balpha^0,\bmu^0,\bnu^0)$ and $p(\rho_w|\mathbf{M}^t,\balpha^0,\bmu^0,\bnu^0)$ recursively using a sequence of Dirichlet distributions $\Dir(\balpha^t)$ and a sequence of beta distributions $\B(\mu^t_w,\nu^t_w)$, respectively, for $w=1,2,\dots,M$ and $t=1,2,\dots,T$. The parameters $\balpha^t(\alpha_1^t,\alpha_2^t,\dots,\alpha_K^t)$, $\bmu^t=(\mu_1^t,\mu_2^t,\dots,\mu_M^t)$ and $\bnu^t=(\nu_1^t,\nu_2^t,\dots,\nu_M^t)$ will be chosen recursively based on moment matching.

Suppose $\btheta\sim\Dir(\balpha)$ for some parameter vector $\balpha\in\mathbb{R}^K$ and $\rho_w\sim\B(\mu_w,\nu_w)$ for each $w$ with $\bmu=(\mu_1,\mu_2,\dots,\mu_M)$ and $\bnu=(\nu_1,\nu_2,\dots,\nu_M)$. We consider a basic scenario where only one comparison result $Y_{ij}^w$ from worker $w$ for a pair $(i,j)$ has been observed. We can approximate $p(\btheta|Y_{ij}^w,\balpha,\bmu,\bnu)$ by a Dirichlet distribution $\Dir(\balpha')$ and $p(\rho_w|Y_{ij}^w,\balpha,\bmu,\bnu)$ by a Beta distribution $\B(\mu'_w,\nu'_w)$ such that
\begin{eqnarray}
\label{eq:moment_matching1w}
\mathbb{E} \left[\theta_k |\btheta\sim\Dir(\balpha')\right]&=&\mathbb{E}[\theta_k|Y_{ij}^w,\balpha,\bmu,\bnu]\text{ for }k=1,2,\dots,K\\\label{eq:moment_matching2w}
\mathbb{E} \left[\sum_{k=1}^K\theta_k^2 |\btheta\sim\Dir(\balpha')\right] &=&\mathbb{E}\left[ \sum_{k=1}^K\theta_k^2| Y_{ij}^w,\balpha,\bmu,\bnu\right]\\\label{eq:moment_matching3w}
\mathbb{E} \left[\rho_w|\rho_w\sim\B(\mu'_w,\nu'_w)\right] &=&\mathbb{E}\left[ \rho_w| Y_{ij}^w,\balpha,\bmu,\bnu\right]\\\label{eq:moment_matching4w}
\mathbb{E} \left[\rho_w^2+(1-\rho_w)^2 |\rho_w\sim\B(\mu'_w,\nu'_w)\right] &=&\mathbb{E}\left[ \rho_w^2+(1-\rho_w)^2| Y_{ij}^w,\balpha,\bmu,\bnu\right].
\end{eqnarray}
Note that we do not need to approximate $p(\rho_{w'}|Y_{ij}^w,\balpha,\bmu,\bnu)$  for $w'\neq w$ since the worker $w'$ has not performed any comparison so that $p(\rho_{w'}|Y_{ij}^w,\balpha,\bmu,\bnu)$ is still the prior distribution $\B(\mu_{w'},\nu_{w'})$. This system of equations has the following explicit characterization.
\begin{proposition}
\label{propMMw}
Suppose $\btheta\sim\Dir(\balpha)$ and $\rho_w\sim\B(\mu_w,\nu_w)$ for worker $w$ and $Y_{ij}^w$ is the only comparison result. Let $\alpha_0=\sum_{k=1}^K\alpha_k$ and $\alpha'_0=\sum_{k=1}^K\alpha'_k$. The equations \eqref{eq:moment_matching1w}, \eqref{eq:moment_matching2w}, \eqref{eq:moment_matching3w} and \eqref{eq:moment_matching4w} can be represented as
\begin{eqnarray}
\label{MM1w}
\left\{
\begin{array}{rcl}
\frac{\alpha'_i}{\alpha'_0}&=&\eta_{ijw}\frac{(\alpha_i+1)(\alpha_i+\alpha_j)}{\alpha_0(\alpha_i+\alpha_j+1)}
+ (1-\eta_{ijw})\frac{\alpha_i(\alpha_i+\alpha_j)}{\alpha_0(\alpha_i+\alpha_j+1)}\\
\frac{\alpha'_j}{\alpha'_0}&=&\eta_{ijw}\frac{\alpha_j(\alpha_i+\alpha_j)}{\alpha_0(\alpha_i+\alpha_j+1)}
+ (1-\eta_{ijw})\frac{(\alpha_j+1)(\alpha_i+\alpha_j)}{\alpha_0(\alpha_i+\alpha_j+1)}\\
\frac{\alpha'_k}{\alpha'_0}&=& \frac{\alpha_k}{\alpha_0}\quad\text{ for }k\neq i,j\\
\sum_{k=1}^K\frac{\alpha'_k(\alpha'_k+1)}{\alpha'_0(\alpha'_0+1)}&=&  \eta_{ijw}\frac{(\alpha_i+1)(\alpha_i+2)(\alpha_i+\alpha_j)}{\alpha_0(\alpha_0+1)(\alpha_i+\alpha_j+2)}
+(1-\eta_{ijw})\frac{\alpha_i(\alpha_i+1)(\alpha_i+\alpha_j)}{\alpha_0(\alpha_0+1)(\alpha_i+\alpha_j+2)}\\
&&+\eta_{ijw}\frac{\alpha_j(\alpha_j+1)(\alpha_i+\alpha_j)}{\alpha_0(\alpha_0+1)(\alpha_i+\alpha_j+2)}
+(1-\eta_{ijw})\frac{(\alpha_j+1)(\alpha_j+2)(\alpha_i+\alpha_j)}{\alpha_0(\alpha_0+1)(\alpha_i+\alpha_j+2)}\\
&&+\sum_{k\neq i,j}\frac{\alpha_k(\alpha_k+1)}{\alpha_0(\alpha_0+1)}\\
\frac{\mu'_w}{\mu'_w+\nu'_w}&=&\eta_{ijw}\frac{\mu_w+(1+Y_{ij}^w)/2}{\mu_w+\nu_w+1}
+(1-\eta_{ijw})\frac{\mu_w+(1-Y_{ij}^w)/2}{\mu_w+\nu_w+1}\\
\frac{\mu'_w(\mu'_w+1)+\nu'_w(\nu'_w+1)}{(\mu'_w+\nu'_w)(\mu'_w+\nu'_w+1)}
&=&\eta_{ijw}\frac{(\mu_w+(1+Y_{ij}^w)/2)(\mu_w+(3+Y_{ij}^w)/2)}{(\mu_w+\nu_w+1)(\mu_w+\nu_w+2)}
\\ && +(1-\eta_{ijw})\frac{(\mu_w+(1-Y_{ij}^w)/2)(\mu_w+(3-Y_{ij}^w)/2)}{(\mu_w+\nu_w+1)(\mu_w+\nu_w+2)}
\\
&&+\eta_{ijw}\frac{(\nu_w+(1-Y_{ij}^w)/2)(\nu_w+(3-Y_{ij}^w)/2)}{(\mu_w+\nu_w+1)(\mu_w+\nu_w+2)}\\
&&
+(1-\eta_{ijw})\frac{(\nu_w+(1+Y_{ij}^w)/2)(\nu_w+(3+Y_{ij}^w)/2)}{(\mu_w+\nu_w+1)(\mu_w+\nu_w+2)}.
\end{array}
\right.
\end{eqnarray}
where $\eta_{ijw} = \frac{[(1+Y_{ij}^w)\mu_w+(1-Y_{ij}^w)\nu_w]\alpha_i}
{[(1+Y_{ij}^w)\mu_w+(1-Y_{ij}^w)\nu_w]\alpha_i+[(1+Y_{ij}^w)\nu_w+(1-Y_{ij}^w)\mu_w]\alpha_j}$.
\end{proposition}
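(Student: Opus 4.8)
The plan is to follow the derivation of Proposition~\ref{propMM} verbatim, the only genuinely new ingredient being that the reliability $\rho_w$ must be integrated out alongside $\btheta$. First I would write the likelihood of the single observation $Y_{ij}^w=y$ in the unified form
\[
\text{Pr}(Y_{ij}^w=y\mid\btheta,\rho_w)=\frac{a\,\theta_i+(1-a)\,\theta_j}{\theta_i+\theta_j},\qquad a=a(\rho_w,y):=\tfrac{1+y}{2}\rho_w+\tfrac{1-y}{2}(1-\rho_w),
\]
so that $a=\rho_w$ when $y=1$ and $a=1-\rho_w$ when $y=-1$; note $a$ depends only on $\rho_w$ and is affine in it, and this reproduces \eqref{probijw1}--\eqref{probijw2}. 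Because the prior $\Dir(\balpha)\otimes\B(\mu_w,\nu_w)$ makes $\btheta\perp\rho_w$, the \emph{unnormalized} posterior splits as the sum of the two pieces $\Dir(\balpha)\B(\mu_w,\nu_w)\,a\,\tfrac{\theta_i}{\theta_i+\theta_j}$ and $\Dir(\balpha)\B(\mu_w,\nu_w)\,(1-a)\,\tfrac{\theta_j}{\theta_i+\theta_j}$; I will call these the $i$-branch and the $j$-branch, which exhibits the posterior as a two-component mixture.

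Next I would compute the mixing weights. The mass of each piece factorizes, and invoking $\E[\rho_w]=\mu_w/(\mu_w+\nu_w)$ together with the Dirichlet neutrality/aggregation property — namely that $V:=\theta_i/(\theta_i+\theta_j)\sim\B(\alpha_i,\alpha_j)$ is independent of $U:=\theta_i+\theta_j$ and of every $\theta_k$ with $k\neq i,j$ — the $i$-branch has mass $\E[a]\,\E[V]$ and the $j$-branch mass $\E[1-a]\,\E[1-V]$. Normalizing and substituting $\E[V]=\alpha_i/(\alpha_i+\alpha_j)$ gives the $i$-branch probability $\tfrac{\E[a]\alpha_i}{\E[a]\alpha_i+\E[1-a]\alpha_j}$, which, after plugging in $a=\rho_w$ or $a=1-\rho_w$, is exactly $\eta_{ijw}$ from the statement (and $1-\eta_{ijw}$ for the $j$-branch). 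This is the step that explains the appearance of $\eta_{ijw}$: it is the posterior probability of the $i$-branch, playing the role that the deterministic $\tfrac{1+Y_{ij}}{2}$ played in Proposition~\ref{propMM}, where $\rho_w\equiv1$ collapses the mixture to a single branch.

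Then, conditionally on each branch, $\btheta$ and $\rho_w$ remain independent, so their posterior moments can be computed separately. For $\btheta$ in the $i$-branch the density is $\propto\Dir(\balpha)\cdot V$; writing $\theta_i=UV$, $\theta_j=U(1-V)$ and using $U\perp V$ with $V\sim\B(\alpha_i,\alpha_j)$ and the first two moments of $U\sim\B(\alpha_i+\alpha_j,\alpha_0-\alpha_i-\alpha_j)$, every needed expectation $\E[\theta_k\mid\text{branch}]$ and $\E[\theta_k^2\mid\text{branch}]$ reduces to a product of Beta moments (e.g.\ $\E[\theta_i\mid i\text{-branch}]=\E[UV^2]/\E[V]=\tfrac{(\alpha_i+1)(\alpha_i+\alpha_j)}{\alpha_0(\alpha_i+\alpha_j+1)}$), and for $k\neq i,j$ the factor $V$ cancels, so the branch leaves $\theta_k$'s moments equal to their prior values. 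For $\rho_w$, the extra factor $a$ turns $\B(\mu_w,\nu_w)$ into $\B(\mu_w+1,\nu_w)$ when $a=\rho_w$ and into $\B(\mu_w,\nu_w+1)$ when $a=1-\rho_w$ (and $1-a$ does the mirror thing in the other branch), so $\E[\rho_w\mid\text{branch}]$ and $\E[\rho_w^2+(1-\rho_w)^2\mid\text{branch}]$ are the standard first and second Beta moments of these shifted parameters. Averaging each branch moment against the weights $\eta_{ijw},1-\eta_{ijw}$ produces precisely the right-hand sides of the four posterior-moment identities; equating them to the generic Dirichlet moments $\alpha'_k/\alpha'_0$ and $\sum_k\alpha'_k(\alpha'_k+1)/(\alpha'_0(\alpha'_0+1))$ and the generic Beta moments in $(\mu'_w,\nu'_w)$ yields \eqref{MM1w}, and unifying the $y=\pm1$ cases via the $\tfrac{1\pm Y_{ij}^w}{2}$ notation is the final cosmetic step.

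I expect the only real obstacles to be (i) correctly tracking which Beta shift ($\mu_w+1$ versus $\nu_w+1$) pairs with which branch under each sign of $Y_{ij}^w$ — the place where a sign slip is easiest — and (ii) the bookkeeping that collapses weighted sums of products of Beta moments into the compact affine-in-$\eta_{ijw}$ form displayed in \eqref{MM1w}; the Dirichlet neutrality property does all of the conceptual work, so beyond these routine simplifications there is no genuine analytic difficulty.
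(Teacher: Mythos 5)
Your proposal is correct: every branch-conditional moment you describe reproduces the corresponding right-hand side of \eqref{MM1w} (e.g.\ $\E[U]\E[V^2]/\E[V]=\tfrac{(\alpha_i+1)(\alpha_i+\alpha_j)}{\alpha_0(\alpha_i+\alpha_j+1)}$ matches \eqref{prop4eq2}, the branch weight $\E[a]\alpha_i/(\E[a]\alpha_i+\E[1-a]\alpha_j)$ is exactly $\eta_{ijw}$ for both signs of $Y_{ij}^w$, and the $\B(\mu_w+1,\nu_w)$ versus $\B(\mu_w,\nu_w+1)$ shifts give \eqref{prop4eq8}--\eqref{prop4eq9}). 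The route differs from the paper's in organization rather than substance. The paper fixes $Y_{ij}^w=1$, writes the joint posterior $p(\btheta,\rho_w\mid Y_{ij}^w=1,\balpha,\mu_w,\nu_w)$ explicitly via Bayes' rule, and evaluates each posterior moment as a ratio of integrals, marginalizing $\rho_w$ first and then absorbing the extra powers of $\theta_i$ into a shifted Dirichlet parameter vector $\bbeta$ (reusing \eqref{prop1eq1}--\eqref{prop1eq2} and the identity $\Gamma(x+1)=x\Gamma(x)$); the quantity $\eta_{ijw}$ emerges only at the end as a common factor. You instead exhibit the posterior up front as a two-component mixture with mixing weight $\eta_{ijw}$, observe that $\btheta$ and $\rho_w$ remain independent within each branch, and reduce all branch-conditional $\btheta$-moments to products of Beta moments via the aggregation/neutrality property ($V=\theta_i/(\theta_i+\theta_j)\sim\B(\alpha_i,\alpha_j)$ independent of $U=\theta_i+\theta_j$ and of $\theta_k$, $k\neq i,j$). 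What your version buys is a conceptual interpretation of $\eta_{ijw}$ as a posterior branch probability, a unified treatment of $Y_{ij}^w=\pm1$ through the affine reparametrization $a(\rho_w,y)$ (the paper only proves the $+1$ case and asserts symmetry), and an immediate explanation of why the moments of $\theta_k$ for $k\neq i,j$ are untouched; what it costs is reliance on the neutrality property, which the paper avoids by doing the integrals directly. Both yield identical formulas, so this is a legitimate and arguably cleaner proof of the same statement.
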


The proof of Proposition \ref{propMMw} is given in Appendix. We denote any $\balpha'$, $\mu'_w$ and $\nu'_w$ that satisfy \eqref{eq:moment_matching1w}, \eqref{eq:moment_matching2w}, \eqref{eq:moment_matching3w} and \eqref{eq:moment_matching4w}, and thus \eqref{MM1w}, by
\begin{eqnarray}
\label{defMM1w}
\balpha'=\textbf{MM}_{\alpha}(\balpha,i,j,w,Y_{ij}^w)\quad\text{ and } \quad (\mu'_w,\nu'_w)=\textbf{MM}_{\mu\nu}(\balpha,i,j,w,Y_{ij}^w).
\end{eqnarray}
Although the equations in Proposition \ref{propMMw} are more complicated than those in Proposition \ref{propMM}, the right-hand sides of \eqref{MM1w} are still constants for any given $i$, $j$, $w$, $Y_{ij}^w$, $\balpha$, $\mu_w$ and $\nu_w$ so that both $\balpha'=\textbf{MM}_{\alpha}(\balpha,i,j,w,Y_{ij}^w)$ and $(\mu'_w,\nu'_w)=\textbf{MM}_{\mu\nu}(\balpha,i,j,w,Y_{ij}^w)$ can be solved in a closed form.  In fact, we denote the constants on the right-hand sides of \eqref{MM1} as $C_i$, $C_j$, $C_k$ (for $k\neq i ,j$), $D$, $E$ and $F$, respectively. It is easy to see that $\sum_{k=1}^KC_k=1$.
By the same derivation for \eqref{MM1form}, we obtain the following closed form for $\balpha'=\textbf{MM}_{\alpha}(\balpha,i,j,w,Y_{ij}^w)$
\begin{eqnarray}
\label{MM1wform1}
\alpha'_0=\frac{D-1}{\sum_{k=1}^KC_k^2-D}\quad\text{ and }\quad \alpha'_k=C_k\alpha'_0\text{ for }k=1,2,\dots,K,
\end{eqnarray}
which takes the same form as \eqref{MM1form} but with the constants $C_k$ for $k=1,2,\dots,K$ defined differently (which involve the information of worker $w$, i.e., $\mu_w$ and $\nu_w$). Similarly, solving $\mu'_w$ and $\nu'_w$ from the last two equations in \eqref{MM1w}, we obtain the following closed form for $(\mu'_w,\nu'_w)=\textbf{MM}_{\mu\nu}(\balpha,i,j,w,Y_{ij}^w)$
\begin{eqnarray}
\label{MM1wform2}
\mu'_w=\frac{(F-1)E}{E^2+(1-E)^2-F}\quad\text{ and }\quad \nu'_w=\frac{(F-1)(1-E)}{E^2+(1-E)^2-F}.
\end{eqnarray}

Although the approximate scheme above is derived when there is only one comparison result, it generates a Dirichlet distribution $\Dir(\balpha')$ for $\btheta$ and a Beta distribution $\B(\mu'_w,\nu'_w)$ for $\rho_w$ and does not change the Beta distribution $\B(\mu_{w'},\nu_{w'})$ for $w'\neq w$. The fact that the approximated posteriors take the same form as the priors suggests that we can apply this approximation scheme iteratively to approximate $p(\btheta|\mathbf{M}^t,\balpha^0,\bmu^0,\bnu^0)$ and $p(\brho|\mathbf{M}^t,\balpha^0,\bmu^0,\bnu^0)$ for any given policy $\mathcal{A}=\{(i_t,j_t,w_t)\}_{t=0,1,\dots,T-1}$. In particular, let $\balpha^t$, $\bmu^t$ and $\bnu^t$ be the sequences of parameters generated recursively as follows
\begin{eqnarray}
\label{alphatw}
\balpha^{t+1}&=&\textbf{MM}_{\alpha}(\balpha^t,i_t,j_t,w_t,Y_{i_tj_t}^{w_t})\\\label{munutw}
(\mu^{t+1}_w,\nu^{t+1}_w)&=&\left\{
\begin{array}{ll}
\textbf{MM}_{\mu\nu}(\balpha^t,i_t,j_t,w_t,Y_{i_tj_t}^{w_t})&\text{ if }w=w_t\\
(\mu^{t}_w,\nu^{t}_w)&\text{ if }w\neq w_t
\end{array}
\right.
\end{eqnarray}
for $t=1,2,\dots,T$.
The posterior distributions $p(\btheta|\mathbf{M}^t,\balpha^0,\bmu^0,\bnu^0)$ and $p(\brho|\mathbf{M}^t,\balpha^0,\bmu^0,\bnu^0)$ can be approximated by $\Dir(\balpha^t)$ and $\Pi_{w=1,\ldots, M}\B(\mu^t_w,\nu^t_w)$, respectively.

Following the same strategy as in \eqref{approxprob1} and \eqref{approxprob2}, we can approximate \eqref{Eprobij1w} and \eqref{Eprobij0w} as
\begin{eqnarray}
\label{approxprob1w}
&&\mathbb{E}\left[\text{Pr}(Y_{ij}^w=1)\big|\mathbf{M}^t,\balpha^0,\bmu^0,\bnu^0\right]\\\nonumber
&\approx&\mathbb{E}\left[ \rho_w\frac{\theta_i}{\theta_i+\theta_j}+(1-\rho_w)\frac{\theta_j}{\theta_i+\theta_j}\big|\btheta\sim\Dir(\balpha^t),\rho_w\sim\B(\mu^t_w,\nu^t_w)\right]\\\nonumber
&=&\frac{\mu^t_w}{\mu^t_w+\nu^t_w}\frac{\alpha^t_i}{\alpha^t_i+\alpha^t_j}+\frac{\nu^t_w}{\mu^t_w+\nu^t_w}\frac{\alpha^t_j}{\alpha^t_i+\alpha^t_j}
\end{eqnarray}
and
\begin{eqnarray}
\label{approxprob2w}
&&\mathbb{E}\left[\text{Pr}(Y_{ij}^w=-1)\big|\mathbf{M}^t,\balpha^0,\bmu^0,\bnu^0\right]\\\nonumber
&\approx&\mathbb{E}\left[ \rho_w\frac{\theta_j}{\theta_i+\theta_j}+(1-\rho_w)\frac{\theta_i}{\theta_i+\theta_j}\big|\btheta\sim\Dir(\balpha^t),\rho_w\sim\B(\mu^t_w,\nu^t_w)\right]\\\nonumber
&=&\frac{\mu^t_w}{\mu^t_w+\nu^t_w}\frac{\alpha^t_j}{\alpha^t_i+\alpha^t_j}+\frac{\nu^t_w}{\mu^t_w+\nu^t_w}\frac{\alpha^t_i}{\alpha^t_i+\alpha^t_j}
\end{eqnarray}
and approximate
$\text{Pr}\left(\theta_i>\theta_j|\mathbf{M}^t,\balpha^0,\bmu^0,\bnu^0\right)$ in \eqref{maxhw} as
\begin{eqnarray}
\label{approxprobw}
\text{Pr}\left(\theta_i>\theta_j|\mathbf{M}^t,\balpha^0,\bmu^0,\bnu^0\right)
\approx\text{Pr}\left(\theta_i>\theta_j|\btheta\sim\Dir(\balpha^t)\right)
=I_{\frac{1}{2}}(\alpha^t_j,\alpha^t_i).
\end{eqnarray}
The approximation \eqref{approxprobw} helps to simplify the NP-hard MAX-LOP in \eqref{maxhw} as
$$
\max_{\pi} \mathbb{E}\left[\tau(\pi, \pi^*)|\mathbf{M}^t,\balpha^0,\bmu^0,\bnu^0\right]\approx
\max_{\pi} \mathbb{E}\left[\tau(\pi, \pi^*)|\btheta\sim\Dir(\balpha^t)\right],
$$
where the right-hand side can be solved easily by sorting of the components of $\balpha^t$ according to
Theorem \ref{thmLOP}.

Similar to \eqref{approxreward}, the stage-wise reward is approximated as
\begin{eqnarray}
\nonumber
R(\mathbf{M}^t,i,j,w,Y_{ij}^w)&=&\max_{\pi} \mathbb{E}\left[\tau(\pi, \pi^*)|\mathbf{M}^{t+1},\balpha^0,\bmu^0,\bnu^0\right]-\max_{\pi} \mathbb{E}\left[\tau(\pi, \pi^*)|\mathbf{M}^t,\balpha^0,\bmu^0,\bnu^0\right]\\\nonumber
&\approx&\max_{\pi} \mathbb{E}\left[\tau(\pi, \pi^*)|\btheta\sim\Dir(\hat\balpha)\right]-\max_{\pi} \mathbb{E}\left[\tau(\pi, \pi^*)|\btheta\sim\Dir(\balpha^t)\right]\\\nonumber
&=& \frac{2}{K(K-1)}\left(\sum_{\pi_{\hat\balpha}(i')>\pi_{\hat\balpha}(j')}I_{\frac{1}{2}}(\hat\alpha_{j'},\hat\alpha_{i'})
-\sum_{\pi_{\balpha^t}(i')>\pi_{\balpha^t}(j')}I_{\frac{1}{2}}(\alpha^t_{j'},\alpha^t_{i'})\right)\\\label{approxrewardw}
&\equiv&\tilde R(\balpha^t,i,j,w,Y_{ij}^w)
\end{eqnarray}
where
$\hat\balpha=\textbf{MM}_{\alpha}(\balpha^t,i,j,w,Y_{ij}^w)$. Putting \eqref{approxprob1w}, \eqref{approxprob2w}, \eqref{approxprobw} and \eqref{approxrewardw} together, we can approximate the expected stage-wise reward $\mathbb{E}\left[R(\mathbf{M}^t,i,j,w,Y_{ij}^w)\big|M^t,\balpha^0,\bmu^0,\bnu^0\right]$ as
\begin{eqnarray}
\nonumber
&&\mathbb{E}\left[R(\mathbf{M}^t,i,j,w,Y_{ij}^w)\big|M^t,\balpha^0,\bmu^0,\bnu^0\right]\\\nonumber
&=&\mathbb{E}\left[\text{Pr}(Y_{ij}^w=1)\big|\mathbf{M}^t,\balpha^0,\bmu^0,\bnu^0\right] R(\balpha^t,i,j,w,1) \nonumber \\
&& +
\mathbb{E}\left[\text{Pr}(Y_{ij}^w=-1)\big|\mathbf{M}^t,\balpha^0,\bmu^0,\bnu^0\right]R(\balpha^t,i,j,w,-1)\nonumber \\
&\approx&\left(\frac{\mu^t_w}{\mu^t_w+\nu^t_w}\frac{\alpha^t_i}{\alpha^t_i+\alpha^t_j}+\frac{\nu^t_w}{\mu^t_w+\nu^t_w}\frac{\alpha^t_j}{\alpha^t_i+\alpha^t_j}\right) \tilde R(\balpha^t,i,j,w,1) \nonumber \\
&&+
\left(\frac{\mu^t_w}{\mu^t_w+\nu^t_w}\frac{\alpha^t_j}{\alpha^t_i+\alpha^t_j}+\frac{\nu^t_w}{\mu^t_w+\nu^t_w}\frac{\alpha^t_i}{\alpha^t_i+\alpha^t_j}\right) \tilde R(\balpha^t,i,j,w,-1). \label{AKGw}
\end{eqnarray}
When the workers have various levels of reliability, our AKG policy will choose the pair $(i_t,j_t)$ and present it to worker $w_t$ so that \eqref{AKGw} is maximized. The AKG policy for the setting of heterogeneous workers is formally presented as Algorithm~\ref{alg:AKGw}. Note that when $\rho_w = 1$ for all $w$, we do not need to solve \eqref{eq:moment_matching3w} and \eqref{eq:moment_matching4w} anymore and thus the rest of the problem reduces to the homogeneous setting.

\begin{algorithm}[!t]
\caption{Approximated Knowledge Gradient Policy with Heterogeneous Workers}
\begin{algorithmic}[1]\label{alg:AKGw}
  \item[\textbf{Initialization:}] Choose $\balpha^0$, $\bmu^0$ and $\bnu^0$ for the prior distributions.

  \item[\textbf{For}] $t = 0,\dots,T-1$ \textbf{do}

  \STATE For each pair $(i,j)$ with $i<j$, compute $\tilde R(\balpha^t,i,j,w,1)$ and $\tilde R(\balpha^t,i,j,w,-1)$ according to \eqref{approxrewardw}.
  \STATE Select $(i_t,j_t,w_t)$ such that
  \small
   \begin{eqnarray}\label{eq:single_stage_AGKw}
  (i_t,j_t)&\in&\argmax_{i<j, \;w}\Bigg[\left(\frac{\mu^t_w}{\mu^t_w+\nu^t_w}\frac{\alpha^t_i}{\alpha^t_i+\alpha^t_j}+\frac{\nu^t_w}{\mu^t_w+\nu^t_w}\frac{\alpha^t_j}{\alpha^t_i+\alpha^t_j}\right) \tilde R(\balpha^t,i,j,w,1)\\
  &&\quad\quad\quad\quad\quad+\left(\frac{\mu^t_w}{\mu^t_w+\nu^t_w}\frac{\alpha^t_j}{\alpha^t_i+\alpha^t_j}+\frac{\nu^t_w}{\mu^t_w+\nu^t_w}\frac{\alpha^t_i}{\alpha^t_i+\alpha^t_j}\right) \tilde R(\balpha^t,i,j,w,-1) \nonumber
  \Bigg]
 \end{eqnarray}
\normalsize
and present item $i_t$ and item $j_t$ to worker $w_t$ and receive the comparison result $Y_{i_tj_t}^{w_t}$.
  \STATE According to \eqref{defMM1w}, \eqref{MM1wform1} and \eqref{MM1wform2}, compute
  \small
  \begin{eqnarray}
  \label{eq:AKGworkeralpha}
\balpha^{t+1}&=&\textbf{MM}_{\alpha}(\balpha^t,i_t,j_t,w_t,Y_{i_tj_t}^{w_t})\\
\label{eq:AKGworkeruv}
(\mu^{t+1}_w,\nu^{t+1}_w)&=&\left\{
\begin{array}{ll}
\textbf{MM}_{\mu\nu}(\balpha^t,i_t,j_t,w_t,Y_{i_tj_t}^{w_t})&\text{ if }w=w_t\\
(\mu^{t}_w,\nu^{t}_w)&\text{ if }w\neq w_t
\end{array}
\right.
\end{eqnarray}
\normalsize
  \item[\textbf{End For}]
  \item[\textbf{Return:}] The aggregated ranking $\pi_{\balpha^T}$ obtained by sorting the components of $\balpha^T$.
\end{algorithmic}
\end{algorithm}



\section{Experiment}
\label{sec:exp}

In this section, we conduct empirical studies using both simulated and real data. We compare the proposed AKG algorithms to some existing methods in terms of ranking accuracy versus different levels of budget as well as computation time. We also show some interesting properties of the proposed AKG policies, e.g., how budget will be allocated over pairs of items with different levels of ambiguity and workers with different levels of reliability. The ranking accuracy is evaluated using the Kendall's tau as defined in \eqref{kendalltau}.

\subsection{Simulated Study under the Homogeneous Workers Setting}

In this section, we assume that all workers are fully reliable and investigate the performance of the AKG policy (Algorithm \ref{alg:AKG}). Two scenarios are designed: 10 items with a total budget of 100, and 100 items with a total budget of 1000. Each scenario consists of 100 independent trials and the average ranking accuracy is reported. 
For each trial, the latent item scores $\btheta$ is sampled uniformly from the simplex in \eqref{eq:simplex}, which determines the true ranking $\pi^*$. Given $\btheta$, the comparison results are generated according to the Bradley-Terry-Luce model~\eqref{probij}. We compare several different methods, including the proposed AKG, random sampling (uniformly random sampling), distance-based sampling, adaptive polling~\citep{Pfeiffer:12} and rank centrality with uniform sampling or knowledge gradient sampling~\citep{nos12}. The details of the methods are provided as follows. 

\begin{enumerate}
\item \textbf{AKG} (see Algorithm \ref{alg:AKG}): We set the prior of $\btheta$ to be the  uniform distribution on the simplex (i.e., $\balpha^0$ is set to be an all-one vector).

\item \textbf{Random Sampling}: The random sampling algorithm is similar to Algorithm \ref{alg:AKG} in terms of the posterior approximation (by moment matching) and rank inference (by sorting the approximated posterior parameters $\balpha^t$) after receiving each label. The only difference is that this algorithm replaces Step 2 of Algorithm \ref{alg:AKG} by a random sampling policy, which selects $(i_t,j_t)$ randomly at each stage. We also choose the uniform distribution on the simplex as the prior.

\item \textbf{Distance-Based Sampling}: This algorithm is also the same as Algorithm \ref{alg:AKG} in terms of the posterior approximation. However, in the sampling phase, this algorithm simply selects the pair of items $(i_t,j_t)$ with the closest posterior parameters $\alpha^t_i$ and $\alpha^t_j$. We choose the uniform distribution on the simplex as the prior.

\item \textbf{Adaptive Polling}: This is a greedy policy proposed by \cite{Pfeiffer:12}, which chooses the pair of items to maximize the KL-divergence between the posterior and prior. The initial $K  \times K$ matrix $M$ used in adaptive polling is set to 0 on the diagonal and 0.15 everywhere else.


\item \textbf{Rank Centrality}: This is a static rank aggregation algorithm recently proposed by \cite{nos12}. We combine it with both the random sampling policy and the knowledge gradient policy. Specifically, for \textbf{Centrality + RS}, we randomly select a pair of items at each stage and infer the true ranking using rank centrality. For \textbf{Centrality + KG}, we select the next pair of items using AKG policy, but estimate the ranking using rank centrality.

\end{enumerate}

It is worthwhile to point out that we are able to compute the optimal policy exactly only up to the 4-item case, which is not interesting from the ranking perspective and thus is left out from the experiment.

\begin{figure*}[!t]
    \centering
    \begin{subfigure}[h]{0.5\textwidth}
        \centering
        \includegraphics[width=\textwidth]{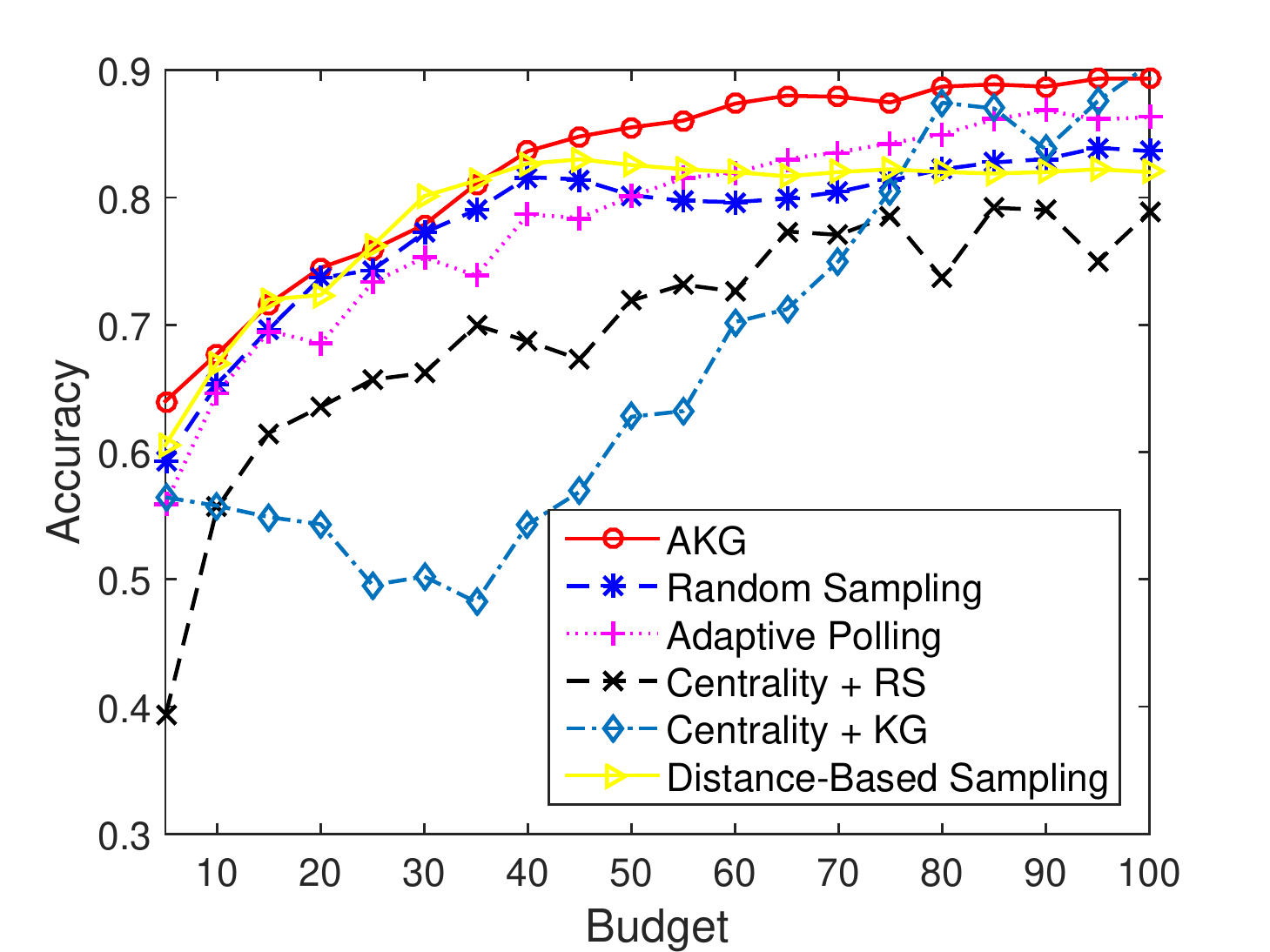}
        \caption{10 Items}
    \end{subfigure}%
	~
    \begin{subfigure}[h]{0.5\textwidth}
        \centering
        \includegraphics[width=\textwidth]{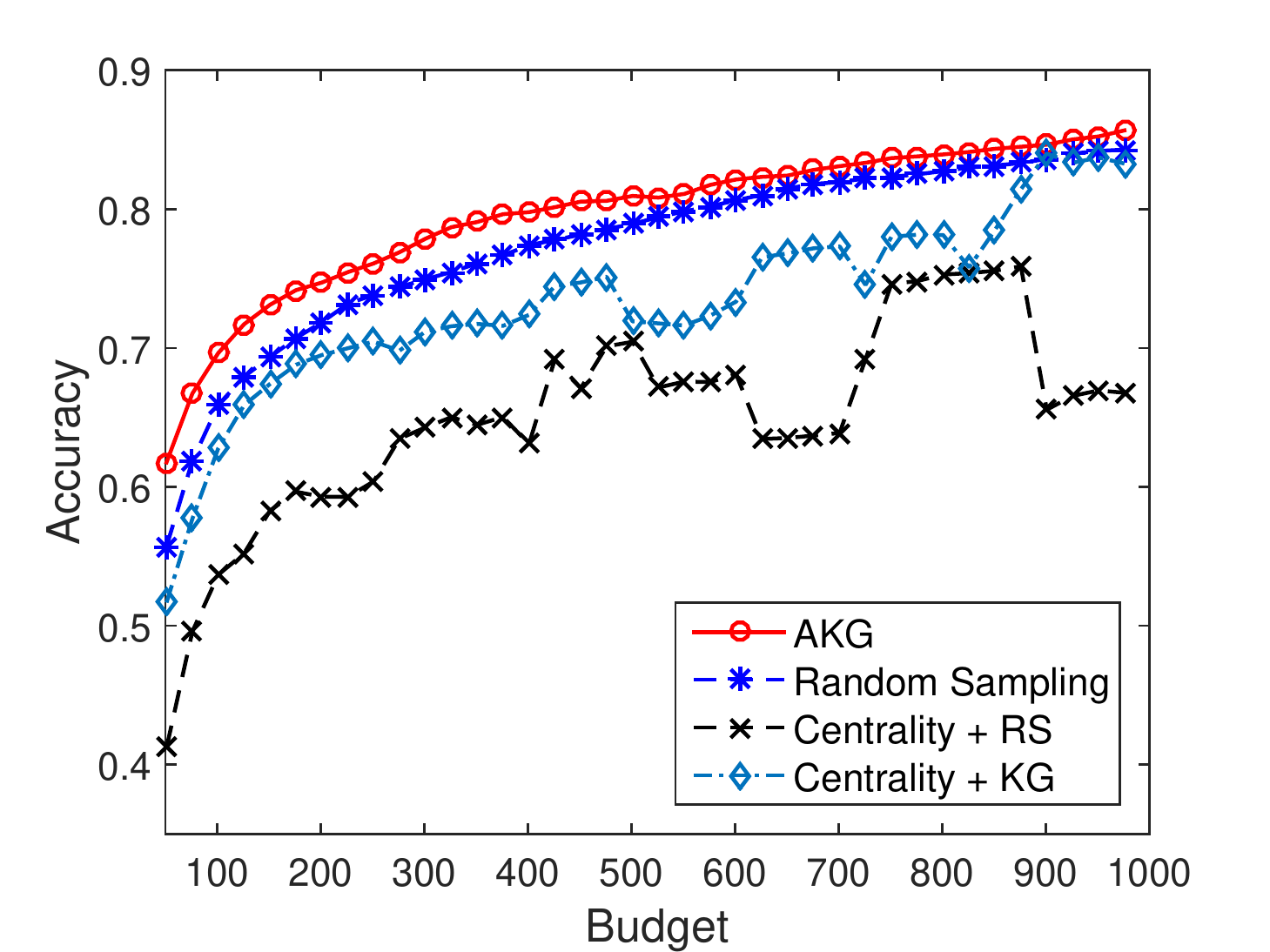}
        \caption{100 Items}
        \label{fig:homo_100}
    \end{subfigure}
    \caption{Performance comparison under the homogeneous workers setting. The $x$-axis is the budget level and $y$-axis is the averaged ranking accuracy.}
    \label{fig:homogeneous}
\end{figure*}

As we can see from Figure \ref{fig:homogeneous}, the AKG policy has higher accuracy than other methods at all budget levels. Note that the average accuracy of AKG surpasses the level of 70\% with only 20 pairs in the case of 10 items. In general, random sampling has similar performance as AKG at the beginning, but eventually AKG will outperform random sampling as it will spend more budget on the ambiguous pairs. This will be verified in the next experiment. Meanwhile, if we combine rank centrality with knowledge gradient sampling, the performance of the algorithm can be boosted significantly. Furthermore, the curves of ranking accuracy of AKG are in general monotonically increasing and have fewer ``bumps'' than other algorithms. This implies that the sequence of posterior parameters $\balpha^t$ is quite stable when the budget level becomes larger. We also note that due to the high computational cost of adaptive polling, it takes extremely long time when the number of items is 100 and thus we omit its performance in Figure \ref{fig:homo_100}.

It is worthwhile to note that AKG runs significantly faster than the adaptive polling method. It enjoys the advantage of closed-form updating rule during each iteration/stage without using a numerical algorithm as a subroutine, which is a good feature for online applications.  In contrast, adaptive polling is much slower because it requires inverting a $K \times K$ matrix for all $O(K^2)$ possible pairs and all possible comparison results in each iteration. Table \ref{tab:comp} gives the computation time of a \emph{single iteration} for both AKG and adaptive polling. Note that in the 25-item case, the computation time for adaptive polling of a single iteration has already exceeded 40 minutes. Therefore, we omit to present the computation time of adaptive polling when the number of items is 100 in Table \ref{tab:comp} since each iteration/stage would take hours to run.

\begin{table}[!t]
\centering
    \caption{Comparison in computation time under the homogeneous workers setting.}
    \begin{tabular}{ | c | c | c |}
    \hline
    No. of Items & AKG & Adaptive Polling \\ \hline
    10 & 0.023 sec & 20 sec \\ \hline
    25 & 0.75 sec &  42 min \\ \hline
    100 & 22 sec & - \\ \hline
    \end{tabular}
    \label{tab:comp}
\end{table}

\begin{figure*}[!t]
    \centering
    \begin{subfigure}[h]{0.5\textwidth}
        \centering
        \includegraphics[width=\textwidth]{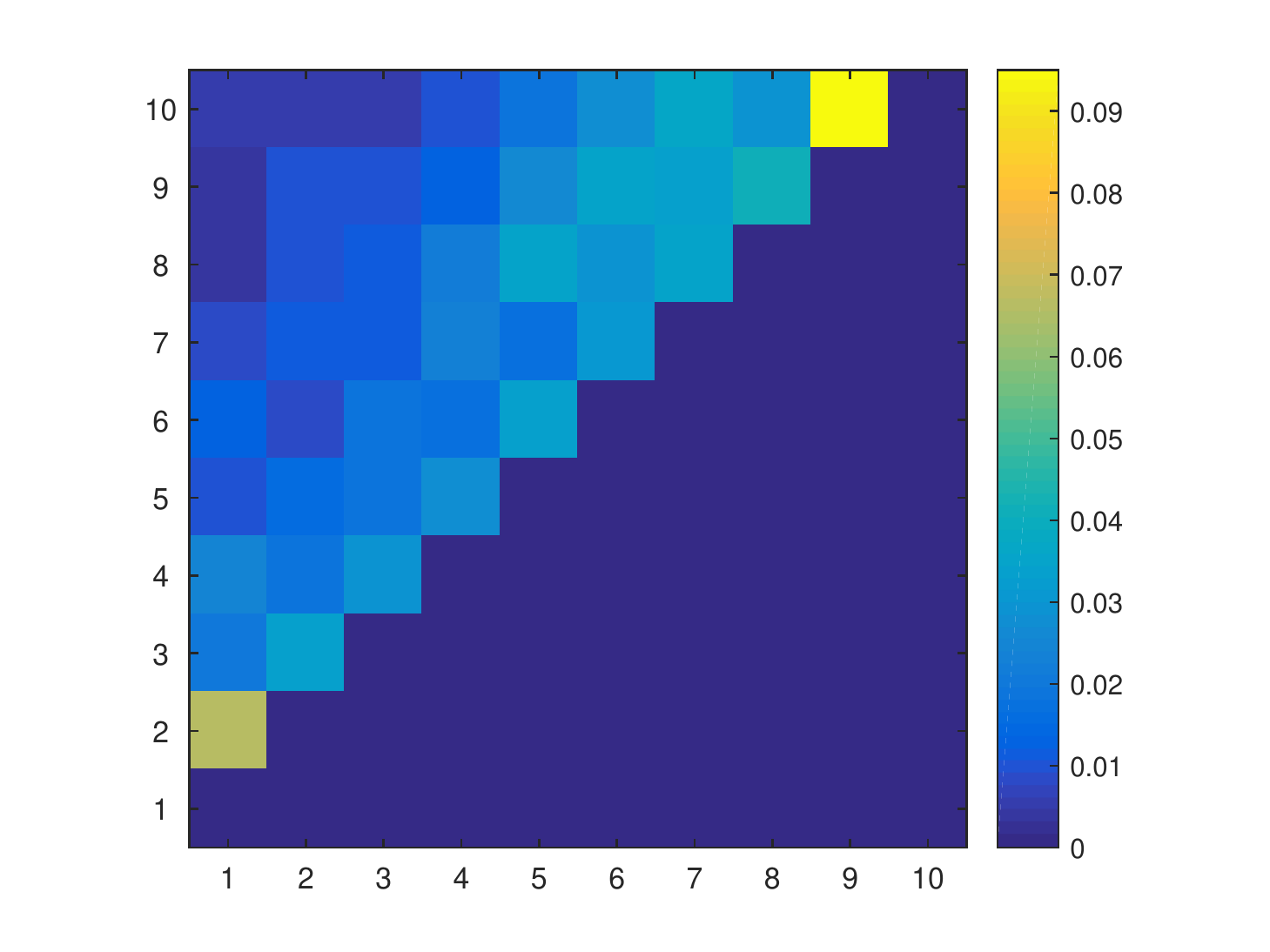}
        \caption{10 Items}
    \end{subfigure}%
	~
    \begin{subfigure}[h]{0.5\textwidth}
        \centering
        \includegraphics[width=\textwidth]{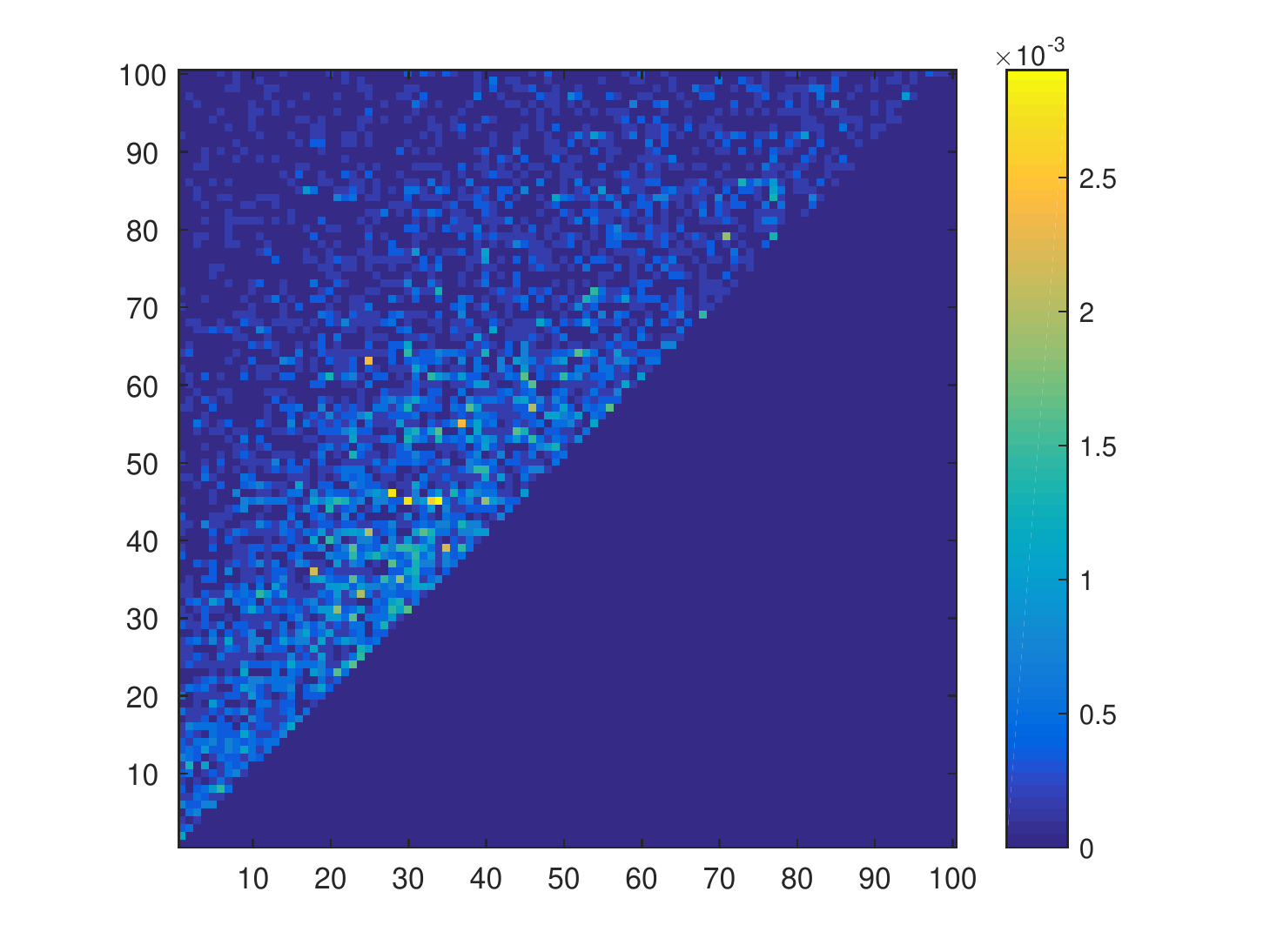}
        \caption{100 Items}
    \end{subfigure}
    \caption{Heat map of labeling frequency for item pairs with different levels of ambiguity}
    \label{fig: item_frequency}
\end{figure*}

\begin{figure*}[!t]
    \centering
    \includegraphics[scale = 0.6]{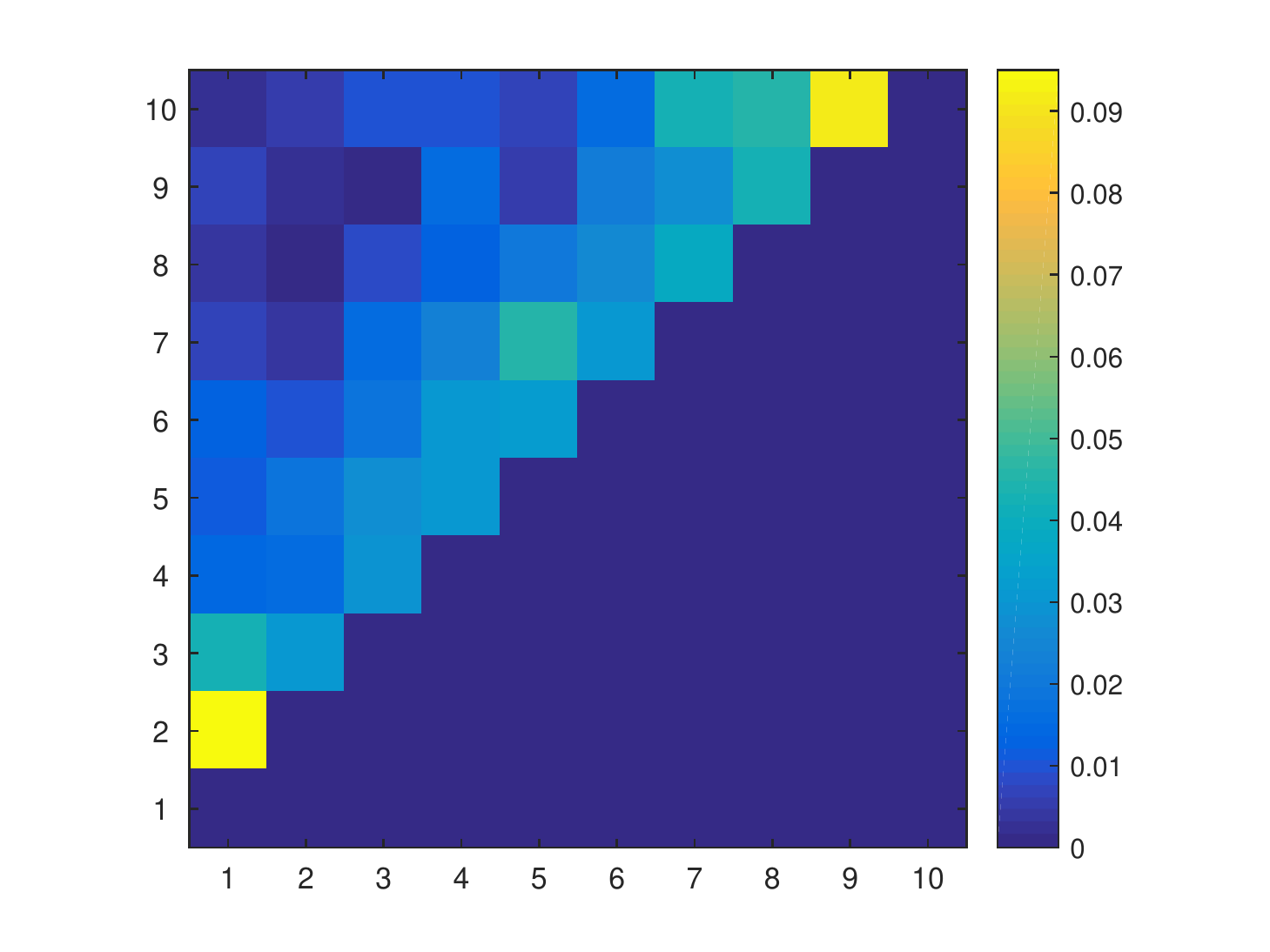}
    \caption{Heat map of labeling frequency for pairs with very close scores}
    \label{fig: item_frequency_close}
\end{figure*}

Next, we study the allocation of labeling budget over pairs of items with different levels of ambiguity when using the AKG policy. Again, we consider two scenarios: $K = 10, T = 100$ and $K = 100, T = 1000$, each with 100 independent trials.  We report the averaged labeling frequency of each pair. The results are presented in Figure \ref{fig: item_frequency} in the form of heat maps. In Figure \ref{fig: item_frequency}, each small block represents a pair of items. Items are sorted based on their true latent scores, from lowest to highest along both $y$-axis and $x$-axis, so that the item pairs along the back-diagonal are more ambiguous than those around the corner. Figure \ref{fig: item_frequency} presents the  normalized number of comparisons over different pairs in total $T$ stages. It can be seen from Figure~\ref{fig: item_frequency} that the back-diagonal pairs in general have higher labeling frequency than other pairs. Some adjacent pairs are labeled 10 times more frequently than the distant pairs. To further demonstrate this property, we design a scenario in which out of 10 items, the two worst items and the two best items have very close true scores respectively. Although the main goal of the algorithm is to achieve higher ranking accuracy, we are still curious to see whether our policy can spend the budget on these two pairs. As we can see from Figure \ref{fig: item_frequency_close}, it is clear that the algorithm concentrates on the 1-2 pair and the 9-10 pair. This implies that our policy can identify and explore more ambiguous pairs to improve the learning of the true ranks.

\subsection{Simulated Study under the Heterogeneous Workers Setting}

In this section we bring worker quality $\rho_w$ into consideration, which is assumed to be drawn from the Beta(4,1) distribution. We choose the Beta(4,1) to generate $\rho_w$ since the average reliability measure of workers in this case is $4/5 = 80\%$. This assumption is in line with the practice in that there are usually more reliable workers than unreliable ones.
Similar to the homogeneous worker setting, we consider two scenarios: 10 items with 10 heterogeneous workers ($K$ = 10, $M$ = 10); 100 items with 50 heterogeneous workers ($K$ = 100, $M$ = 50) and we note that each worker is allowed to label any pair at most once. We compare the following three methods.
\begin{enumerate}
\item \textbf{AKG} (see Algorithm \ref{alg:AKGw}): We set the prior of $\btheta$ to be the uniform distribution on the simplex (i.e., $\balpha^0$ is set to be an all-one vector) and choose $\mu^0_w = 4, \nu^0_w = 1$ for each worker $w = 1,2,\dots,M$.

\item \textbf{Random Sampling}:  It is implemented simply by replacing Step 2 of Algorithm \ref{alg:AKGw} by a random sampling policy, which selects a triplet \{item $i$, item $j$, worker $w$\} uniformly randomly at each stage. The choices of priors are the same as in AKG. Like the AKG method, the random sampling algorithm also maintains a Dirichlet distribution for the scores of items and a beta distribution for the reliability parameter of each worker using moment matching.

\item \textbf{Crowd-BT}: This is an  adaptive algorithm recently proposed by \cite{Chen:13}, which chooses the triplet \{item $i$, item $j$, worker $w$\} at each iteration to maximize the information gain. This can be viewed as an extension of the adaptive polling \citep{Pfeiffer:12} by incorporating the workers' reliability. Unlike adaptive polling which computes the relative entropy for each pair exactly, Crowd-BT uses moment matching to approximate the posterior and hence runs significantly faster than adaptive polling. The parameter $\gamma$, which balances the exploitation-exploration trade-off in \cite{Chen:13}, is set to 1 in this experiment.

\end{enumerate}

\begin{figure*}[!t]
   \centering
    \begin{subfigure}[h]{0.5\textwidth}
       \centering
        \includegraphics[width=\textwidth]{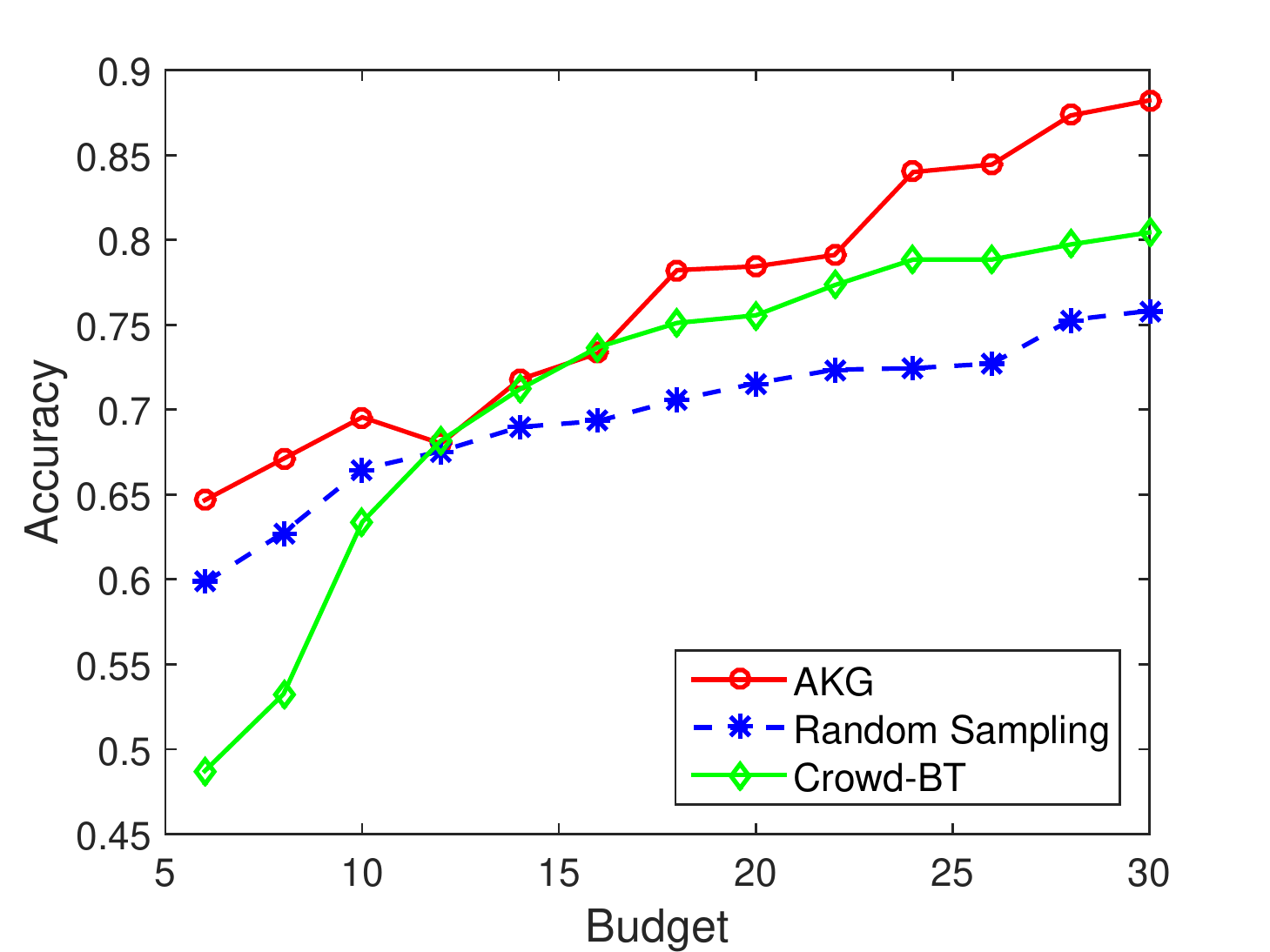}
        \caption{10 Items}
    \end{subfigure}%
	~
    \begin{subfigure}[h]{0.5\textwidth}
        \centering
        \includegraphics[width=\textwidth]{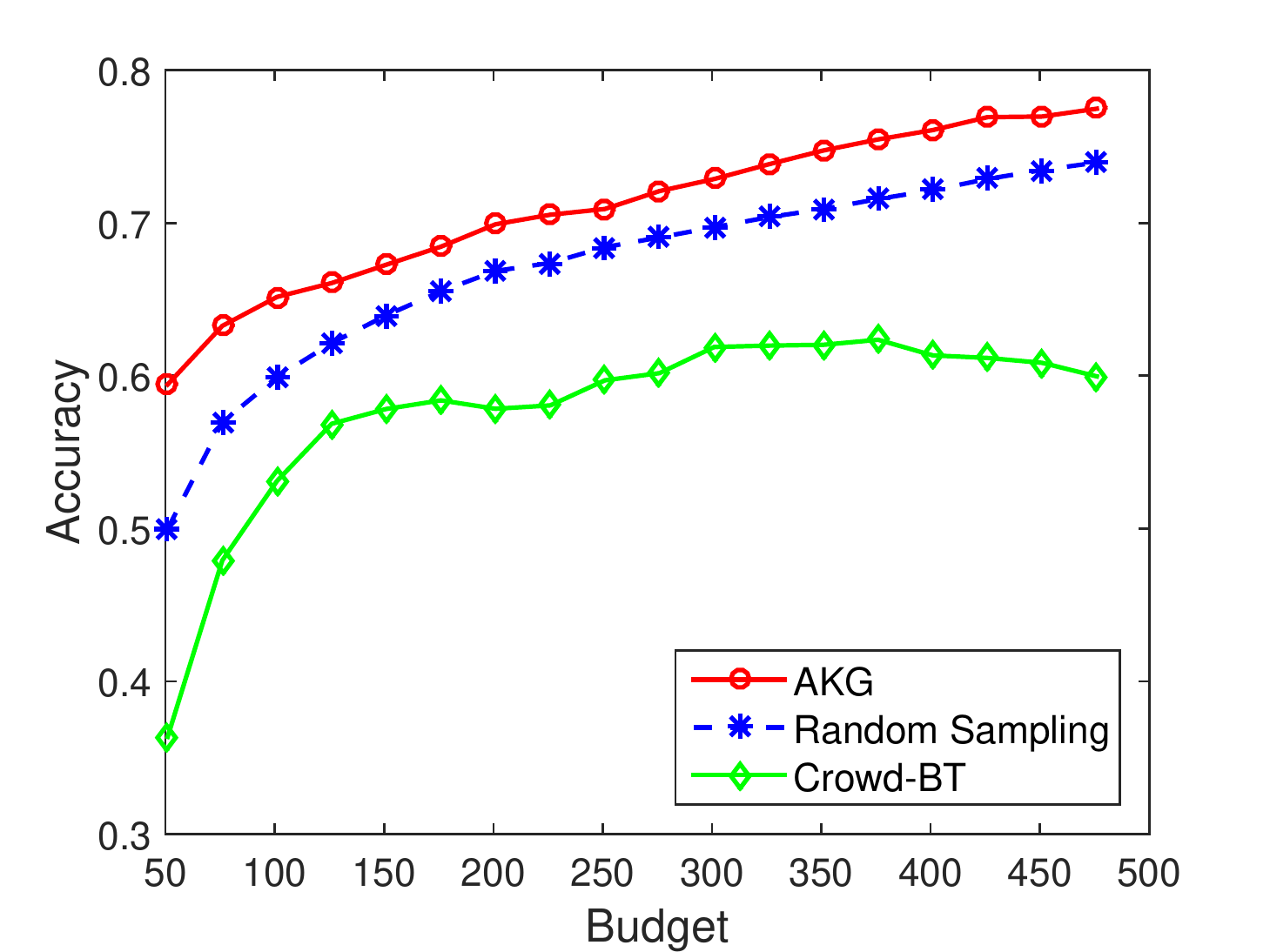}
        \caption{100 Items}
    \end{subfigure}
   \caption{Performance comparison under the heterogeneous workers setting. The $x$-axis is the budget level and $y$-axis is the averaged ranking accuracy.}
   \label{fig: heterogeneous}
\end{figure*}

The comparison results are presented in Figure \ref{fig: heterogeneous}, where AKG outperforms the other two methods, especially when the budget level is low. The performance of random sampling is comparable to AKG at the beginning. As we gather more information, AKG can learn the reliability of workers so that the budget will be gradually shifted towards those reliable workers (as shown later in Figure~\ref{fig:frequency_w}). In fact, it can be seen from Figure \ref{fig: heterogeneous} that the ranking accuracy of AKG increases more quickly than that of other methods. In this experiment, even if there is a small amount of budget (e.g. $T=K$), the AKG policy is still able to achieve reasonably good performance. We notice that in the 100-item case Crowd-BT is beaten by random sampling. The main reason is that when the reliability of workers varies and the pool is large, it is difficult to balance exploration and exploitation for Crowd-BT, which has already been acknowledged in \cite{Chen:13}. Similar to the previous setting, we also give the table of the computation time of a \emph{single iteration} for AKG in Table \ref{tab:comp_2}. As we can see from the table, even with another dimension of uncertainty --- the reliability of workers, AKG is still quite fast, and thus is suitable for online implementation.

\begin{table}[!t]
\centering
    \caption{Computation time under the heterogeneous workers setting.}
    \begin{tabular}{ | c | c | c |}
    \hline
    No. of Items & No. of Workers & AKG \\ \hline
    10 & 10 & 0.038 sec \\ \hline
    25 & 20 &  0.82 sec \\ \hline
    100 & 50 & 41 sec \\ \hline
    \end{tabular}
    \label{tab:comp_2}
\end{table}

\begin{figure*}[!t]
   \centering
    \begin{subfigure}[h]{0.31\textwidth}
       \centering
        \includegraphics[width=\textwidth]{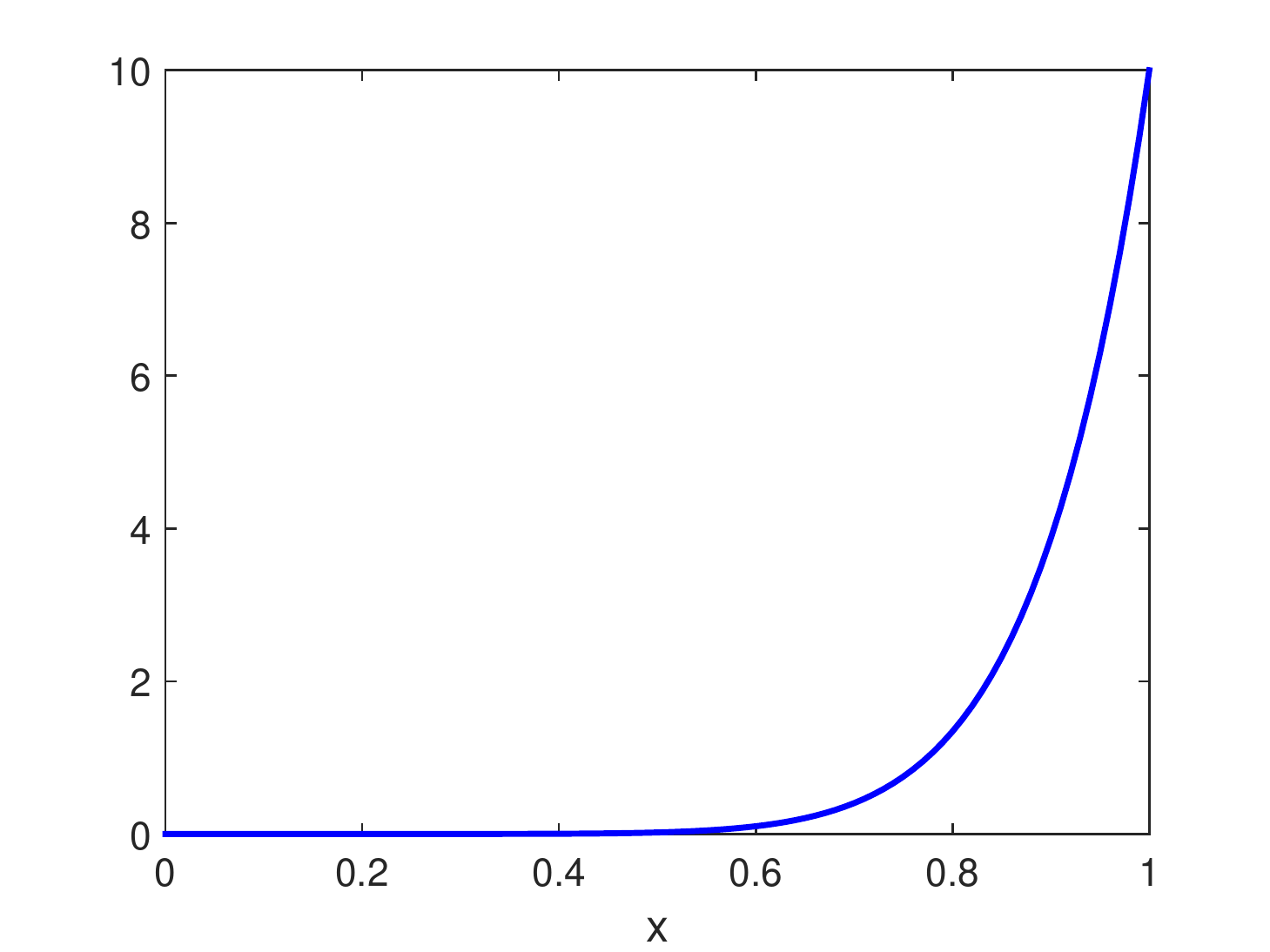}
        \caption{Beta$(10,1)$}
    \end{subfigure}%
    ~
    \begin{subfigure}[h]{0.31\textwidth}
        \centering
        \includegraphics[width=\textwidth]{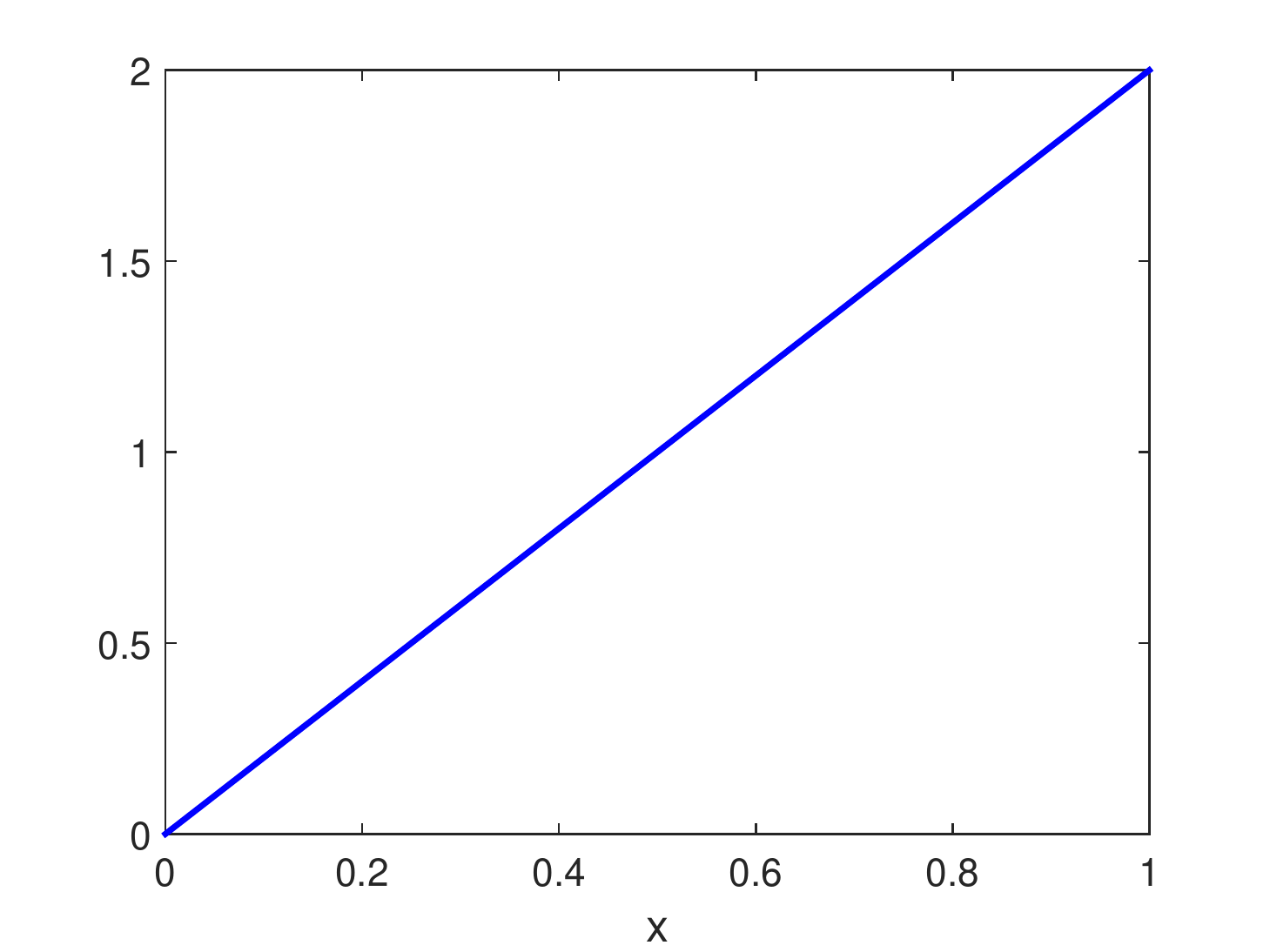}
        \caption{Beta$(2,1)$}
    \end{subfigure}
    ~
    \begin{subfigure}[h]{0.31\textwidth}
        \centering
        \includegraphics[width=\textwidth]{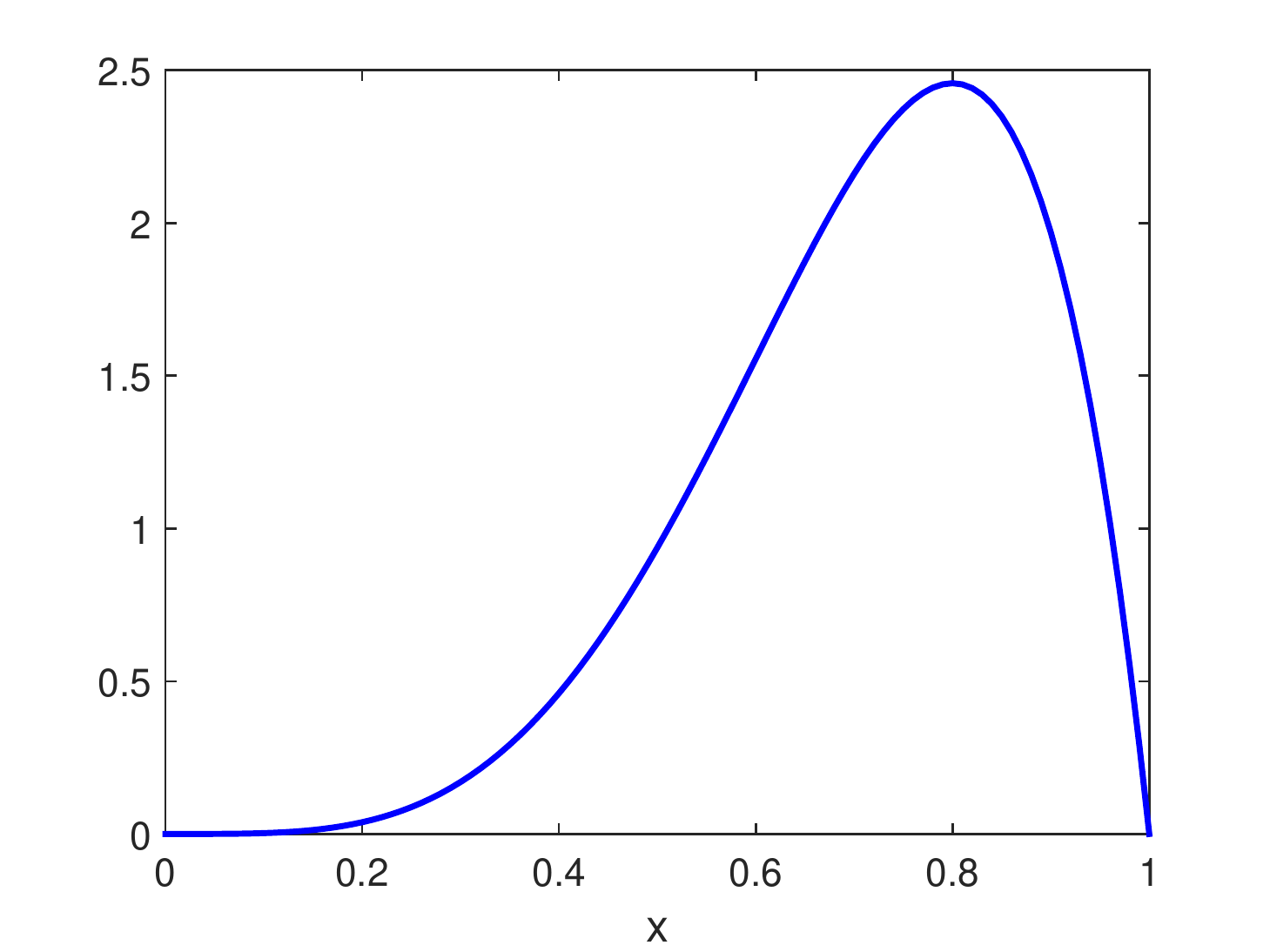}
        \caption{Beta$(5,2)$}
    \end{subfigure}
   \caption{Density plots of different Beta distributions for generating $\rho_w$}
\end{figure*}
\begin{figure*}[!h]
   \centering
    \begin{subfigure}[h]{0.31\textwidth}
       \centering
        \includegraphics[width=\textwidth]{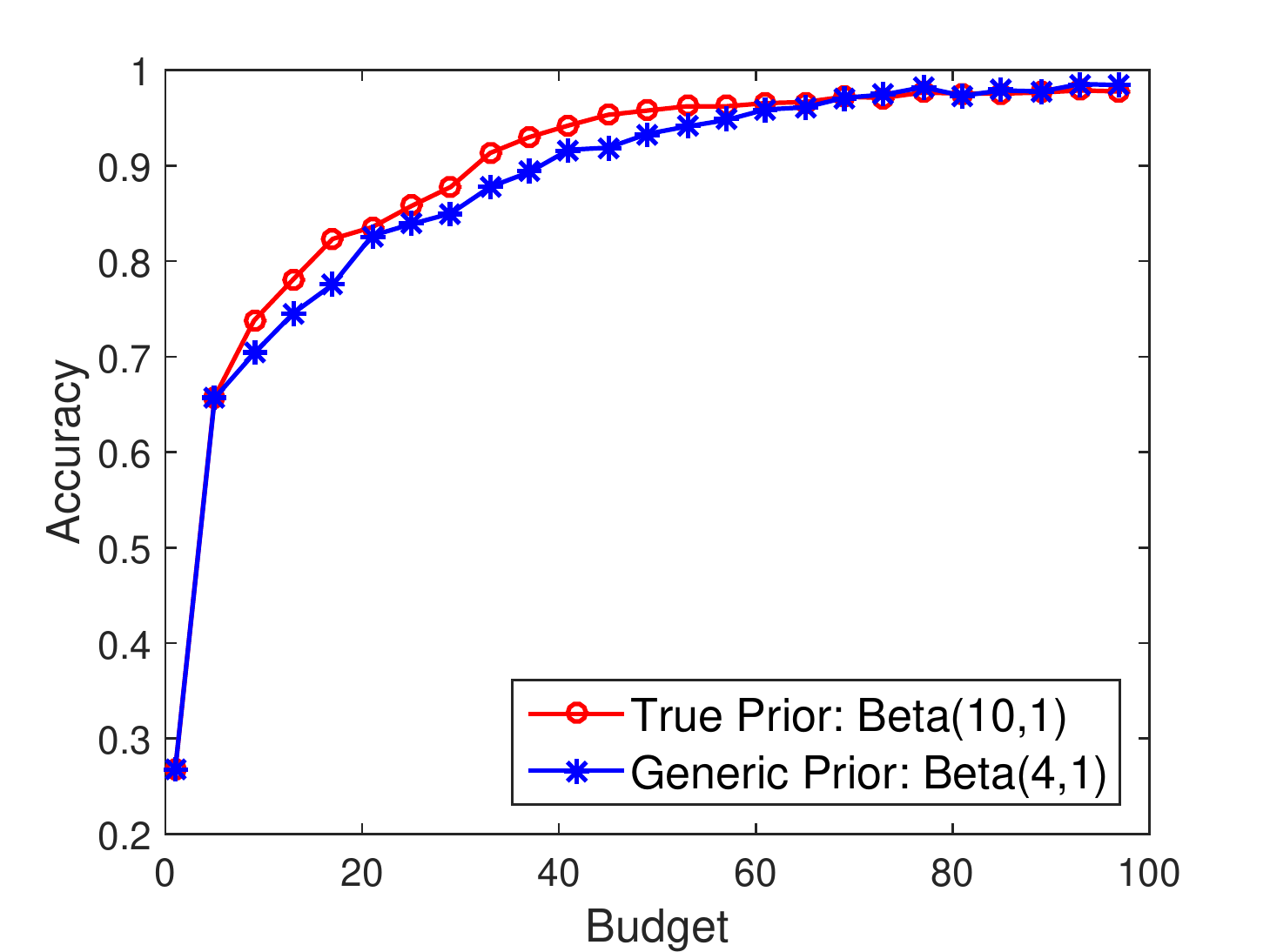}
        \caption{Beta$(10,1)$ Prior}
    \end{subfigure}%
    ~
    \begin{subfigure}[h]{0.31\textwidth}
        \centering
        \includegraphics[width=\textwidth]{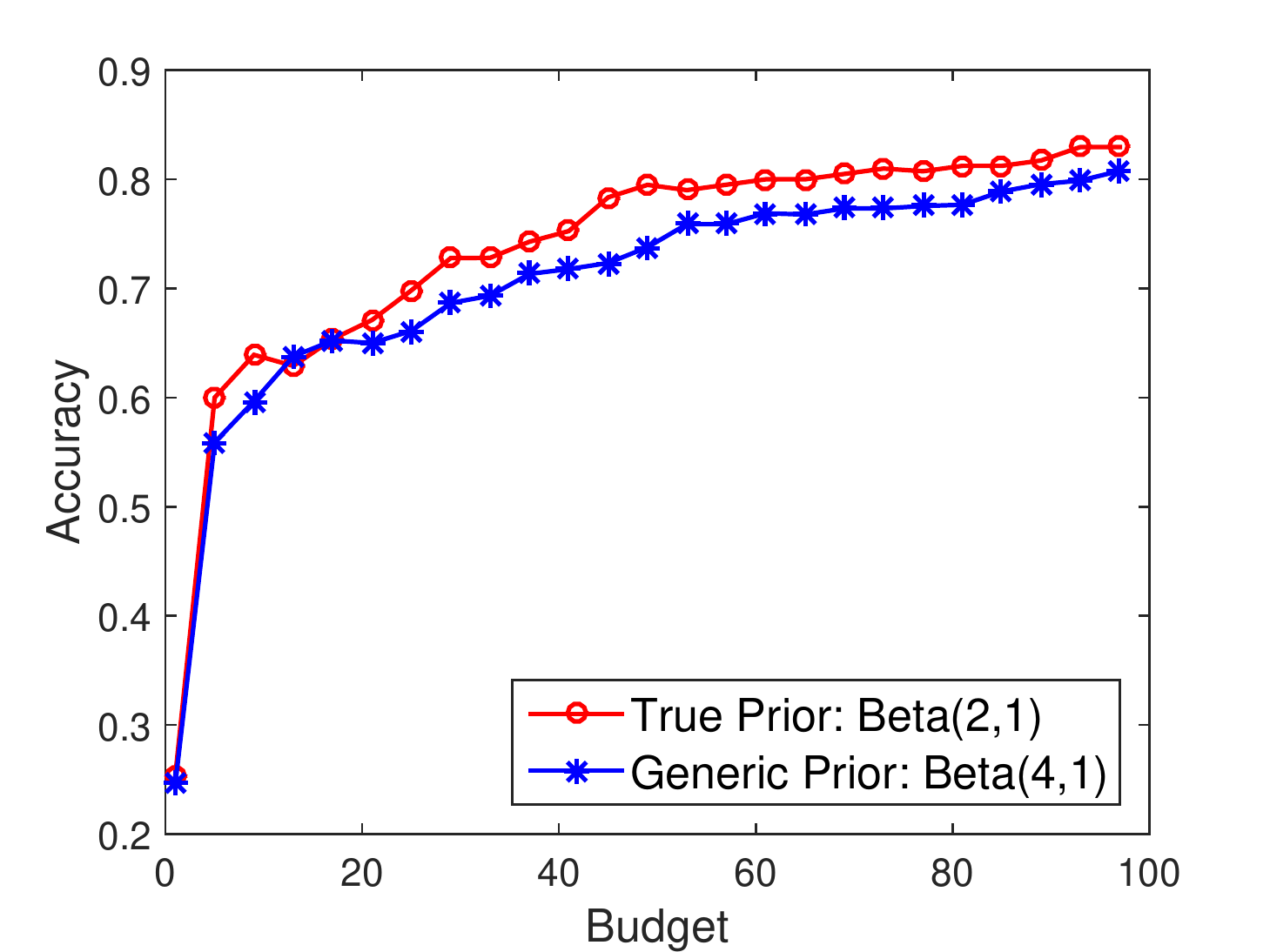}
        \caption{Beta$(2,1)$ Prior}
    \end{subfigure}
    ~
    \begin{subfigure}[h]{0.31\textwidth}
        \centering
        \includegraphics[width=\textwidth]{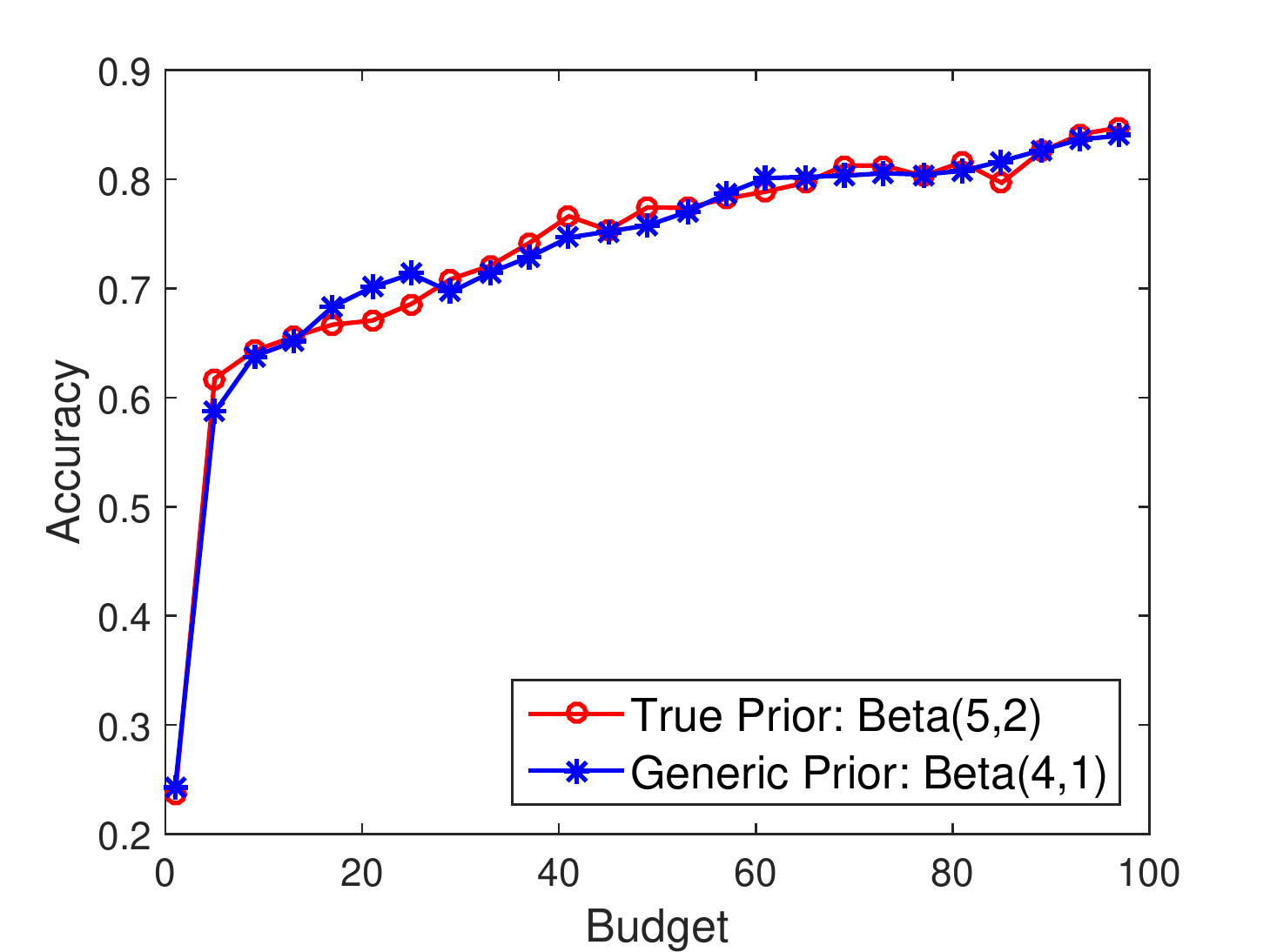}
        \caption{Beta$(5,2)$ Prior}
    \end{subfigure}
   \caption{Comparisons between AKG using Beta(4,1) prior and AKG using the true generating distribution as prior.}
   \label{fig: prior_comparison}
\end{figure*}

In order to investigate how sensitive the prior for workers' reliability $\rho_w$ is, we generate the workers' true reliability parameters from three different distributions, $\text{Beta}(10,1)$, $\text{Beta}(2,1)$, and $\text{Beta}(5,2)$, and compare the performances of AKG between using the true generating distribution as the prior and using the generic $\text{Beta}(4,1)$ as the prior. The results are plotted in Figure \ref{fig: prior_comparison}. As one can see from Figure \ref{fig: prior_comparison}, using the true generating distribution and generic $\text{Beta}(4,1)$ prior lead to very similar performance in all three cases. Although there are some small differences between the two groups of curves, they are not significant as to the overall performance of the algorithm.
This result shows that when there is no exact information on the quality of all workers, $\text{Beta}(4,1)$ is a reasonable prior for workers' reliability and the proposed AKG policy is quite robust to the prior distribution in use.

Finally, we investigate whether good workers are indeed assigned more comparison tasks by our AKG policy in the setting of heterogeneous workers. In particular, we consider $K = 10$ items and $M = 15$ workers with the workers' true reliability parameters $\rho_w, w = 1,2,\dots,M$ ranging from 0.4 to 1 with an equal space in between. This crowd of workers is fixed and the total budget in each trial $T = 250$. We report the averaged number of pairs assigned to workers with different levels of reliability in Figure \ref{fig:frequency_w}. As one can see from Figure \ref{fig:frequency_w}, there is a clear trend that more reliable workers receive more pairs on average.


\begin{figure}[!t]
\centering
\includegraphics[width=0.5\textwidth]{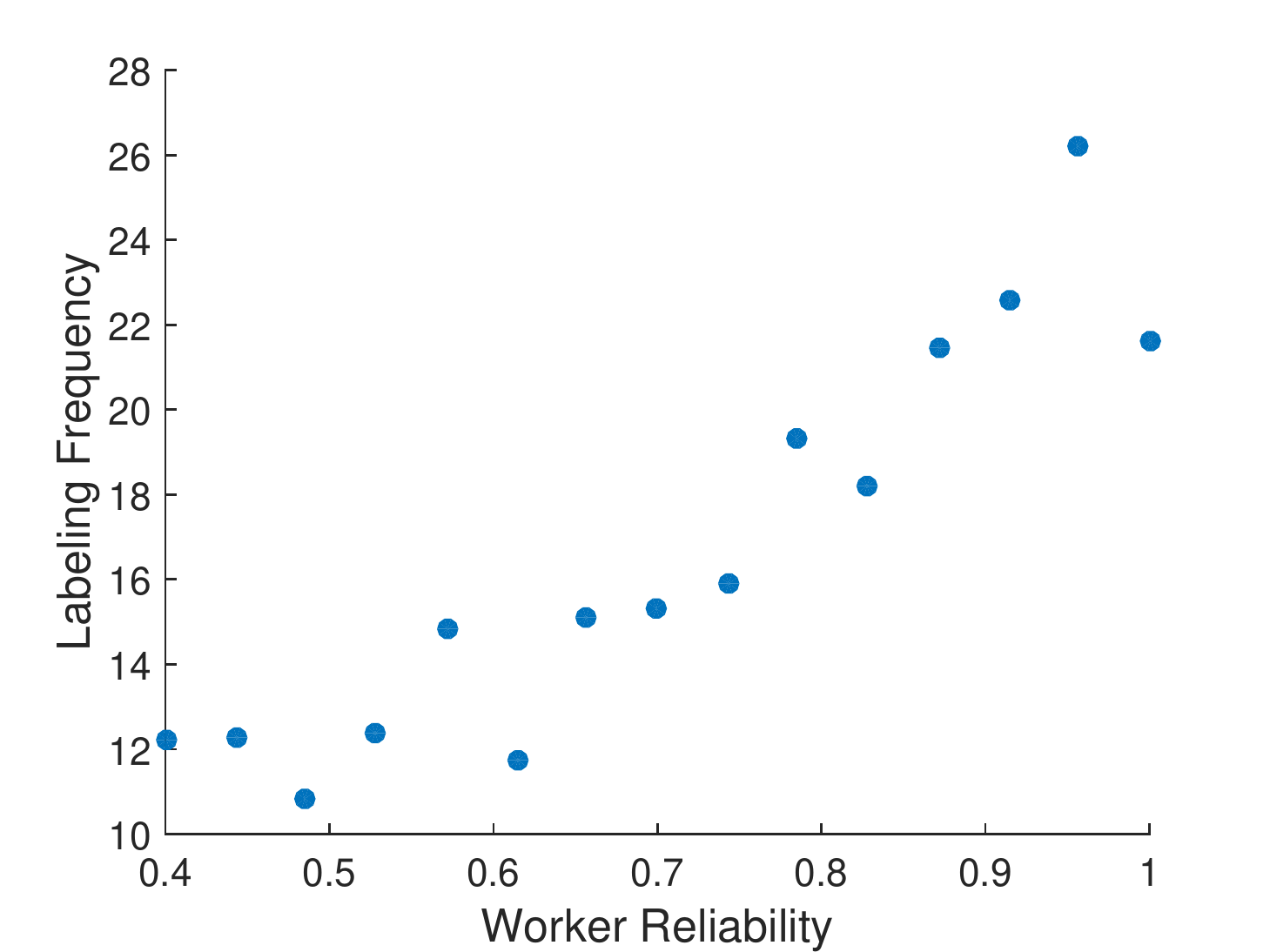}
 \caption{Averaged number of comparisons (a.k.a., labeling frequency) made by workers with different levels of reliability $\rho_w$.}
 \label{fig:frequency_w}
\end{figure}

\subsection{Real Data Study}

\begin{figure*}[!t]
   \centering
    \begin{subfigure}[h]{0.5\textwidth}
       \centering
        \includegraphics[width=\textwidth]{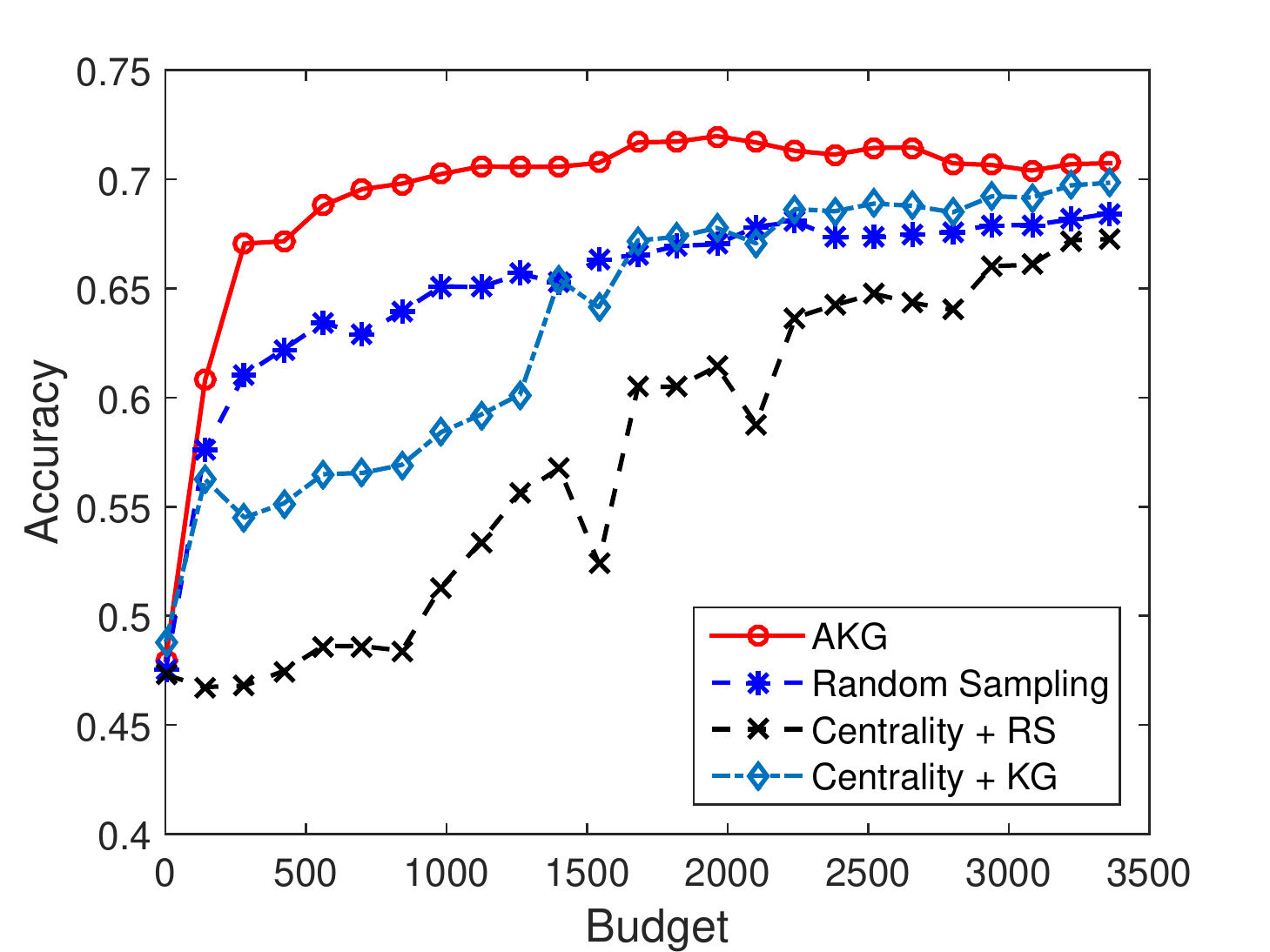}
        \caption{Homogeneous workers (fully reliable)}
        \label{fig:real_homo}
    \end{subfigure}%
	~
    \begin{subfigure}[h]{0.5\textwidth}
        \centering
        \includegraphics[width=\textwidth]{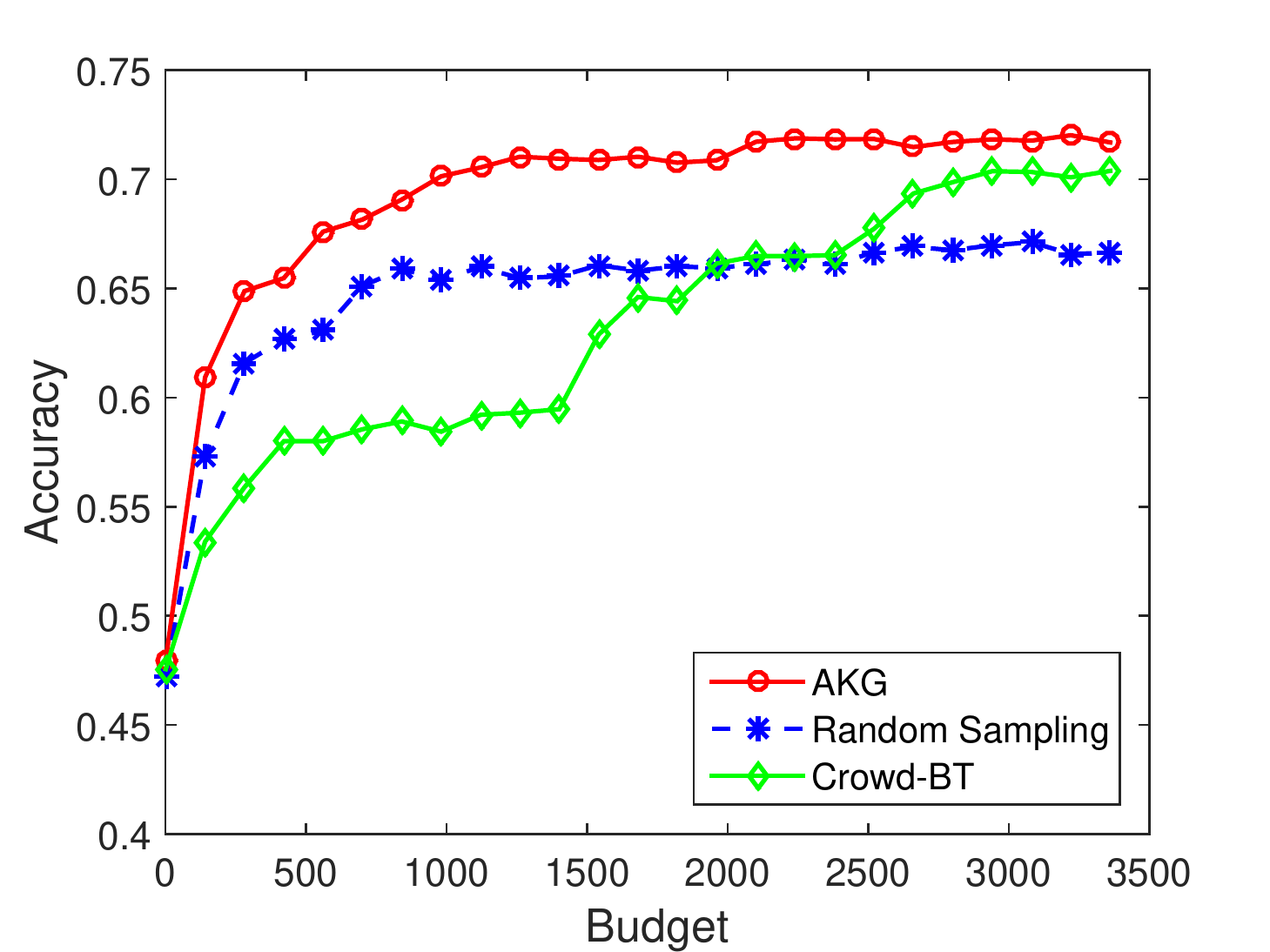}
        \caption{Heterogeneous workers}
        \label{fig: real_data_hetero}
    \end{subfigure}
   \caption{Performance comparison on the real dataset}
   \label{fig: real_data}
\end{figure*}

We now apply the proposed AKG policy (Algorithm \ref{alg:AKGw}) to a real dataset on reading difficulty levels \citep{Kevyn:04}. The dataset comprises $K=491$ different paragraphs, each assigned an integer-valued true reading difficulty score ranging from $1,2,\dots,12$. Here, a higher score means the paragraph is more difficult to read. A total number of $M=217$ different workers from Canada and the United States performed the comparison tasks on an online crowdsourcing platform called CrowdFlower\footnote{\url{http://www.crowdflower.com/}}. Each worker was presented a pair of paragraphs every time and the worker identified which paragraph is more difficult to read. To overcome the issue of an imbalanced judgemental pool, each worker was allowed to compare at most 40 different pairs.  There are 7,898 pairwise comparison results available in this dataset. Using these pairwise labels, we apply the AKG policy to recover the ranking by difficulty of these 491 paragraphs. We note that since the underlying truth is given as a difficulty level (1--12) for each paragraph (denoted by $s_i$ for $i=1,\ldots, K$) instead of a global ranking, we measure the accuracy of a ranking $\pi$ as
\[
\frac{2}{K(K-1)}\sum_{i\neq j}\mathbf{1}_{\{\pi(i)>\pi(j)\}}\mathbf{1}_{\{s_i\geq s_j\}}.
\]
In the above definition of ranking accuracy, when two paragraphs have the same reading difficulty level, any ranking between this pair will be treated as correct. It is also worth noting that, in the knowledge gradient step in \eqref{eq:single_stage_AGKw}, it is possible that the selected triplet $(i_t, j_t, w_t)$ does not exist in the dataset (i.e., the worker $w_t$ did not compare $i_t$ and $j_t$ in this data). Hence, in our implementation of AKG, we select the triplet in the dataset that maximizes the right-hand side of \eqref{eq:single_stage_AGKw}. We set the prior of $\btheta$ to be the uniform distribution on the simplex. This dataset also comes with a rating for each worker which measures the long-run performance of this worker on CrowdFlower. A higher rating implies a higher reliability of the worker. This dataset shows the averaged workers' rating is above 0.75. Thus, we still use Beta(4,1) as the prior on workers' reliability.

We run experiments in two different settings. The first one assumes that all workers are homogeneous and fully reliable. In this setting, we only need to select the next pair of paragraphs to compare but can randomly choose a worker to perform the comparison task. In this case, four algorithms are implemented (AKG policy (Algorithm \ref{alg:AKG}), random sampling, rank centrality with the random sampling policy, and rank centrality with the knowledge gradient policy) and we report the averaged accuracy over 100 independent trials in Figure \ref{fig:real_homo} to minimize the sampling effect of randomly selecting the next worker. The second experiment incorporates the heterogeneous reliability of workers so that the algorithms have to select both the pair to compare and the worker to perform the comparison task. In this case, three algorithms, AKG policy (Algorithm \ref{alg:AKGw}), random sampling and Crowd-BT, are implemented and the result is shown in Figure \ref{fig: real_data_hetero}. As one can see from these two plots, AKG outperforms the other methods in both settings, especially when the amount of budget is relatively low. As the budget level increases, the performance of Crowd-BT and rank centrality will eventually improve and achieve a similar accuracy as AKG.


\section{Conclusion}
\label{sec:conclusion}

In this paper, we address the dynamic budget allocation problem in crowdsourced ranking. Using the Kendall's tau with respect to the true ranking as the measure of ranking accuracy, we formulate the problem of maximizing expected Kendall's tau by sequential comparisons into a Bayesian Markov decision process. To further address the computational challenges (especially, solving the NP-hard MAX-LOP) involved in the decision process, we propose an approximated knowledge gradient policy, which is not only computationally efficient but also achieves good performance as shown in the experimental sections. 

We note that although this paper focuses on the Bradley-Terry-Luce model \citep{Bradley:52, Luce:59}, 
it will be interesting to study the dynamic sampling in crowdsourced ranking for other ranking models such as permutation-based models (e.g., Mallows \citep{Mallows:57} and CPS \citep{Qin:10} models) or stochastically transitive models \citep{Fis:73, Shah15b}). Meanwhile, theoretical bounds on posterior approximation errors are difficult to obtain and error propagation does exist during each iteration of the algorithm. In our future analysis we would like to quantify this error. Another interesting future direction is to incorporate the feature information of each item into the probabilistic model of the pairwise comparison results and develop a dynamic sampling policy that can further improve the ranking accuracy via modeling the feature information.



\section*{Acknowledgement} 
Xi Chen would like to acknowledge support for this project
from the Google Faculty Research Award.

\newpage

\section*{Appendix}

In this section, we provide detailed proofs of some propositions in the paper.

\bigskip

\begin{proof}(\textbf{of Proposition~\ref{propMM}})

We will only show that \eqref{eq:moment_matching1} and \eqref{eq:moment_matching2} can be represented as \eqref{MM1} when $Y_{ij}=1$. The proof for $Y_{ij}=-1$ is similar.

It is known that $\mathbb{E} \left[\theta_k |\btheta\sim\Dir(\balpha')\right]=\frac{\alpha'_k}{\alpha'_0}$ and $\mathbb{E} \left[\theta_k^2 |\btheta\sim\Dir(\balpha')\right]=\frac{\alpha'_k(\alpha'_k+1)}{\alpha'_0(\alpha'_0+1)}$ for $k=1,2,\dots,K$, which characterize the left-hand sides of \eqref{eq:moment_matching1} and \eqref{eq:moment_matching2}.

With elementary calculus, we can show
\begin{eqnarray}
\label{prop1eq1}
\text{Pr}(Y_{ij}=1|\balpha)=\int_{\Delta}\frac{\theta_i}{\theta_i+\theta_j}\frac{1}{\mathrm{B}(\balpha)} \prod_{k=1}^K \theta_k^{\alpha_k - 1}\mathrm{d}\btheta=\frac{\alpha_i}{\alpha_i+\alpha_j}
\end{eqnarray}
so that
\begin{eqnarray}
\label{prop1eq0}
p(\btheta|Y_{ij}=1,\balpha)=\frac{p(\btheta, Y_{ij}=1 | \balpha)}{\Pr(Y_{ij}=1|  \balpha)}=\frac{\alpha_i+\alpha_j}{\alpha_i}\frac{\theta_i}{\theta_i+\theta_j}\frac{1}{\mathrm{B}(\alpha)}\prod_{k=1}^K \theta_k^{\alpha_k - 1}.
\end{eqnarray}
Let $\bbeta=(\beta_1,\dots,\beta_K)$ with $\beta_i=\alpha_i+1$ and $\beta_k=\alpha_k$ for $k\neq i$. Then, we can show that
\begin{eqnarray}
\nonumber
\mathbb{E}[\theta_i|Y_{ij}=1,\balpha]&=&\frac{\alpha_i+\alpha_j}{\alpha_i}\left[\int_{\Delta}\frac{\theta_i^2}{\theta_i+\theta_j}\frac{1}{\mathrm{B}(\balpha)} \prod_{k=1}^K \theta_k^{\alpha_k - 1}\mathrm{d}\btheta\right]\\\nonumber
&=&\frac{\alpha_i+\alpha_j}{\alpha_0}\left[\int_{\Delta}\frac{\theta_i}{\theta_i+\theta_j}\frac{1}{\mathrm{B}(\bbeta)} \prod_{k=1}^K \theta_k^{\beta_k - 1}\mathrm{d}\btheta\right] \\ \label{prop1eq2}
&=& \frac{\alpha_i+1}{\alpha_0}\frac{\alpha_i+\alpha_j}{\alpha_i+\alpha_j+1},
\end{eqnarray}
where the first and the third equalities are due to \eqref{prop1eq1} and \eqref{prop1eq0} and the second equality is by the definition of $\bbeta$ and the property $\Gamma(x+1)=x\Gamma(x)$ of Gamma function. Using a similar argument, we can show that
\begin{eqnarray}
\label{prop1eq3}
\mathbb{E}[\theta_j|Y_{ij}=1,\balpha]&=&\frac{\alpha_j}{\alpha_0}\frac{\alpha_i+\alpha_j}{\alpha_i+\alpha_j+1}\\\label{prop1eq4}
\mathbb{E}[\theta_k|Y_{ij}=1,\balpha]&=&\frac{\alpha_k}{\alpha_0}\quad\text{ for }k\neq i, j\\\label{prop1eq5}
\mathbb{E}\left[\theta_i^2| Y_{ij}=1,\balpha\right]&=& \frac{\alpha_i+1}{\alpha_0}\frac{\alpha_i+2}{\alpha_0+1}\frac{\alpha_i+\alpha_j}{\alpha_i+\alpha_j+2} \\\label{prop1eq6}
\mathbb{E}\left[\theta_j^2| Y_{ij}=1,\balpha\right]&=& \frac{\alpha_j}{\alpha_0}\frac{\alpha_j+1}{\alpha_0+1}\frac{\alpha_i+\alpha_j}{\alpha_i+\alpha_j+2}\\\label{prop1eq7}
\mathbb{E}\left[\theta_k^2| Y_{ij}=1,\balpha\right]&=& \frac{\alpha_k}{\alpha_0}\frac{\alpha_k+1}{\alpha_0+1}\quad\text{ for }k\neq i, j.
\end{eqnarray}
Note that, when $Y_{ij}=1$, the right-hand sides of \eqref{eq:moment_matching1} and \eqref{eq:moment_matching2} can be represented as the right-hand side of \eqref{MM1} using \eqref{prop1eq2}$\sim$\eqref{prop1eq7}
\end{proof}

\begin{proof}(\textbf{of Proposition~\ref{propMMw}})

We will only show  the conclusion when $Y_{ij}^w=1$. The proof for $Y_{ij}^w=-1$ is similar.

When $Y_{ij}^w=1$, we have $\eta_{ijw} = \frac{\mu_w\alpha_i}{\mu_w\alpha_i+\nu_w\alpha_j}$. We will first show \eqref{eq:moment_matching1w} and \eqref{eq:moment_matching2w} can be represented as the first four equations in \eqref{MM1w}. Since $\mathbb{E} \left[\theta_k |\btheta\sim\Dir(\balpha')\right]=\frac{\alpha'_k}{\alpha'_0}$ and $\mathbb{E} \left[\theta_k^2 |\btheta\sim\Dir(\balpha')\right]=\frac{\alpha'_k(\alpha'_k+1)}{\alpha'_0(\alpha'_0+1)}$ for $k=1,2,\dots,K$, the left-hand sides of the first four equations in \eqref{MM1w} and those of \eqref{eq:moment_matching1w} and \eqref{eq:moment_matching2w} are identical.

With \eqref{prop1eq1} and some basic properties of the Beta distribution, we can show
\begin{eqnarray}
\nonumber
\text{Pr}(Y_{ij}^w=1|\balpha,\mu_w,\nu_w)&=&\mathbb{E}\left[ \rho_w\frac{\theta_i}{\theta_i+\theta_j}+(1-\rho_w)\frac{\theta_j}{\theta_i+\theta_j}\big|\btheta\sim\Dir(\balpha),\rho_w\sim\B(\mu_w,\nu_w)\right]\\\label{prop4eq1}
&=&\frac{\mu_w}{\mu_w+\nu_w}\frac{\alpha_i}{\alpha_i+\alpha_j}+\frac{\nu_w}{\mu_w+\nu_w}\frac{\alpha_j}{\alpha_i+\alpha_j}
\end{eqnarray}
so that
\begin{eqnarray}
\label{prop4eq0}
&&p(\btheta,\rho_w|Y_{ij}^w=1,\balpha,\mu_w,\nu_w)
\\
&=&\frac{\left(\rho_w\frac{\theta_i}{\theta_i+\theta_j}+(1-\rho_w)\frac{\theta_j}{\theta_i+\theta_j}\right)\frac{1}{\mathrm{B}(\balpha)\mathrm {B} (\mu_w,\nu_w )}\prod_{k=1}^K \theta_k^{\alpha_k - 1}\rho_w^{\mu_w -1}(1-\rho_w)^{\nu_w  -1}}
{\frac{\mu_w}{\mu_w+\nu_w}\frac{\alpha_i}{\alpha_i+\alpha_j}+\frac{\nu_w}{\mu_w+\nu_w}\frac{\alpha_j}{\alpha_i+\alpha_j}}. \nonumber
\end{eqnarray}
The equations \eqref{prop4eq1} and \eqref{prop4eq0}, together with \eqref{prop1eq2}, imply
\begin{eqnarray}
\nonumber
&&\mathbb{E}[\theta_i|Y_{ij}^w=1,\balpha,\mu_w,\nu_w]\\\nonumber
&=&\frac{\int_0^1\int_{\Delta}\left(\rho_w\frac{\theta_i^2}{\theta_i+\theta_j}+(1-\rho_w)\frac{\theta_j\theta_i}{\theta_i+\theta_j}\right)\frac{1}{\mathrm{B}(\balpha)\mathrm {B} (\mu_w,\nu_w )}\prod_{k=1}^K \theta_k^{\alpha_k - 1}\rho_w^{\mu_w -1}(1-\rho_w)^{\nu_w  -1}\mathrm{d}\btheta\mathrm{d}\rho_w}
{\frac{\mu_w}{\mu_w+\nu_w}\frac{\alpha_i}{\alpha_i+\alpha_j}+\frac{\nu_w}{\mu_w+\nu_w}\frac{\alpha_j}{\alpha_i+\alpha_j}}\\\nonumber
&=&\frac{\int_{\Delta}\left(\frac{\mu_w}{\mu_w+\nu_w}\frac{\theta_i^2}{\theta_i+\theta_j}+\frac{\nu_w}{\mu_w+\nu_w}\frac{\theta_j\theta_i}{\theta_i+\theta_j}\right)\frac{1}{\mathrm{B}(\balpha)}\prod_{k=1}^K \theta_k^{\alpha_k - 1}\mathrm{d}\btheta}
{\frac{\mu_w}{\mu_w+\nu_w}\frac{\alpha_i}{\alpha_i+\alpha_j}+\frac{\nu_w}{\mu_w+\nu_w}\frac{\alpha_j}{\alpha_i+\alpha_j}}\\\nonumber
&=&\frac{\frac{\mu_w}{\mu_w+\nu_w}\frac{\alpha_i}{\alpha_0}\frac{\alpha_i+1}{\alpha_i+\alpha_j+1}}
{\frac{\mu_w}{\mu_w+\nu_w}\frac{\alpha_i}{\alpha_i+\alpha_j}+\frac{\nu_w}{\mu_w+\nu_w}\frac{\alpha_j}{\alpha_i+\alpha_j}}
+\frac{\frac{\nu_w}{\mu_w+\nu_w}\frac{\alpha_i}{\alpha_0}\frac{\alpha_j}{\alpha_i+\alpha_j+1}}
{\frac{\mu_w}{\mu_w+\nu_w}\frac{\alpha_i}{\alpha_i+\alpha_j}+\frac{\nu_w}{\mu_w+\nu_w}\frac{\alpha_j}{\alpha_i+\alpha_j}}\\\label{prop4eq2}
&=&\eta_{ijw}\frac{(\alpha_i+1)(\alpha_i+\alpha_j)}{\alpha_0(\alpha_i+\alpha_j+1)}
+ (1-\eta_{ijw})\frac{\alpha_i(\alpha_i+\alpha_j)}{\alpha_0(\alpha_i+\alpha_j+1)}.
\end{eqnarray}
Using a similar argument, we can show that
\begin{align}
\label{prop4eq3}
\mathbb{E}[\theta_j|Y_{ij}^w=1,\balpha,\mu_w,\nu_w]&=\eta_{ijw}\frac{\alpha_j(\alpha_i+\alpha_j)}{\alpha_0(\alpha_i+\alpha_j+1)}
+ (1-\eta_{ijw})\frac{(\alpha_j+1)(\alpha_i+\alpha_j)}{\alpha_0(\alpha_i+\alpha_j+1)}\\\label{prop4eq4}
\mathbb{E}[\theta_k|Y_{ij}^w=1,\balpha,\mu_w,\nu_w]&=\frac{\alpha_k}{\alpha_0}\quad\text{ for }k\neq i, j\\\label{prop4eq5}
\mathbb{E}\left[\theta_i^2| Y_{ij}^w=1,\balpha,\mu_w,\nu_w\right]&= \frac{\eta_{ijw}(\alpha_i+1)(\alpha_i+2)(\alpha_i+\alpha_j)}{\alpha_0(\alpha_0+1)(\alpha_i+\alpha_j+2)}
+\frac{(1-\eta_{ijw})\alpha_i(\alpha_i+1)(\alpha_i+\alpha_j)}{\alpha_0(\alpha_0+1)(\alpha_i+\alpha_j+2)}\quad\\\label{prop4eq6}
\mathbb{E}\left[\theta_j^2| Y_{ij}^w=1,\balpha,\mu_w,\nu_w\right]&= \frac{\eta_{ijw}\alpha_j(\alpha_j+1)(\alpha_i+\alpha_j)}{\alpha_0(\alpha_0+1)(\alpha_i+\alpha_j+2)}
+\frac{(1-\eta_{ijw})(\alpha_j+1)(\alpha_j+2)(\alpha_i+\alpha_j)}{\alpha_0(\alpha_0+1)(\alpha_i+\alpha_j+2)}\quad\\\label{prop4eq7}
\mathbb{E}\left[\theta_k^2| Y_{ij}^w=1,\balpha,\mu_w,\nu_w\right]&= \frac{\alpha_k}{\alpha_0}\frac{\alpha_k+1}{\alpha_0+1}\quad\text{ for }k\neq i, j.
\end{align}

In the next,  we will show \eqref{eq:moment_matching3w} and \eqref{eq:moment_matching4w} can be represented as the last two equations in \eqref{MM1w}. When $Y_{ij}^w=1$, the last two equations in \eqref{MM1w} become
\begin{eqnarray}
\label{MM1wtemp}
\left\{
\begin{array}{rcl}
\frac{\mu'_w}{\mu'_w+\nu'_w}&=&\eta_{ijw}\frac{\mu_w+1}{\mu_w+\nu_w+1}
+(1-\eta_{ijw})\frac{\mu_w }{\mu_w+\nu_w+1}\\
\frac{\mu'_w(\mu'_w+1)+\nu'_w(\nu'_w+1)}{(\mu'_w+\nu'_w)(\mu'_w+\nu'_w+1)}
&=&\eta_{ijk}\frac{(\mu_w+1)(\mu_w+2)}{(\mu_w+\nu_w+1)(\mu_w+\nu_w+2)}
+(1-\eta_{ijk})\frac{(\mu_w )(\mu_w+1)}{(\mu_w+\nu_w+1)(\mu_w+\nu_w+2)}\\
&&+\eta_{ijk}\frac{(\nu_w )(\nu_w+1)}{(\mu_w+\nu_w+1)(\mu_w+\nu_w+2)}
+(1-\eta_{ijk})\frac{(\nu_w+1)(\nu_w+2)}{(\mu_w+\nu_w+1)(\mu_w+\nu_w+2)}.
\end{array}
\right.
\end{eqnarray}
It is known that $\mathbb{E} \left[\rho_w|\rho_w\sim\B(\mu'_w,\nu'_w)\right]=\frac{\mu'_w}{\mu'_w+\nu'_w}$, $\mathbb{E} \left[\rho_w^2 |\rho_w\sim\B(\mu'_w,\nu'_w)\right]=\frac{\mu'_w(\mu'_w+1)}{(\mu'_w+\nu'_w)(\mu'_w+\nu'_w+1)}$ and $\mathbb{E} \left[(1-\rho_w)^2 |\rho_w\sim\B(\mu'_w,\nu'_w)\right]=\frac{\nu'_w(\nu'_w+1)}{(\mu'_w+\nu'_w)(\mu'_w+\nu'_w+1)}$, indicating that the left-hand sides of \eqref{eq:moment_matching3w} and \eqref{eq:moment_matching4w} match those of \eqref{MM1wtemp}.

To characterize the right-hand sides of \eqref{eq:moment_matching3w} and \eqref{eq:moment_matching4w}, we first derive from \eqref{prop4eq0} that
\begin{eqnarray}
\nonumber
&&\mathbb{E}\left[ \rho_w| Y_{ij}^w,\balpha,\bmu,\bnu\right]\\\nonumber
&=&\frac{\int_0^1\int_{\Delta}\left(\rho_w^2\frac{\theta_i}{\theta_i+\theta_j}+\rho_w(1-\rho_w)\frac{\theta_j}{\theta_i+\theta_j}\right)\frac{1}{\mathrm{B}(\balpha)\mathrm {B} (\mu_w,\nu_w )}\prod_{k=1}^K \theta_k^{\alpha_k - 1}\rho_w^{\mu_w -1}(1-\rho_w)^{\nu_w  -1}\mathrm{d}\btheta\mathrm{d}\rho_w}
{\frac{\mu_w}{\mu_w+\nu_w}\frac{\alpha_i}{\alpha_i+\alpha_j}+\frac{\nu_w}{\mu_w+\nu_w}\frac{\alpha_j}{\alpha_i+\alpha_j}}\\\nonumber
&=&\frac{\int_0^1\left(\frac{\alpha_i}{\alpha_i+\alpha_j}\rho_w^2+\frac{\alpha_j}{\alpha_i+\alpha_j}\rho_w(1-\rho_w)\right)\frac{1}{\mathrm{B} (\mu_w,\nu_w )}\rho_w^{\mu_w -1}(1-\rho_w)^{\nu_w  -1}\mathrm{d}\rho_w}
{\frac{\mu_w}{\mu_w+\nu_w}\frac{\alpha_i}{\alpha_i+\alpha_j}+\frac{\nu_w}{\mu_w+\nu_w}\frac{\alpha_j}{\alpha_i+\alpha_j}}\\\nonumber
&=&\frac{\frac{\mu_w}{\mu_w+\nu_w}\frac{\mu_w+1}{\mu_w+\nu_w+1}\frac{\alpha_i}{\alpha_i+\alpha_j}}
{\frac{\mu_w}{\mu_w+\nu_w}\frac{\alpha_i}{\alpha_i+\alpha_j}+\frac{\nu_w}{\mu_w+\nu_w}\frac{\alpha_j}{\alpha_i+\alpha_j}}
+\frac{\frac{\mu_w}{\mu_w+\nu_w}\frac{\nu_w}{\mu_w+\nu_w+1}\frac{\alpha_j}{\alpha_i+\alpha_j}}
{\frac{\mu_w}{\mu_w+\nu_w}\frac{\alpha_i}{\alpha_i+\alpha_j}+\frac{\nu_w}{\mu_w+\nu_w}\frac{\alpha_j}{\alpha_i+\alpha_j}}\\\label{prop4eq8}
&=&\eta_{ijw}\frac{\mu_w+1}{\mu_w+\nu_w+1}
+(1-\eta_{ijw})\frac{\mu_w }{\mu_w+\nu_w+1}.
\end{eqnarray}
Following a similar procedure, we can show
\begin{eqnarray}
\nonumber
&&\mathbb{E}[\rho_w^2+(1-\rho_w)^2| o_i \succ_w o_j,\theta \sim Dir(\alpha),\rho_w\sim Beta(\mu_w,\nu_w)]\\\nonumber
&=&\frac{\frac{\mu_w}{\mu_w+\nu_w}\frac{\mu_w+1}{\mu_w+\nu_w+1}\frac{\mu_w+2}{\mu_w+\nu_w+2}\frac{\alpha_i}{\alpha_i+\alpha_j}}
{\frac{\mu_w}{\mu_w+\nu_w}\frac{\alpha_i}{\alpha_i+\alpha_j}+\frac{\nu_w}{\mu_w+\nu_w}\frac{\alpha_j}{\alpha_i+\alpha_j}}
+\frac{\frac{\mu_w}{\mu_w+\nu_w}\frac{\mu_w+1}{\mu_w+\nu_w+1}\frac{\nu_w}{\mu_w+\nu_w+2}\frac{\alpha_j}{\alpha_i+\alpha_j}}
{\frac{\mu_w}{\mu_w+\nu_w}\frac{\alpha_i}{\alpha_i+\alpha_j}+\frac{\nu_w}{\mu_w+\nu_w}\frac{\alpha_j}{\alpha_i+\alpha_j}}\\\nonumber
&&+\frac{\frac{\nu_w}{\mu_w+\nu_w}\frac{\nu_w+1}{\mu_w+\nu_w+1}\frac{\mu_w}{\mu_w+\nu_w+2}\frac{\alpha_i}{\alpha_i+\alpha_j}}
{\frac{\mu_w}{\mu_w+\nu_w}\frac{\alpha_i}{\alpha_i+\alpha_j}+\frac{\nu_w}{\mu_w+\nu_w}\frac{\alpha_j}{\alpha_i+\alpha_j}}
+\frac{\frac{\nu_w}{\mu_w+\nu_w}\frac{\nu_w+1}{\mu_w+\nu_w+1}\frac{\nu_w+2}{\mu_w+\nu_w+2}\frac{\alpha_j}{\alpha_i+\alpha_j}}
{\frac{\mu_w}{\mu_w+\nu_w}\frac{\alpha_i}{\alpha_i+\alpha_j}+\frac{\nu_w}{\mu_w+\nu_w}\frac{\alpha_j}{\alpha_i+\alpha_j}}\\\label{prop4eq9}
&=&\eta_{ijw}\frac{(\mu_w+1)(\mu_w+2)}{(\mu_w+\nu_w+1)(\mu_w+\nu_w+2)}
+(1-\eta_{ijw})\frac{(\mu_w )(\mu_w+1)}{(\mu_w+\nu_w+1)(\mu_w+\nu_w+2)}\\\nonumber
&&+\eta_{ijw}\frac{(\nu_w )(\nu_w+1)}{(\mu_w+\nu_w+1)(\mu_w+\nu_w+2)}
+(1-\eta_{ijw})\frac{(\nu_w+1)(\nu_w+2)}{(\mu_w+\nu_w+1)(\mu_w+\nu_w+2)}.
\end{eqnarray}
Putting \eqref{prop4eq8} and \eqref{prop4eq9} together, we have shown that the right-hand sides of \eqref{MM1wtemp} are exactly the right-hand sides of \eqref{prop4eq1} and \eqref{prop4eq0}, which completes the proof.
\end{proof}

\vskip 0.2in
\bibliography{arref}

\end{document}